\documentclass[]{bytedance_seed}

\usepackage{amsmath,amsfonts,bm}

\def\1{\bm{1}}

\def\rd{{\textnormal{d}}}

\DeclareMathAlphabet{\mathsfit}{\encodingdefault}{\sfdefault}{m}{sl}
\SetMathAlphabet{\mathsfit}{bold}{\encodingdefault}{\sfdefault}{bx}{n}

\DeclareMathOperator{\Tr}{Tr}

\newcommand{\Attn}{\textnormal{\textrm{Attn}}}
\newcommand{\FFN}{\textnormal{\textrm{FFN}}}
\newcommand{\RMSNorm}{\textnormal{\textrm{RMSNorm}}}
\newcommand{\Norm}{\textnormal{\textrm{Norm}}}
\newcommand{\trinorm}[1]{{\left\vert\kern-0.25ex\left\vert\kern-0.25ex\left\vert #1 
   \right\vert\kern-0.25ex\right\vert\kern-0.25ex\right\vert}}

\newcommand{\ba}{\boldsymbol{a}}

\newcommand{\bg}{\boldsymbol{g}}

\newcommand{\bp}{\boldsymbol{p}}
\newcommand{\bq}{\boldsymbol{q}}
\newcommand{\br}{\boldsymbol{r}}
\newcommand{\bs}{\boldsymbol{s}}

\newcommand{\bu}{\boldsymbol{u}}
\newcommand{\bv}{\boldsymbol{v}}
\newcommand{\bw}{\boldsymbol{w}}
\newcommand{\bx}{\boldsymbol{x}}

\newcommand{\bz}{\boldsymbol{z}}
\newcommand{\bA}{\boldsymbol{A}}
\newcommand{\bB}{\boldsymbol{B}}

\newcommand{\bD}{\boldsymbol{D}}

\newcommand{\bF}{\boldsymbol{F}}

\newcommand{\bH}{\boldsymbol{H}}
\newcommand{\bI}{\boldsymbol{I}}
\newcommand{\bJ}{\boldsymbol{J}}
\newcommand{\bK}{\boldsymbol{K}}

\newcommand{\bM}{\boldsymbol{M}}

\newcommand{\bP}{\boldsymbol{P}}
\newcommand{\bQ}{\boldsymbol{Q}}
\newcommand{\bR}{\boldsymbol{R}}

\newcommand{\bU}{\boldsymbol{U}}
\newcommand{\bV}{\boldsymbol{V}}
\newcommand{\bW}{\boldsymbol{W}}
\newcommand{\bX}{\boldsymbol{X}}

\newcommand{\balpha}{\bm{\alpha}}

\newcommand{\bgamma}{\bm{\gamma}}
\newcommand{\bepsilon}{\bm{\epsilon}}

\newcommand{\bmu}{\bm{\mu}}

\newcommand{\btheta}{\boldsymbol{\theta}}

\newcommand{\bxi}{\bm{\xi}}

\newcommand{\bSigma}{\boldsymbol{\Sigma}}

\newcommand{\cL}{\mathcal{L}}

\newcommand{\cN}{\mathcal{N}}
\newcommand{\cO}{\mathcal{O}}
\newcommand{\cP}{\mathcal{P}}
\newcommand{\cQ}{\mathcal{Q}}

\newcommand{\bbE}{\mathbb{E}}

\newcommand{\bbR}{\mathbb{R}}

\newcommand{\bzero}{\mathbf{0}}
\newcommand{\bone}{\mathbf{1}}
\newcommand{\pll}{\kern 0.56em/\kern -0.8em /\kern 0.56em}

\usepackage[toc,page,header]{appendix}

\usepackage[utf8]{inputenc} % allow utf-8 input
\usepackage[T1]{fontenc}    % use 8-bit T1 fonts
\usepackage{titletoc}       % wmz
\usepackage{booktabs}       % professional-quality tables
\usepackage{amsfonts}       % blackboard math symbols
\usepackage{nicefrac}       % compact symbols for 1/2, etc.
\usepackage{microtype}      % microtypography
\usepackage{xcolor}         % colors

\usepackage{graphicx}
\usepackage{etoc}
\usepackage{minitoc}

\usepackage{amsmath}
\usepackage{amssymb}
\usepackage{mathtools}
\usepackage{amsthm}
\usepackage{multirow}
\theoremstyle{plain}
\newtheorem{theorem}{Theorem}[section]

\theoremstyle{definition}

\newtheorem{setting}[theorem]{Setting}

\usepackage{wrapfig}

\title{ Negligible in Size, Significant in Effect: \\ On Scale Vectors in Large Language Models }

\author[2,*,\S,\dagger]{Mingze Wang}
\author[1,2,\S]{Shuchen Zhu}
\author[1]{Yuxin Fang}
\author[1,2]{\\Binghui Li}
\author[1]{Kai Shen}
\author[1, \dagger]{Shu Zhong}

\affiliation[1]{ByteDance Seed}
\affiliation[2]{Peking University}

\contribution[*]{Work done at ByteDance Seed}
\contribution[\S]{Equal contribution}
\contribution[\dagger]{Corresponding authors}
\abstract{
Normalization layers in modern large language models (LLMs) consist of a deterministic normalization operation and a learnable scale vector. While the normalization operation has been extensively studied, the scale vector remains poorly understood despite its ubiquitous use.
    In this work, we present a systematic study of scale vectors in LLMs from the perspectives of expressivity, optimization, and architectural structure.
    \textit{First}, we show empirically that although scale vectors constitute only a negligible fraction of model parameters, removing them substantially degrades LLM pre-training. Our theory further shows that, in Pre-Norm architectures, scale vectors do not increase expressivity; instead, they improve optimization through a self-amplifying preconditioning effect on subsequent linear mappings.
    \textit{Second}, we investigate the role of weight decay for scale vectors. By distinguishing Input-Norm and Output-Norm layers, we theoretically show that weight decay is beneficial for the former but harmful for the latter, due to their distinct roles in optimization and expressivity.
    \textit{Third}, motivated by this understanding, we propose three lightweight and complementary improvements to scale vectors: branch-specific heterogeneity, improved placement around linear mappings, and magnitude-direction reparameterization. Both theory and experiments show that each improvement yields consistent gains.
    \textit{Finally}, we combine these improvements into a unified scale-vector strategy and evaluate it through extensive LLM pre-training experiments on dense and mixture-of-experts models ranging from 0.12B to 2B parameters, across multiple optimizers and learning rate schedules, under industrial-scale token budgets. The unified strategy consistently achieves lower terminal loss than well-tuned baselines and exhibits more favorable scaling behavior, while adding negligible parameter and computational overhead. 
}

\date{\today}
\correspondence{\text{Mingze Wang at \email{mingzewang.math@gmail.com}, Shu Zhong at \email{zhongshu@bytedance.com}}}

\begin{document}
\maketitle

%不需要目录就注释掉 注意目录不要和第一页放在一块 要有\newpage
%\newpage
%\tableofcontents
%\newpage

\section{Introduction}

Normalization layers are a fundamental component of modern deep learning and are crucial for the stable and efficient training of large language models (LLMs)~\citep{vaswani2017attention}. In modern LLM architectures, normalization is typically implemented by RMSNorm~\citep{zhang2019root}, whose core structure consists of a deterministic normalization operation followed by a \textit{learnable scale vector} $\bgamma$. This simple normalize-then-scale structure has remained largely unchanged across modern LLMs.

The normalization operation itself has been extensively studied across deep learning.
Compared with BatchNorm (BN) in computer vision~\citep{ioffe2015batch} and LayerNorm (LN) in sequence modeling~\citep{ba2016layer}, RMSNorm~\citep{zhang2019root} modifies the normalization operation to improve simplicity and training stability in LLMs. 
However, its accompanying scale vector has largely retained its original form and received substantially less attention.

While the normalization operation is widely understood to stabilize training, the role of scale vectors remains unclear. 
Since scale vectors constitute only a negligible fraction of the total parameters, they are often treated as minor architectural details. However, do they truly have a negligible effect on LLM training? Answering this question is challenging, as it requires analyzing scale vectors jointly with other Transformer components from both expressivity and optimization perspectives.

In this work, we systematically investigate scale vectors in LLMs. We provide theoretical understanding of their mechanisms and propose scalable variants that improve LLM pre-training. \textbf{Our contributions} are summarized as follows:

\begin{itemize}
    \item \textbf{Necessity of scale vectors.} (Section~\ref{subsection: understanding necessity}) 
    We empirically show that scale vectors substantially affect LLM pre-training despite their negligible parameter count. Our theoretical analysis further reveals that: 
    (i) counterintuitively, scale vectors \textit{do not increase expressivity} in Pre-Norm architectures; (ii) their benefit arises from a \textit{self-amplifying preconditioning} effect on the subsequent linear mappings, which accelerates the training dynamics.
    
    \item \textbf{Weight decay of scale vectors.} (Section~\ref{subsection: understanding weight decay}) 
    We study an unresolved practical question: whether weight decay (wd) should be applied to scale vectors. We classify normalization layers into \textit{Input-Norm} and \textit{Output-Norm} according to whether their scale vectors are immediately followed by linear maps. 
    Our theoretical analysis shows that these two types have distinct mechanisms: 
    (i) for Input-Norm scale vectors, including those in Pre-Norm layers, wd is \textit{beneficial} as it induces balanced dynamics and controls Hessian sharpness, thereby accelerating and stabilizing training; 
    (ii) for Output-Norm scale vectors, wd is \textit{harmful} because these scale vectors determine expressivity, which wd can undesirably restrict. LLM experiments validate these theoretical insights.

    \item \textbf{Improving scale vectors.} (Section~\ref{section: improving}) 
    The above understanding motivates three complementary improvements to scale vectors, each with \textit{minimal} computational and parameter overhead.
    (i) \textbf{Heterogeneity.} In Transformer blocks such as self-attention, one shared Pre-Norm feeds multiple projections, including query, key, and value; however, these branches exhibit distinct training dynamics. We therefore introduce branch-specific scale vectors to provide \textit{tailored} self-amplifying preconditioners for different branches.
    (ii) \textbf{Placement.} In Pre-Norm architectures, scale vectors are consistently applied on the input side of the subsequent linear maps, inducing only row-wise self-amplifying preconditioning. We propose new placements that provably provide \textit{both row-wise and column-wise} preconditioning, further accelerate training.
    (iii) \textbf{Reparameterization.} We propose \textit{magnitude-direction reparameterizations} of scale vectors. Our theoretical analysis shows that these reparameterizations induce more anisotropic preconditioner and further accelerate training.
    Finally, we provide a \textit{unified preconditioning view} of these scale-vector designs.

    \item \textbf{Extensive experiments.} (Section~\ref{section: experiments}) 
    (i) We conduct systematic pre-training experiments on 0.12B Llama models to evaluate \textit{each strategy} and its variants, including weight decay, heterogeneity, placement, and reparameterization (Sections~\ref{subsection: understanding weight decay},~\ref{subsection: heterogeneity},~\ref{subsection: placement},~\ref{subsection: reparameterization}). Each strategy yields clear gains.
    (ii) We then evaluate a \textit{unified scale-vector strategy} that combines these improvements.
    We conduct extensive LLM pre-training experiments on both dense and mixture-of-experts (MoE) models, with sizes ranging from 0.12B to 2B parameters, 
    % trained on the high-quality FineWeb-Edu dataset 
    trained on high-quality pre-training corpus
    under industrial-scale token budgets. 
    We further evaluate different optimizers, including AdamW and Muon, and different learning rate schedules. 
    Across all settings, our unified strategy \textit{consistently achieves lower terminal loss} and \textit{exhibits more favorable scaling behavior} than well-tuned baselines, demonstrating its potential for improved scalability to larger models.
\end{itemize}

\section{Understanding Scale Vectors}
\label{section: understanding}

\textbf{Notations.} 
We denote the Hadamard product by $\odot$. For $\bx\in\bbR^d$, let $\Norm(\bx)=\sqrt{d}\bx/\|\bx\|$. For vectors, $\left<\cdot,\cdot\right>$ and $\|\cdot\|$ denote the Euclidean inner product and norm, respectively. For matrices, $\lambda_{\min}(\cdot)$, $\lambda_{\max}(\cdot)$, $\Tr(\cdot)$, and $\|\cdot\|_{\rm F}$
denote the smallest eigenvalue, largest eigenvalue, trace, and Frobenius norm, respectively.
For $\bv\in\bbR^d$, $\operatorname{diag}(\bv)\in\bbR^{d\times d}$ denotes the diagonal matrix with diagonal $\bv$.
We use $\cO(\cdot)$ to hide problem-independent constants. $\cN(\bmu,\bSigma)$ denotes the Gaussian distribution with mean $\bmu$ and covariance $\bSigma$. 
For a time-dependent quantity $h(t)$, write $\dot{h}:=\rd h/\rd t$.

We focus on RMSNorm, which is widely used in modern LLMs:
\begin{equation}
\RMSNorm(\bx;\bgamma)=\bgamma\odot\Norm(\bx),\quad\bx\in\bbR^d.
\end{equation}
Since BN and LN also contain scale vectors, similar mechanisms may apply beyond RMSNorm.

\subsection{Necessity of Scale Vectors}
\label{subsection: understanding necessity}

\textbf{Negligible in model size.}
Most Transformer parameters are matrix weights in the feedforward network (FFN) and self-attention (Attn) blocks, each of size $\cO(D^2)$, where $D$ is the hidden dimension.
In contrast, each scale vector has only $D$ parameters. For example, in the Llama-1B model considered in Section~\ref{section: experiments}, the model has 1,028,065,024 parameters in total, whereas all scale vectors together contain only 80,640 parameters, accounting for merely $\mathbf{7.84\times 10^{-5}}$ of the model size.

\textbf{No additional expressivity in Pre-Norm architectures.}
In Pre-Norm architectures such as Llama~\citep{touvron2023llama} (Figure~\ref{fig: llama, gemma}), each RMSNorm layer is immediately followed by a linear transformation. For example, the FFN block $\bX+\FFN(\RMSNorm(\bX;\bgamma))$ takes the form
\begin{equation}\label{eq: standard FFN}
\bX+ \bW_{\mathrm{down}}\big(\sigma(\bW_{\mathrm{gate}}(\bgamma\odot\Norm(\bX)))\odot \bW_{\mathrm{up}}(\bgamma\odot\Norm(\bX))\big).
\end{equation}
The same observation applies to Attn blocks and the final output layer. 
Since $\bgamma$ appears before linear maps, it can be absorbed into them: for any $\bgamma$ and linear map $\bW_2$, choosing $\bW_1=\bW_2\operatorname{diag}(\bgamma)$ yields
$\bW_1\Norm(\bx)\equiv \bW_2(\bgamma\odot\Norm(\bx))$.
Thus, from the perspective of expressivity, scale vectors in Pre-Norm architectures are redundant. This raises a natural question:
\begin{center}
\textit{Although negligible in size and expressively redundant, are scale vectors negligible in training?}
\end{center}

\begin{wrapfigure}{r}{0.61\textwidth}
    % \vspace{-.5cm}
    \includegraphics[width=0.33\textwidth]{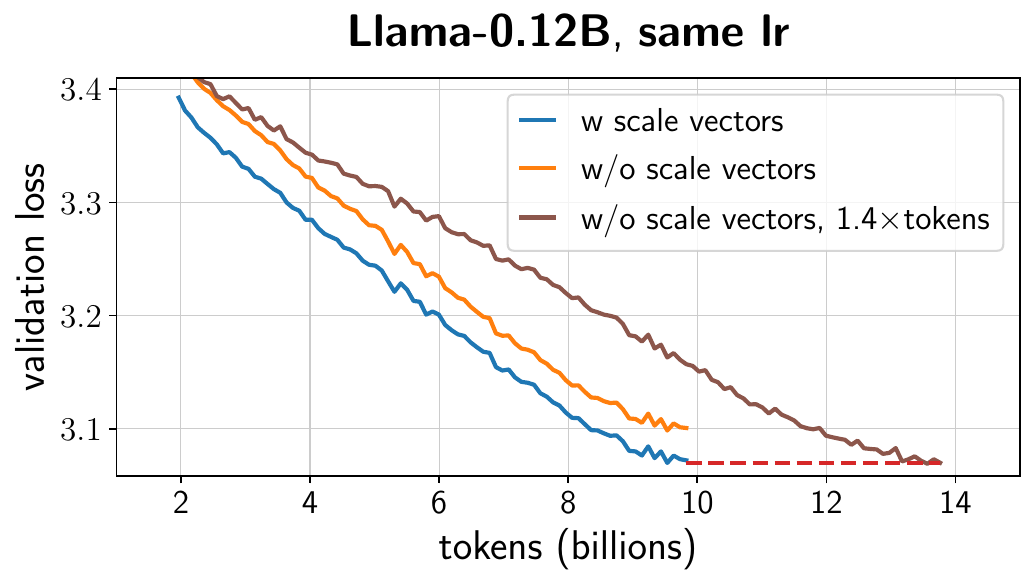}
    \hspace{-.15cm}
    \includegraphics[width=0.28\textwidth]{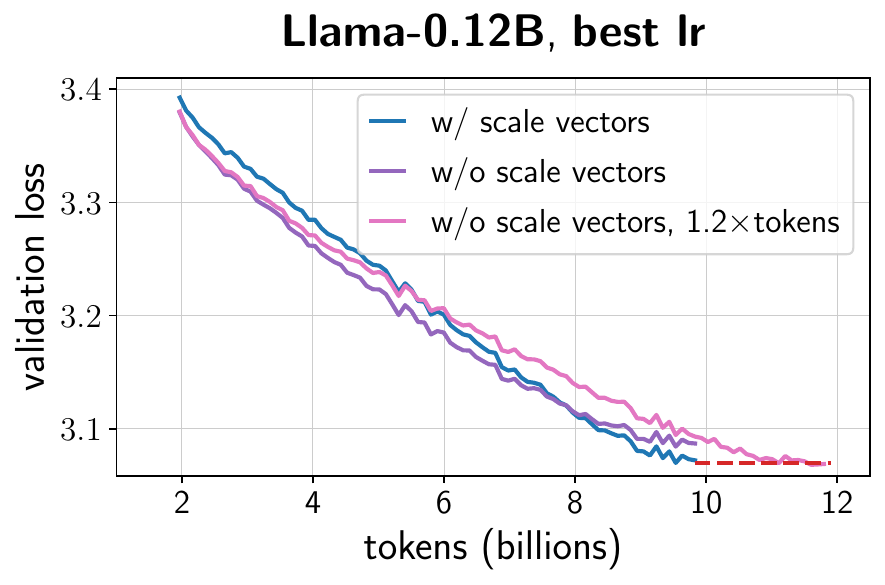}
    % \vspace{-.6cm}
    \caption{Ablation of scale vectors in 0.12B Llama pre-training. (Left): same peak lr; (Right): separately tuned peak lr.}
    \label{fig: w/o w/ scale vectors}
    % \vspace{-.3cm}
\end{wrapfigure}
\textbf{Indispensable in practice.} 
To answer this question, we conduct a simple ablation study by removing scale vectors. We train 0.12B Llama models;
% on FineWeb-Edu~\citep{lozhkovfineweb}; 
experimental details are provided in Appendix~\ref{appendix: subsection: understanding necessity}. 
As shown in Figure~\ref{fig: w/o w/ scale vectors}, under the same peak learning rate (lr), the model with scale vectors consistently outperforms the model without them throughout training, reducing the terminal loss by approximately $0.028$ and yielding a $1.4\times$ token-efficiency gain. Even after retuning the peak lr for the model without scale vectors, its terminal loss remains approximately $0.015$ higher.

\textbf{Theoretical study of training dynamics.}
These findings suggest that understanding scale vectors requires analyzing their effect on \textit{optimization dynamics}. 
We therefore study scale vectors in Pre-Norm architectures through the following simplified but representative setting~\citep{saxe2013exact}.

\begin{setting}\label{setting: w/ w/o scale vectors}
\text{Let $\bx\sim\cN(\bzero,\bI_d)$ and $f_\star(\bx)=\bW_\star\Norm(\bx)$ with some nonzero $\bW_\star\in\bbR^{c\times d}$.}
\end{setting}

We compare two student models under the squared loss.
The model with scale vectors is
$f(\bx;\bW_f,\bgamma)=\bW_f(\bgamma\odot\Norm(\bx))$ with
$\cL_f(\bW_f,\bgamma)=\frac12\bbE_{\bx}
\|f(\bx;\bW_f,\bgamma)-f_\star(\bx)\|^2$; the model without scale vectors is $g(\bx;\bW_g)=\bW_g\Norm(\bx)$ with
$\cL_g(\bW_g)=\frac12\bbE_{\bx}\|g(\bx;\bW_g)-f_\star(\bx)\|^2$.
Both objectives are optimized by gradient flow (GF): $(\dot{\bW}_f(t),\dot{\bgamma}(t))=-\nabla\cL_f(\bW_f(t),\bgamma(t))$ and $\dot{\bW}_g(t)=-\nabla\cL_g(\bW_g(t))$.
Following common practice, we initialize the scale vector as $\bgamma(0)=\bone$, while the remaining parameters have small magnitude. For clarity, we take $\bW_f(0)=\bW_g(0)=\bzero$.
Under this initialization, both models have the same
initial loss. For brevity, write $\cL_f(t):=\cL_f(\bW_f(t),\bgamma(t)),\cL_g(t):=\cL_g(\bW_g(t))$. 
We obtain the following result.

\begin{theorem}[Optimization advantage induced by scale vectors]
\label{thm: optimization advantage w/ scale vector} Consider Setting~\ref{setting: w/ w/o scale vectors}. Then, for every $t>0$, $\cL_f(t)<\cL_g(t)$.
\end{theorem}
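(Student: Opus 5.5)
The plan is to reduce both gradient flows to closed-form or scalar ODEs, using the fact that under Setting~\ref{setting: w/ w/o scale vectors} the normalized input $\bz=\Norm(\bx)$ is uniform on the sphere of radius $\sqrt d$. \textbf{Step 1} is the second-moment identity $\bbE[\bz\bz^\top]=\bI_d$. It follows from rotational invariance together with $\|\bz\|^2=d$. Consequently
\begin{equation*}
\cL_g(\bW_g)=\tfrac12\|\bW_g-\bW_\star\|_{\rm F}^2,\qquad
\cL_f(\bW_f,\bgamma)=\tfrac12\|\bW_f\operatorname{diag}(\bgamma)-\bW_\star\|_{\rm F}^2 .
\end{equation*}
For the model without scale vectors, gradient flow is linear and gives $\bW_g(t)=(1-e^{-t})\bW_\star$. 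Hence $\cL_g(t)=\tfrac12 e^{-2t}\|\bW_\star\|_{\rm F}^2$.

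\textbf{Step 2} exploits column decoupling in the model with scale vectors. Write $\bw_j$ and $\bw^\star_j$ for the $j$-th columns of $\bW_f$ and $\bW_\star$, and set $\br_j=\gamma_j\bw_j-\bw^\star_j$. The loss splits as a sum over columns, and gradient flow reads $\dot\bw_j=-\gamma_j\br_j$ and $\dot\gamma_j=-\langle\bw_j,\br_j\rangle$. These equations give two facts.
\begin{itemize}
\item The balancedness quantity $\gamma_j^2-\|\bw_j\|^2$ is conserved, so it equals $1$ for all $t$.
\item The product $\bm_j:=\gamma_j\bw_j$ obeys the preconditioned flow $\dot\bm_j=-(\gamma_j^2\bI+\bw_j\bw_j^\top)\br_j$.
\end{itemize}
Since $\bw_j(0)=\bzero$, uniqueness of ODE solutions shows that $\bw_j$ stays on the ray spanned by $\bw^\star_j$. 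If $\bw^\star_j=\bzero$, column $j$ stays at zero and contributes the same loss to both models. Otherwise, write $\bw_j=a_j\bw^\star_j/s_j$ with $s_j=\|\bw^\star_j\|$, and set $m_j=\gamma_j a_j$. From $\gamma_j^2-a_j^2=1$ and $\gamma_j^2a_j^2=m_j^2$ we get $(\gamma_j^2+a_j^2)^2=1+4m_j^2$. This yields the scalar ODE
\begin{equation*}
\dot e_j=-\sqrt{1+4m_j^2}\,e_j,\qquad e_j:=s_j-m_j,\quad e_j(0)=s_j.
\end{equation*}

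\textbf{Step 3} compares the two rates. The ODE shows that $e_j$ never changes sign, so $0<e_j(t)\le s_j$ and $m_j\in[0,s_j)$. Moreover $\dot m_j(0)=s_j>0$, so $m_j(\tau)>0$ for all $\tau>0$; once positive, $m_j$ keeps increasing toward $s_j$. Therefore $\sqrt{1+4m_j^2}>1$ on $(0,t]$. Integrating gives $e_j(t)=s_j\exp\!\big(-\int_0^t\sqrt{1+4m_j^2}\,\rd\tau\big)<s_je^{-t}$ for every $t>0$. Summing over the nonzero columns, of which there is at least one because $\bW_\star\neq\bzero, gives $\cL_f(t)=\tfrac12\sum_j e_j(t)^2<\tfrac12 e^{-2t}\|\bW_\star\|_{\rm F}^2=\cL_g(t)$.

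I expect the main obstacle to be Step 2: identifying the conserved quantity and using it to close the dynamics of $m_j$ into a single autonomous scalar ODE. The key is the identity $(\gamma_j^2+a_j^2)^2=1+4m_j^2$, which displays the self-amplifying preconditioner explicitly. The invariance of the ray through $\bw^\star_j$ also needs a short justification, either by uniqueness or by checking that the orthogonal component satisfies a linear homogeneous ODE with zero initial data. Once these are in place, the strict comparison in Step 3 is routine.
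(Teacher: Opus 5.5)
Your proof is correct. It shares its first two steps with the paper: the identity $\bbE[\bz\bz^\top]=\bI_d$, the closed form of $\cL_g$, column decoupling, and the conservation law $\gamma_j^2-\|\bw_j\|^2=1$. The two routes diverge at the comparison step.

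The paper never reduces a column to a line. It differentiates the column loss $\ell_j=\tfrac12\|\br_j\|^2$ to get $\dot\ell_j=-2\gamma_j^2\ell_j-\langle\bw_j,\br_j\rangle^2$. The bound $\frac{\rd}{\rd t}(e^{2t}\ell_j)=-e^{2t}(2(\gamma_j^2-1)\ell_j+\langle\bw_j,\br_j\rangle^2)\le 0$ then follows from $\gamma_j^2\ge 1$ alone. Strictness comes from a separate small-time expansion showing $\langle\bw_j,\br_j\rangle^2=s^2\|\bw^\star_j\|^4+\cO(s^3)>0$ on some interval $(0,\delta)$.

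You instead prove that the column stays on the line through $\bw^\star_j$. This closes each column into the autonomous scalar ODE $\dot e_j=-\sqrt{1+4m_j^2}\,e_j$, which you integrate exactly. Your strictness then follows from the global monotonicity of $m_j$ rather than a local Taylor argument.

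The paper's Gr\"onwall-type argument is more robust: the bound $\ell_j(t)\le e^{-2t}\ell_j(0)$ holds verbatim whenever the conserved quantity is at least $1$, with no alignment of $\bw_j$ with $\bw^\star_j$ needed. Your route costs one extra invariance lemma but buys the explicit formula $e_j(t)=\|\bw^\star_j\|\exp(-\int_0^t\sqrt{1+4m_j^2}\,\rd\tau)$. This formula quantifies the self-amplification: the rate increases from $1$ toward $\sqrt{1+4\|\bw^\star_j\|^2}$. It is also exactly the scalar speed factor $q_f(a)=\sqrt{1+4a^2}$ that the paper itself derives later, in the proof of Claim~(iii) of Theorem~\ref{thm: full: optimization advantage placement}.
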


% \vspace{-.1cm}

\textbf{Theoretical insight.} 
The proof of Theorem~\ref{thm: optimization advantage w/ scale vector} is deferred to Appendix~\ref{appendix: proof of understanding necessity}.
The key mechanism is that scale vectors transform the original gradient flow into a ``\textit{self-amplifying preconditioned}'' flow.
Specifically, define the effective parameter of $f$ as $\bA_f(t):=\bW_f(t)\operatorname{diag}(\bgamma(t))$, whose $j$-th column is $\ba_{f,j}(t):=\gamma_j(t)\bw_{f,j}(t)\in\mathbb R^c$.
Its dynamics satisfy $\dot{\ba}_{f,j}(t)=(\gamma_j(t)^2\bI_c+\bw_{f,j}(t)\bw_{f,j}(t)^\top)(\bw_{\star,j}-\ba_{f,j}(t))$.
By contrast, the model without scale vectors follows $\dot{\bw}_{g,j}(t)=\bw_{\star,j}-\bw_{g,j}(t)$.
Thus, the scale vector induces the state-dependent preconditioner $\bP_{f,j}(t):=\gamma_j(t)^2\bI_c+\bw_{f,j}(t)\bw_{f,j}(t)^\top$.
Under the practical initialization above, gradient flow preserves the invariant $\gamma_j(t)^2-\|\bw_{f,j}(t)\|^2=1$ and hence $\lambda_{\min}(\bP_{f,j}(t))\geq\gamma_j(t)^2\geq1$. 
Moreover, for every nonzero teacher column $\bw_{\star,j}\neq\bzero$, the inequality is strict for all $t>0$, yielding strictly faster loss descent.

% \vspace{-.1cm}

\begin{center}
\textbf{Takeaway.} \textit{Although scale vectors are negligible in parameter count and redundant in expressivity within Pre-Norm architectures, they accelerate training through self-amplifying preconditioning.}
\end{center}

\subsection{Weight Decay for Scale Vectors}
\label{subsection: understanding weight decay}

\textbf{An unresolved weight-decay choice.}
We next study the training dynamics of scale vectors, with a particular focus on weight decay (wd). WD is a standard component of modern LLM training and is typically applied to matrix parameters in Transformers. 
For scale vectors, however, the appropriate choice remains unclear. Existing LLM implementations, including OLMo, nanoGPT, and Qwen~\citep{olmo20242,Karpathy2022,yang2024qwen2technicalreport,yang2025qwen3}, differ in whether they apply wd to scale vectors. This motivates the question:
\begin{center}
% \vspace{-.2cm}
    \textit{Should weight decay be applied to scale vectors?}
% \vspace{-.2cm}
\end{center}

\begin{wrapfigure}{r}{0.44\textwidth}
    \centering
    \includegraphics[width=0.4\textwidth]{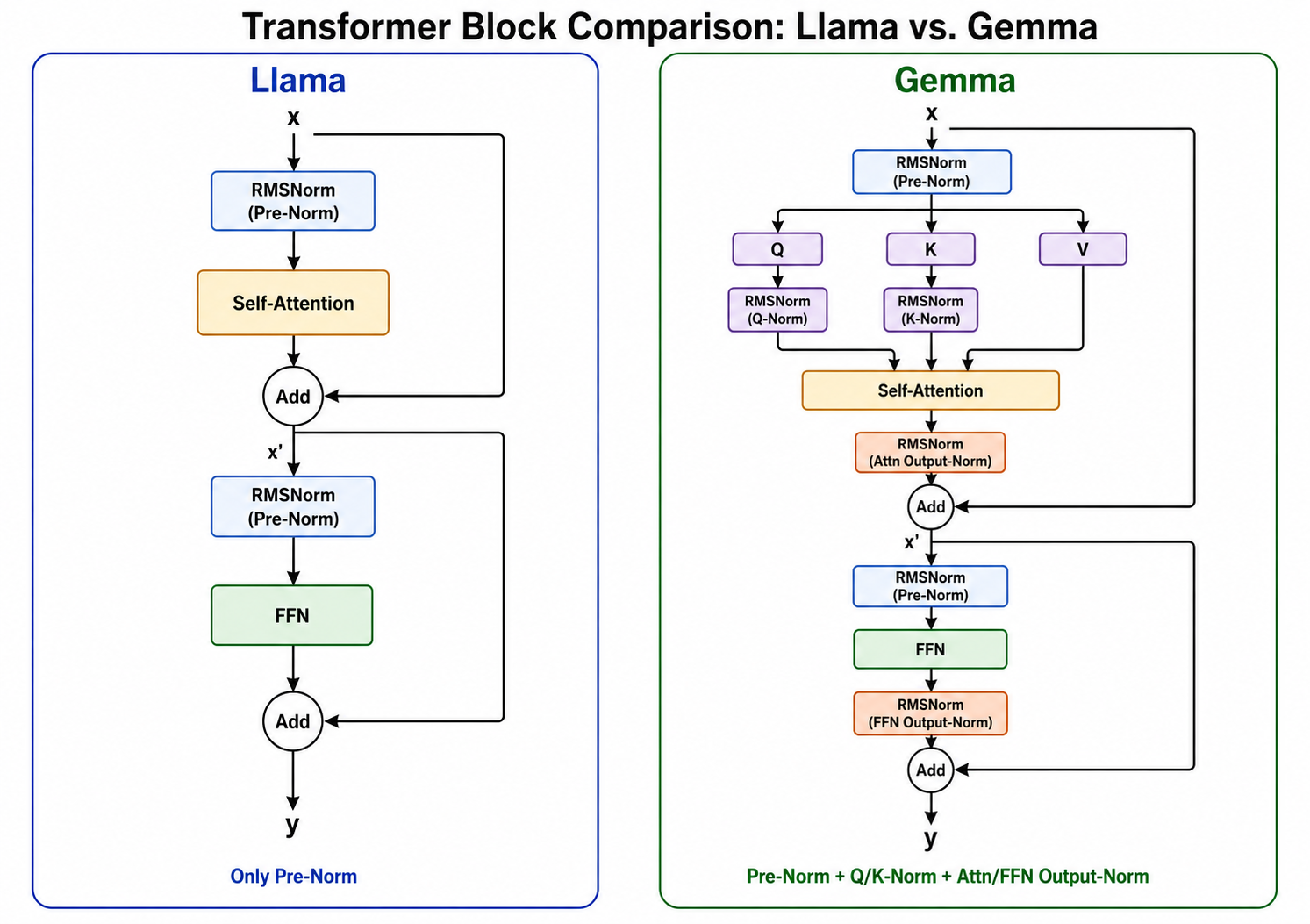}
    \caption{Comparison between Llama and Gemma architectures. 
    Llama contains only standard Pre-Norm layers, whereas Gemma additionally includes Q/K-norm and Attn/FFN Output-Norm layers. 
    % \textbf{Input-Norm v.s. Output-Norm.} According to our classification in Section~\ref{subsection: understanding weight decay}, Pre-Norm layers are Input-Norm layers, whereas Q/K-Norm and Attn/FFN Output-Norm layers are Output-Norm layers.
    }
    \label{fig: llama, gemma}
\end{wrapfigure}
\textbf{Distinct types of scale vectors.}
We primarily focus on scale vectors in Pre-Norm architectures. Since Q/K-Norm~\citep{dehghani2023scaling} and Output-Norm~\citep{team2024gemma} are increasingly used in recent LLMs, such as Qwen3 and Gemma3~\citep{gemmateam2025gemma3technicalreport}, we also include them in our study. 
Specifically, we consider Gemma3, which contains standard Pre-Norm, Q/K-Norm, and Attn/FFN Output-Norm layers.
Importantly, RMSNorm layers at different locations play distinct structural roles. We therefore classify the RMSNorm layers in Gemma3 into two types, as illustrated in Figure~\ref{fig: llama, gemma}:

\begin{itemize}
    % \vspace{-.2cm}
    \item \textbf{Output-Norm layers}: the RMSNorm layer not immediately followed by a linear transformation. This type includes Q/K-Norm and Attn/FFN Output-Norm. 
    % For example, the outer RMSNorm in $\bX+\RMSNorm(\FFN(\RMSNorm(\bX;\bgamma_{\rm in}));\bgamma_{\rm out})$ is and Output-Norm layer.
    % % \vspace{-.1cm}
    \item \textbf{Input-Norm layers}: the RMSNorm layer immediately followed by a linear transformation. This type includes standard Pre-norm layers. 
    % \vspace{-.2cm}
\end{itemize}

\textbf{Theoretical insight.} 
The scale vectors in these two types of RMSNorm layers play fundamentally different roles in expressivity and optimization, and should therefore be treated differently
under wd:

\begin{itemize}
    % \vspace{-.2cm}
    \item \textbf{Output-Norm scale vectors.} 
    These scale vectors directly parameterize the output of the corresponding submodule and therefore affect its expressivity. Applying wd to them shrinks the output, which can undesirably \textit{restrict expressivity} and weaken the submodule relative to the residual stream. Thus, weight decay should be avoided for Output-Norm scale vectors.

    \item \textbf{Input-Norm scale vectors.} 
    In contrast, these scale vectors add no expressivity (Section~\ref{subsection: understanding necessity}); their role is primarily optimization-related. 
    Applying weight decay to them keeps the \textit{parameterization balanced} and \textit{controls Hessian sharpness}, thereby leading to potentially faster and more stable training. Theorem~\ref{thm: wd, pre-norm} provides a detailed analysis.
    
    % \vspace{-.2cm}
\end{itemize}

We now theoretically study whether wd should be applied to Input-Norm scale vectors. Unlike Section~\ref{subsection: understanding necessity}, which analyzes GF, we consider the more realistic setting of stochastic gradient descent (SGD) through its continuous-time SDE approximation. For clarity, we present the scalar-output model $f(\bx;\bw,\bgamma)=\langle\bw,\bgamma\odot\Norm(\bx)\rangle$ with target $f^\star(\bx)=\langle\bw^\star,\Norm(\bx)\rangle$ under Setting~\ref{setting: w/ w/o scale vectors}.

\begin{theorem}[Benefits of wd for Input-Norm scale vectors; informal]\label{thm: wd, pre-norm}
Consider training the above model by continuous-time SGD. Suppose wd \(\lambda>0\) is applied to \(\bw\), and compare two choices for the Input-Norm scale vector \(\bgamma\): no wd (\(\mu=0\)) versus wd (\(\mu>0\)). 
\begin{itemize}
    % \vspace{-.15cm}
    \item If \(\mu>0\), then \(\mathbb E\|\bgamma_t\|^2<\infty\) remains uniformly bounded over time. Consequently, key Hessian sharpness metrics, including \(\lambda_{\max}(\nabla^2\cL)\), \(\Tr(\nabla^2\cL)\), and \(\|\nabla^2\cL\|_{\mathrm F}\), also \textbf{remain bounded}.
    % \vspace{-.15cm}
    \item If \(\mu=0\), then \(\mathbb E\|\bgamma_t\|^2\) becomes unbounded along the trajectory. As a result, the same Hessian sharpness metrics \textbf{diverge} along a sequence of times.
    % \vspace{-.2cm}
\end{itemize}
\end{theorem}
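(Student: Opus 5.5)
We would first pin down the continuous-time SGD model. Write $\bz=\Norm(\bx)$, so $\bbE[\bz\bz^\top]=\bI_d$ because $\bz$ is uniform on the sphere of radius $\sqrt d$. The population loss is then $\cL(\bw,\bgamma)=\frac12\|\bw\odot\bgamma-\bw^\star\|^2$.

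We model SGD with lr $\eta$ as the SDE
\[
\rd\bw=-(\nabla_{\bw}\cL+\lambda\bw)\,\rd t+\sqrt{\eta}\,\bSigma_{\bw}^{1/2}\rd\bm B_t,\qquad
\rd\bgamma=-(\nabla_{\bgamma}\cL+\mu\bgamma)\,\rd t+\sqrt{\eta}\,\bSigma_{\bgamma}^{1/2}\rd\bm B_t .
\]
Here the noise covariance is the covariance of the per-sample gradient $r(\bz)\,(\bgamma\odot\bz,\ \bw\odot\bz)$, with residual $r(\bz)=\langle\bw\odot\bgamma-\bw^\star,\bz\rangle$. The key structural quantity is the per-coordinate imbalance $q_j:=\gamma_j^2-w_j^2$. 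By Itô's formula,
\[
\rd q_j=\big(-2\mu\gamma_j^2+2\lambda w_j^2\big)\rd t+\eta\big(\Sigma_{\gamma_j\gamma_j}-\Sigma_{w_jw_j}\big)\rd t+\text{martingale},
\]
because the loss-gradient drifts cancel exactly, just as in the GF invariant used for Theorem~\ref{thm: optimization advantage w/ scale vector}. We also have $\Sigma_{\gamma_j\gamma_j}=\bbE[r^2 z_j^2]w_j^2$ and $\Sigma_{w_jw_j}=\bbE[r^2z_j^2]\gamma_j^2$. So the noise contribution is $-\eta\,\bbE[r^2z_j^2]\,q_j$, which is mean-reverting, but its strength vanishes near the interpolating set $r\equiv0$. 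This is only possible in the realizable case, which is the informal regime we have in mind.

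\textbf{Case $\mu>0$.} We take expectations of $\rd(\|\bw\|^2+\|\bgamma\|^2)$. The drift equals $-2\langle\bw\odot\bgamma,\bw\odot\bgamma-\bw^\star\rangle-2\lambda\|\bw\|^2-2\mu\|\bgamma\|^2+\eta\Tr(\bSigma)$. The first term is at most $\frac12\|\bw^\star\|^2$. The trace term is $\eta\bbE[r^2(\|\bgamma\odot\bz\|^2+\|\bw\odot\bz\|^2)]$, which is quartic in the parameters. This is where we expect trouble: we would control it by the residual energy. One option is to assume $\eta$ small together with a stopping-time/localization argument. Another is to pass to the quantity $V=\|\bw\|^2+\|\bgamma\|^2+c\,\cL$, using that $\cL$ itself decreases under the drift at rate comparable to $(\|\bw\|^2+\|\bgamma\|^2)\cL$, which dominates the quartic noise when $\eta$ is small. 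That yields $\frac{\rd}{\rd t}\bbE V\le C-2\min(\lambda,\mu)\bbE V$, and Grönwall gives $\sup_t\bbE\|\bgamma_t\|^2<\infty$.

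The Hessian of $\cL$ is block-structured. Its entries are $\gamma_j^2$, $w_j^2$, and $2w_j\gamma_j-w_j^\star$, coupled per coordinate through $2\times2$ blocks. Hence $\lambda_{\max}$, $\Tr=\|\bw\|^2+\|\bgamma\|^2$ and $\|\nabla^2\cL\|_{\rm F}$ are all bounded by polynomials in $\|\bw\|^2+\|\bgamma\|^2$. Bounded expectations then follow, with fourth moments obtained by the same Lyapunov argument applied to $V^2$.

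\textbf{Case $\mu=0$.} Now $q_j$ has drift $2\lambda w_j^2-\eta\bbE[r^2z_j^2]q_j\ge0$ whenever $q_j\le0$, and the forcing $2\lambda w_j^2$ is persistent. At a realizable fit $w_j\gamma_j\approx w_j^\star\neq0$ we have $w_j^2\ge (w^\star_j)^2/\gamma_j^2$ bounded below, while the noise damping vanishes along fits. We would argue by contradiction. If $\bbE\|\bgamma_t\|^2$ stayed bounded, then $\bbE w_j^2$ would be bounded below, since otherwise the loss stays large and the $\bgamma$-drift pushes toward the fit. Then $\bbE q_j$ grows at least linearly, up to the $\eta$-noise term, which is bounded by $\eta\,\bbE[\cL\cdot\|\bgamma\|^2]$ and is integrable in time because $\lambda$-decay forces $\cL$ not to vanish fast. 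This linear growth gives $\bbE\gamma_j^2\ge\bbE q_j\to\infty$, so we can extract times $t_k$ along which the trace and hence all the sharpness metrics blow up. The divergence of $\lambda_{\max}$ uses $\lambda_{\max}\ge\max_j\gamma_j^2$.

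We expect the \emph{main obstacle} to be the $\mu=0$ direction. There we must show that the $\lambda$-driven imbalance growth actually beats the noise-induced mean reversion $-\eta\bbE[r^2z_j^2]q_j$. The first thing we would try is to restrict to the small-$\eta$ regime and prove that the expected residual $\bbE\cL_t$ stays bounded below by a constant times $\lambda^2$. Weight decay on $\bw$ prevents exact fitting. Combined with a bounded-$\bgamma$ hypothesis, this yields a lower bound on $\bbE w_j^2$ and closes the contradiction.
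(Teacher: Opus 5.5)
\textbf{Both cases of your proposal remain open, and the cause is your choice of noise model.} You model the diffusion by the per-sample-gradient covariance. This loses the exact cancellation of It\^o corrections in $\rd(\gamma_j^2-w_j^2)$, and it makes the diffusion quartic in the parameters, so neither moment estimate closes.

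For $\mu>0$, the inequality $\frac{\rd}{\rd t}\bbE V\le C-2\min(\lambda,\mu)\,\bbE V$ does not follow. The noise trace $\eta\sum_j(\gamma_j^2+w_j^2)\bbE_{\bz}[r^2z_j^2]$ contains cross terms of size $\eta(\gamma_j^2+w_j^2)g_k^2$ with $k\neq j$. The dissipation $\sum_j(\gamma_j^2+w_j^2)g_j^2$ of $\cL$ does not control these: take $\gamma_1$ large with $g_1=0$ and $g_2\neq 0$. The uncompensated positive part is then of order $\eta V^2$. You also do not account for the It\^o correction of $\cL$ or its weight-decay drift $-(\lambda+\mu)\langle\bg,\ba\rangle$.

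For $\mu=0$, the lower bound on $\bbE w_j^2$ is only asserted, and in your model it cannot come from a soft argument. At $(\bw,\bgamma)=(\bzero,\bzero)$ the gradient, the weight-decay drift and every per-sample gradient all vanish, so $\delta_{(\bzero,\bzero)}$ is an invariant law. You would need a quantitative argument that uses $\bgamma(0)=\bone$.

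Your claim that the noise term is integrable in time ``because $\cL$ does not vanish fast'' is backwards. A loss bounded below, which your last paragraph proposes to prove, makes $\eta\,\bbE[\cL\|\bgamma\|^2]$ non-integrable. Shrinking $\eta$ does not rescue the contradiction either. Your comparison needs $\eta$ small relative to constants that depend on the hypothetical bound $M$, which is fixed after $\eta$. It also needs mixed fourth moments such as $\bbE[\cL\gamma_j^2]$, which a second-moment hypothesis does not control.

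\textbf{The missing idea is to use the model of the formal statement, Theorem~\ref{thm: complete: weight decay}.} There the noise is additive and isotropic, $\sqrt q\,\rd\bB_t$, with the same variance on every coordinate of $\bw$ and $\bgamma$. Then the It\^o corrections cancel and $s_i=\gamma_i^2-w_i^2$ has drift exactly $2(\lambda w_i^2-\mu\gamma_i^2)$. The bound $-2\ba^\top(\ba-\ba_\star)\le\frac12\|\ba_\star\|^2$ gives
$\frac{\rd}{\rd t}\bbE\|\bw_t\|^2\le-2\lambda\,\bbE\|\bw_t\|^2+\frac12\|\ba_\star\|^2+dq$,
and the same holds for $\bgamma$ when $\mu>0$, so Gr\"onwall settles that case.

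For $\mu=0$ the paper's contradiction is qualitative:
\begin{itemize}
\item bounded second moments make the occupation measures tight, and Krylov--Bogoliubov gives an invariant law $\pi$;
\item $0=\int\mathcal A_0(\gamma_i^2-w_i^2)\,\rd\pi=2\lambda\int w_i^2\,\rd\pi$ forces $\bw=\bzero$ $\pi$-a.s.;
\item hence $0=\int\mathcal A_0\|\bw\|^2\,\rd\pi=dq>0$.
\end{itemize}
The nondegenerate additive noise is exactly what makes $\{\bw=\bzero\}$ non-invariant, and this is what your model lacks.

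Your reduction of the sharpness claims to $\bbE\|\bgamma_t\|^2$, for instance via $\lambda_{\max}\ge\max_j\gamma_j^2$, is fine. No fourth moments are needed, because $\lambda_{\max}$, $\Tr$ and $\|\nabla^2\cL\|_{\rm F}$ are each bounded by a constant multiple of $\|\bw\|^2+\|\bgamma\|^2$ plus a constant.
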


\begin{wrapfigure}{r}{0.485\textwidth}
     % \vspace{-.6cm}
    % \includegraphics[width=0.25\textwidth]{figures/understanding/qwen_0.5B_10BT.pdf}
    \includegraphics[width=0.245\textwidth]{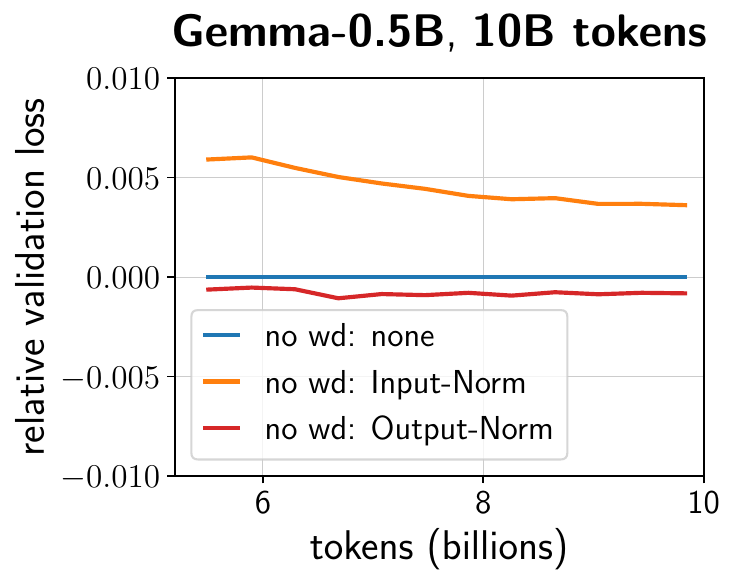}
    \hspace{-.25cm}
    \includegraphics[width=0.245\textwidth]{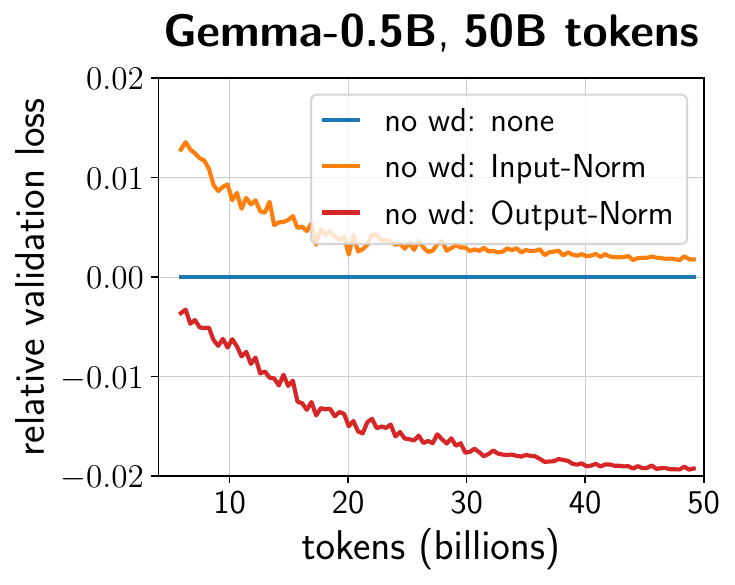}
    % \vspace{-.5cm}
    \caption{WD is beneficial for Input-Norm scale vectors, but harmful for Output-Norm scale vectors.}
    % \vspace{-.4cm}
    \label{fig: understanding: weight decay}
\end{wrapfigure}
\textbf{Key mechanism.} The formal statement and proof are deferred to Appendix~\ref{appendix:wd-pf}. Theorem~\ref{thm: wd, pre-norm} shows that, for Input-Norm scale vectors, wd keeps the parameterization pair \((\bw,\bgamma)\) balanced, prevents excessive growth of \(\bgamma\), and thereby controls Hessian sharpness during training.

\textbf{Why sharpness matters.}
In stochastic optimization, the loss descent of SGD \textit{depends directly} on Hessian sharpness metrics, such as \(\lambda_{\max}(\nabla^2\cL)\), \(\Tr(\nabla^2\cL)\), and \(\|\nabla^2\cL\|_{\mathrm F}\). 
Therefore, by preventing these quantities from diverging, wd on Input-Norm scale vectors leads to \textit{faster training} and \textit{potentially more stable} optimization, including the use of larger lr.

We outline the main mechanism below and refer to Appendix~\ref{appendix:wd-hessian} for details.
Let $\cL(\cdot)$ be a twice-differentiable loss function, and the SGD update can be written as
$\btheta_{t+1}=\btheta_t-\eta\left(\nabla\cL(\btheta_t)+\bxi_t\right)$,
where the stochastic noise satisfies $\bbE\big[\bxi_t\big]=\bzero$.
A second-order Taylor expansion yields the expected loss descent:
\begin{align*}
\bbE[\cL(\btheta_{t+1})] = \bbE\big[\underbrace{\cL(\btheta_t)-\eta\|\nabla \cL(\btheta_t)\|^2+\frac{\eta^2}{2}\nabla \cL(\btheta_t)^\top \nabla^2\cL(\btheta_t)\nabla\cL(\btheta_t)}_{\text{GD term}}\big]  + \frac{\eta^2}{2} \underbrace{\bbE\big[\bxi_t^\top \nabla^2\cL(\btheta_t)\bxi_t\big]}_{\text{noise term}} + o(\eta^2),
\end{align*}
The GD term is contributed by deterministic GD, which is governed by the Hessian spectral norm $\lambda_{\max}(\nabla^2\cL(\btheta_t))$, since $\nabla\cL(\btheta_t)^\top\nabla^2\cL(\btheta_t)\nabla\cL(\btheta_t)\leq\lambda_{\max}(\nabla^2\cL(\btheta_t))\|\nabla\cL(\btheta_t)\|^2$; 
The noise term is induced by gradient noise $\bxi_t$,
different assumptions on the covariance $\bbE[\bxi_t\bxi_t^\top]$ lead to different  contribution. However, under a broad class of SGD noise modeling -- including bounded variance, isotropic noise, Hessian-aligned noise, and others~\citep{hazan2016introduction,bottou2018optimization,feng2021inverse,welling2011bayesian,haochen2021shape,mori2022power,wang2023noise} -- this term is consistently controlled by Hessian sharpness metrics such as 
$\lambda_{\max}(\nabla^2\cL(\btheta_t))$, $\Tr(\nabla^2\cL(\btheta_t))$, and
$\|\nabla^2\cL(\btheta_t)\|_{\rm F}^2$.

\textbf{Experimental justification.}
To validate our theoretical insights, we independently control wd for Input-Norm
and Output-Norm scale vectors. 
We train 0.5B Gemma models
% on FineWeb-Edu 
for 10B/50B tokens using Muon optimizer~\citep{jordan2024muon}; experimental details are provided in Appendix~\ref{appendix: subsection: understanding weight decay}. 
As shown in Figure~\ref{fig: understanding: weight decay}, applying wd to Input-Norm scale vectors consistently improves performance; whereas removing wd from Output-Norm scale vectors performs better than applying it.
These results support the following principle.

% \vspace{-.1cm}
\begin{center}
\textbf{\! Individual weight decay} 
\!(\textbf{IWD}). \textit{Apply wd to Input-Norm scale vectors, but not to Output-Norm ones.}
\end{center}
% \vspace{-.2cm}

\section{Improving Scale Vectors}
\label{section: improving}

In this section, we propose three methods to further exploit scale vectors. For each method, we first present the motivation and theoretical insight. 
In Section~\ref{subsection: unified view}, we provide a unified view of these designs, and in Section~\ref{section: experiments}, we empirically verify that each design consistently improves LLM pre-training.

\subsection{Heterogeneity of Scale Vectors}
\label{subsection: heterogeneity}

\textbf{Shared scale vectors in Attn and FFN.}
We consider Pre-Norm architecture. In the Attn block, $\bX+\Attn(\RMSNorm(\bX;\bgamma))$, a single Pre-Norm layer $\RMSNorm(\bX;\bgamma)$ feeds all three projections:
\begin{equation}\label{eq: standard QKV}
    \bQ=\bW_Q(\bgamma\odot\Norm(\bX)),\quad \bK=\bW_K(\bgamma\odot\Norm(\bX)),\quad \bV=\bW_V(\bgamma\odot\Norm(\bX))
\end{equation}
Thus, the scale vector $\bgamma$ is shared across the query, key, and value branches. The FFN block follows the same pattern: a single Pre-Norm output feeds both the gate and up projections.

\textbf{Heterogeneous scale vectors} (\textbf{HG}).
Figure~\ref{fig: HG: qkv dynamics} shows that the query, key, and value matrices exhibit different growth and decay rates during training. 
As Section~\ref{subsection: understanding necessity} shows, in Pre-Norm scale vectors primarily accelerate optimization through self-amplifying preconditioning. 
From this perspective, different
branches should be equipped with separate scale vectors, allowing the induced preconditioners to adapt to branch-specific optimization dynamics.
This motivates replacing the shared scale vector in~\eqref{eq: standard QKV} with branch-specific scale vectors $\bgamma_Q$, $\bgamma_K$, and $\bgamma_V$:
\begin{equation}
\bQ=\bW_Q(\bgamma_{Q}\odot\Norm(\bX)),\ \bK=\bW_K(\bgamma_{K}\odot\Norm(\bX)),\ \bV=\bW_V(\bgamma_{V}\odot\Norm(\bX)).
\end{equation}

% % 进一步，可以每个 head 都不一样
% Head-wise
% \[
% \bQ^{(h)}=\bW_{Q^{(h)}}(\bgamma_Q^{(h)}\odot\Norm(\bX))\in\bbR^{D/H},\quad h\in[H]
% \]
% % limit 每个维度都用不同的 scale vector
% Dimension-wise
% \[
% \bQ^{i}={\bw_{Q}^{i}}^\top(\bgamma_Q^{i}\odot\Norm(\bX))\in\bbR,\quad h\in[H]
% \]
% Equal: Omit $\Norm(\bX)$,
% \[
% \bW=(\bw_1,\cdots,\bw_d)^\top\to
% (\bw_1\odot\bgamma,\cdots,\bw_d\odot\bgamma)^\top
% \to (\bw_1\odot\bgamma_1,\cdots,\bw_d\odot\bgamma_d)=\bW\odot\bGamma
% \]

\begin{wrapfigure}{r}{0.27\textwidth}
     % \vspace{-.65cm}
     \includegraphics[width=0.245\textwidth]{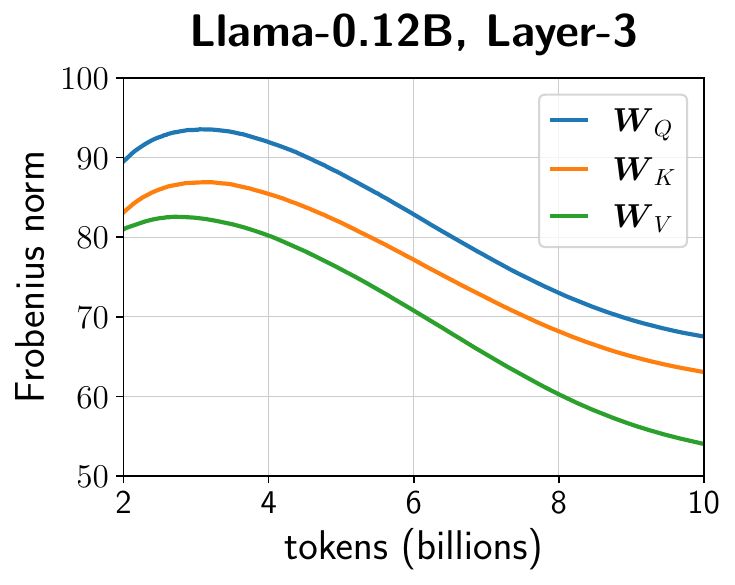}
    % \vspace{-.3cm}
    \caption{ Dynamics of query, key, and value projections.}
    % \vspace{-.2cm}
    \label{fig: HG: qkv dynamics}
\end{wrapfigure}
An analogous modification applies to the FFN block. Specifically, we introduce separate scale vectors $\bgamma_{\mathrm{gate}}$ and $\bgamma_{\mathrm{up}}$ into the standard FFN in~\eqref{eq: standard FFN}, yielding:
\begin{equation*}
    \bX+\bW_{\mathrm{down}}\big(\sigma(\bW_{\mathrm{gate}}(\bgamma_{\mathrm{gate}}\odot\Norm(\bX)))\odot \bW_{\mathrm{up}}(\bgamma_{\mathrm{up}}\odot\Norm(\bX))\big).
\end{equation*}
This heterogeneity does not increase expressivity; its role is purely optimization-related. Moreover, it introduces only $\cO(d)$ additional parameters, which is negligible relative to the overall model size.

\subsection{Placement of Scale Vectors}
\label{subsection: placement}

\textbf{Standard input-side placement.}
We consider Pre-Norm architectures. As shown in~\eqref{eq: standard FFN} and~\eqref{eq: standard QKV}, scale vectors in Attn or FFN blocks are placed before linear maps. 
The same holds for the RMSNorm layer before the output projection. Thus, Pre-Norm architectures consistently apply scale vectors on the input side of a linear map. 
This convention, however, is not obviously optimal for optimization.

\textbf{Alternative placement for faster optimization.}
Section~\ref{subsection: understanding necessity} shows that standard input-side scale vectors already accelerate optimization relative to removing them. A natural question is whether alternative placements can yield an even stronger optimization effects.
The key observation is that the placement of a scale vector determines which coordinates are directly modulated. 
Input-side placement rescales coordinates before they are mixed by $\bW$, but the resulting output coordinates may still exhibit heterogeneous scales or anisotropic optimization
geometry. 
Hence, the standard placement may not fully exploit the optimization benefits of scale vectors.
Motivated by this observation, we consider three increasingly flexible designs.

\begin{itemize}
\item \textbf{AP:} \textbf{after-placement.}
The simplest modification moves the scale vector from the input side to the output side of the linear map:
\begin{equation}
    \textbf{AP:}\qquad\bW(\bgamma^{\text{b}}\odot\Norm(\bx))\mapsto\bgamma^{\text{a}}\odot(\bW\Norm(\bx)).
\end{equation}
Compared with the standard design, AP directly modulates output coordinates after linear mixing, and may therefore better match output-side anisotropy.

\item \textbf{DP:} \textbf{dual-placement.}
AP moves the scale vector to the output side but removes input-side modulation. A more flexible alternative is to place scale vectors on both sides of the linear map:
\begin{equation}
\textbf{DP:}\qquad
\bW(\bgamma^{\text{b}}\odot\Norm(\bx))\mapsto\bgamma^\text{a}\odot(\bW(\bgamma^\text{b}\odot\Norm(\bx))).
\end{equation}
Since scale vectors are negligible in size, this two-sided design incurs only a tiny parameter increase while controlling both input-side and output-side coordinates.

\item \textbf{DNP:} \textbf{dual normalized placement.}
Although DP enables more flexible modulation around the linear map, the extra multiplicative interactions may reduce training stability. 
We therefore consider a normalized variant that inserts normalization between $\bW$ and the output-side scale vector:
\begin{equation}
\textbf{DNP:}\qquad\bW(\bgamma^{\text{b}}\odot\Norm(\bx))\mapsto\bgamma^\text{a}\odot\Norm(\bW(\bgamma^\text{b}\odot\Norm(\bx))).
\end{equation}
Compared with DP, DNP preserves two-sided modulation while explicitly normalizing the intermediate representation, which may stabilize optimization.
For attention projections, normalization is applied separately to each head;
for other projections, normalization is applied directly to the hidden states.
\end{itemize}

When applied to the query and key projections, \textbf{DNP} is equivalent to adding Q/K-Norm. 
For other matrices, it introduces additional normalization layers. Moreover, when using heterogeneous input-side scale vectors from Section~\ref{subsection: heterogeneity}, it is natural to also use heterogeneous output-side scale vectors.

The following theorem demonstrates that \textbf{DP} can further accelerate the optimization dynamics of scale vectors. Its proof is deferred to Appendix~\ref{appendix: proof of improving: placement}.

\begin{theorem}[Instantaneous acceleration of \textbf{DP}]
\label{thm: optimization advantage placement}
Consider the setting of Theorem~\ref{thm: optimization advantage w/ scale vector}. Let $\phi(\bx;\bW_\phi,\bgamma^{\rm a},\bgamma^{\rm b})=\bgamma^{\rm a}\odot(\bW_\phi(\bgamma^{\rm b}\odot\Norm(\bx)))$ be the  DP student model, and let $\cL_\phi$ be its squared loss. 
We compare the standard model $f$ and the DP model $\phi$ under gradient flow.
Then,  at the same effective state, \textbf{DP} is instantaneously \textbf{at least as fast as} the standard model: for any $(\bW_f,\bgamma)$ and$(\bW_\phi,\bgamma^{\rm a},\bgamma^{\rm b})$ satisfying $\bgamma=\bgamma^{\rm b}$ and $\bW_f=\operatorname{diag}(\bgamma^{\rm a})\bW_\phi$, we have $ \frac{\rd}{\rd t}\cL_\phi\leq\frac{\rd}{\rd t}\cL_f\leq 0$.
\end{theorem}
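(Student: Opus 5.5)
The plan is to reduce both models to the same effective parameter and compare the squared norms of their parameter gradients. Under gradient flow, $\frac{\rd}{\rd t}\cL=-\|\nabla\cL\|^2$, so the claim $\frac{\rd}{\rd t}\cL_\phi\le\frac{\rd}{\rd t}\cL_f\le 0$ amounts to showing $\|\nabla\cL_\phi\|^2\ge\|\nabla\cL_f\|^2$ at matched states. The second inequality is immediate.

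\textbf{Step 1 (common loss landscape).} Since $\bx\sim\cN(\bzero,\bI_d)$, $\Norm(\bx)$ is uniform on the sphere of radius $\sqrt d$, so $\bbE[\Norm(\bx)\Norm(\bx)^\top]=\bI_d$. Hence for any student whose output is $\bA\Norm(\bx)$ the loss is $\frac12\|\bA-\bW_\star\|_{\rm F}^2$.

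For $f$ the effective matrix is $\bA=\bW_f\operatorname{diag}(\bgamma)$. For $\phi$ it is $\bA=\operatorname{diag}(\bgamma^{\rm a})\bW_\phi\operatorname{diag}(\bgamma^{\rm b})$. Under the matching $\bgamma=\bgamma^{\rm b}$ and $\bW_f=\operatorname{diag}(\bgamma^{\rm a})\bW_\phi$, these coincide. Write $\bg:=\bA-\bW_\star$ for the common residual, with columns $\bg_j$ and rows $\bg^i$.

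\textbf{Step 2 (chain rule).} For $f$ the gradient components are:
\begin{itemize}
\item $\nabla_{\bW_f}=\bg\operatorname{diag}(\bgamma)$;
\item $\partial_{\gamma_j}=\langle\bw_{f,j},\bg_j\rangle$.
\end{itemize}
For $\phi$ the gradient components are:
\begin{itemize}
\item $\nabla_{\bW_\phi}=\operatorname{diag}(\bgamma^{\rm a})\bg\operatorname{diag}(\bgamma^{\rm b})$;
\item $\partial_{\gamma^{\rm b}_j}=\langle\operatorname{diag}(\bgamma^{\rm a})\bw_{\phi,j},\bg_j\rangle=\langle\bw_{f,j},\bg_j\rangle$, which is identical to the $f$ term;
\item $\partial_{\gamma^{\rm a}_i}=\langle \text{$i$-th row of }\bW_\phi\operatorname{diag}(\bgamma^{\rm b}),\bg^i\rangle$, which is a new nonnegative contribution after squaring.
\end{itemize}
Summing squares gives
\[
\|\nabla\cL_\phi\|^2-\|\nabla\cL_f\|^2=\sum_{i,j}\big((\gamma^{\rm a}_i)^2-1\big)\gamma_j^2 g_{ij}^2+\sum_i\big(\partial_{\gamma^{\rm a}_i}\cL_\phi\big)^2 .
\]

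\textbf{Step 3 (main obstacle: the sign of $(\gamma^{\rm a}_i)^2-1$).} The last sum in the display is harmless, but the first requires $|\gamma^{\rm a}_i|\ge1$. This cannot hold for literally arbitrary states, so I would use the same conservation-law argument as in the proof of Theorem~\ref{thm: optimization advantage w/ scale vector}.

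Along gradient flow for $\phi$, the rescaling symmetry $\gamma^{\rm a}_i\mapsto c\gamma^{\rm a}_i$, (row $i$ of $\bW_\phi$)$\,\mapsto c^{-1}\cdot$(row $i$) yields the invariant $(\gamma^{\rm a}_i)^2-\|\text{row}_i(\bW_\phi)\|^2=\text{const}$. The analogous column symmetry gives the invariant for $\gamma^{\rm b}_j$. Under the practical initialization $\bgamma^{\rm a}(0)=\bgamma^{\rm b}(0)=\bone$ and $\bW_\phi(0)=\bzero$, each constant equals $1$, so $(\gamma^{\rm a}_i)^2\ge1$ at every reachable state. The first sum is then nonnegative and the inequality follows.

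I would therefore state the comparison for states satisfying $|\gamma^{\rm a}_i|\ge1$ for all $i$, a condition guaranteed along DP trajectories from the standard initialization by this invariant. I would then remark that equality holds only when $\gamma^{\rm a}=\pm\bone$ and all $\partial_{\gamma^{\rm a}_i}\cL_\phi=0$, which makes the inequality "at least as fast".
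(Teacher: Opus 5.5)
Your proof is correct and follows the paper's argument: the paper likewise compares the two dissipation rates at the common effective state, arrives at the same difference $\sum_{i,j}((\gamma^{\rm a}_i)^2-1)\gamma_j^2 g_{ij}^2+\sum_i(\partial_{\gamma^{\rm a}_i}\cL_\phi)^2$ (writing the last term as $\langle\bA_{i:},\bR_{i:}\rangle^2/(\gamma^{\rm a}_i)^2$), and closes with the same row-wise invariant $(\gamma^{\rm a}_i)^2-\|\bW_{\phi,i:}\|^2=1$ from the standard initialization, so your explicit restriction to states with $|\gamma^{\rm a}_i|\ge1$ is exactly the hypothesis the paper uses implicitly. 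Only your closing remark on equality is wrong: equality does not force $\bgamma^{\rm a}=\pm\bone$, since for instance at a minimizer both rates vanish for every $\bgamma^{\rm a}$.
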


\textbf{Key mechanism.}
Theorem~\ref{thm: optimization advantage placement} establishes the instantaneous acceleration of DP at the same effective state as the baseline. 
The key mechanism is that DP induces \textit{both row-wise and column-wise self-amplifying preconditioning}, leading to faster loss descent than the standard single-sided modulation.
A finer trajectory-level analysis, including strict acceleration in both early and global training phases, is deferred to Theorem~\ref{thm: full: optimization advantage placement} in Appendix~\ref{appendix: proof of improving: placement}.

\subsection{Reparameterization of Scale Vectors}
\label{subsection: reparameterization}

\textbf{Why parameterization matters.}
Another key question is how scale vectors should be parameterized. Even when different parameterizations represent the same functional class, they can induce substantially different optimization dynamics. 
This issue is relevant for scale vectors, which naturally contain two distinct components: a global magnitude and a directional unit vector. 
These components play different optimization roles. 
The magnitude controls the overall strength of modulation, whereas the direction determines how this modulation is distributed across channels. 
The standard parameterization entangles these two effects in a single vector, potentially leading to suboptimal conditioning. 
This motivates reparameterizing scale vectors to control magnitude and direction more explicitly.

\textbf{Magnitude-direction Reparameterization.} For nonzero scale vector $\bgamma\in\bbR^d$, we decompose it into its magnitude and direction, and optimize the two components separately. We consider two variants.

\begin{itemize}
\item \textbf{OR:} \textbf{reparameterization in original space.}
A natural approach is to separate magnitude and direction in the standard Euclidean geometry:
\begin{equation}
    \textbf{OR:}\qquad\bgamma \mapsto \beta\cdot\Norm(\balpha),
\end{equation}
where $\beta\in\bbR$ controls the global magnitude, and $\Norm(\balpha)\in\bbR^d$ determines the direction.

\item \textbf{ER:} \textbf{reparameterization in exponential space.}
While \textbf{OR} separates magnitude and direction in Euclidean coordinates, scale vectors act multiplicatively in the network. 
This suggests that a multiplicative parameterization may be more natural. We therefore consider
\begin{equation}
    \textbf{ER:}\qquad\bgamma \mapsto e^\beta\cdot e^{\balpha-\text{mean}(\balpha)}
\end{equation}
where $e^\beta\in\bbR$ controls the overall magnitude, and $e^{\balpha-\text{mean}(\balpha)}\in\bbR^d$ determines the direction.
\end{itemize}

Although ER produces only positive entries and therefore appears to remove sign information, it does not reduce expressivity in our setting: since scale vectors always appear together with linear maps, the missing signs can be absorbed into the corresponding linear weights.

The following theorem demonstrates that \textbf{OR} can further accelerate optimization dynamics induced by scale vectors. Its proof is deferred to Appendix~\ref{appendix: proof of improving: reparameterization}.

\begin{theorem}[Optimization advantage of \textbf{OR}]
\label{thm: optimization advantage reparametrization}
Consider the setting of Theorem~\ref{thm: optimization advantage w/ scale vector}. 
Let $\psi(\bx;\bW_\psi,\balpha,\beta)=\bW_\psi(\beta\Norm(\balpha)\odot\Norm(\bx))$ be the reparameterized student model, and let $\cL_\psi$ be its squared loss. 
We compare the standard model $f$ and the reparameterized model $\psi$ under gradient
flow, initialized as $\balpha(0)=\bone,\beta(0)=1,\bW_\psi(0)=\bzero$. 
Then, at the same effective state, \textbf{OR} is instantaneously \textbf{at least as fast as} the direct parameterization: for any $(\bW_f,\bgamma)$ and $(\bW_\psi,\balpha,\beta)$ satisfying $\bW_f=\bW_\psi$ and $\bgamma=\beta\Norm(\balpha)$, we have $ \frac{\rd}{\rd t}\cL_\psi\leq\frac{\rd}{\rd t}\cL_f\leq 0$.
\end{theorem}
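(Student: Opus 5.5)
We write both losses in terms of the effective matrix and compare the induced preconditioners. Since $\bx\sim\cN(\bzero,\bI_d)$, $\Norm(\bx)=\sqrt d\,\bx/\|\bx\|$ is uniform on the sphere of radius $\sqrt d$, so $\bbE[\Norm(\bx)\Norm(\bx)^\top]=\bI_d$. Hence for any effective matrix $\bA$ the loss is $\frac12\|\bA-\bW_\star\|_{\rm F}^2$. Let $\bG:=\bA-\bW_\star$. For $f$ we have $\bA=\bW_f\operatorname{diag}(\bgamma)$, and for $\psi$ we have $\bA=\bW_\psi\operatorname{diag}(\beta\Norm(\balpha))$. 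At the same effective state (the hypotheses of Theorem~\ref{thm: optimization advantage reparametrization}) both models share the same $\bA$, $\bG$, $\bW:=\bW_f=\bW_\psi$ and $\bgamma$. Gradient flow gives $\frac{\rd}{\rd t}\cL=-\|\nabla\cL\|^2$ over all trainable parameters. The plan is therefore to compute the squared gradient norms of both models at this common state and show $\|\nabla\cL_\psi\|^2\ge\|\nabla\cL_f\|^2$. The inequality $\frac{\rd}{\rd t}\cL_f\le0$ is immediate.

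\textbf{Gradient of $f$.} We have $\nabla_{\bW}\cL_f=\bG\operatorname{diag}(\bgamma)$. For the scale vector, $\partial_{\gamma_j}\cL_f=\langle \bg_j,\bw_j\rangle=:s_j$, where $\bg_j$ and $\bw_j$ are the $j$-th columns of $\bG$ and $\bW$. So $\|\nabla\cL_f\|^2=\|\bG\operatorname{diag}(\bgamma)\|_{\rm F}^2+\|\bs\|^2$ with $\bs=(s_1,\dots,s_d)$.

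\textbf{Gradient of $\psi$.} The $\bW$-block is identical, since $\beta\Norm(\balpha)=\bgamma$. For $(\balpha,\beta)$ we use the chain rule through $\bgamma(\balpha,\beta)=\beta\sqrt d\,\balpha/\|\balpha\|$. Write $\bu:=\balpha/\|\balpha\|$. Then:
\begin{itemize}
\item $\partial\bgamma/\partial\beta=\sqrt d\,\bu$, which gives $\partial_\beta\cL_\psi=\sqrt d\,\langle\bs,\bu\rangle$;
\item $\partial\bgamma/\partial\balpha=\frac{\beta\sqrt d}{\|\balpha\|}(\bI-\bu\bu^\top)$, which gives $\nabla_{\balpha}\cL_\psi=\frac{\beta\sqrt d}{\|\balpha\|}(\bI-\bu\bu^\top)\bs$.
\end{itemize}
Thus the scale part of $\|\nabla\cL_\psi\|^2$ equals
\[
d\,\langle\bs,\bu\rangle^2+\frac{\beta^2 d}{\|\balpha\|^2}\,\|(\bI-\bu\bu^\top)\bs\|^2 .
\]
This is the quadratic form of $\bs$ under a preconditioner that is $d$ along $\bu$ and $\beta^2d/\|\balpha\|^2$ on $\bu^\perp$. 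The comparison then reduces to showing that both coefficients are at least $1$, since $\|\bs\|^2=\langle\bs,\bu\rangle^2+\|(\bI-\bu\bu^\top)\bs\|^2$.

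\textbf{Main step: the orthogonal coefficient.} The radial coefficient is fine, since $d\ge1$. The key obstacle is the orthogonal coefficient $\beta^2 d/\|\balpha\|^2$, which needs a trajectory invariant; this is why the initialization $\balpha(0)=\bone$, $\beta(0)=1$ is specified. The plan has two parts.
\begin{enumerate}
\item Scale invariance of $\Norm(\balpha)$ gives $\langle\nabla_{\balpha}\cL_\psi,\balpha\rangle=0$, so $\|\balpha(t)\|^2\equiv\|\balpha(0)\|^2=d$ along gradient flow.
\item A balancedness law in the spirit of Theorem~\ref{thm: optimization advantage w/ scale vector} holds for $\beta$: rescaling $\beta\mapsto c\beta$, $\bW\mapsto\bW/c$ leaves $\psi$ unchanged. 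This yields $\frac{\rd}{\rd t}\big(\beta^2-\|\bW_\psi\|_{\rm F}^2\big)=0$, hence $\beta(t)^2=1+\|\bW_\psi(t)\|_{\rm F}^2\ge1$.
\end{enumerate}
Together these give $\beta^2d/\|\balpha\|^2=\beta^2\ge1$. Consequently $\|\nabla\cL_\psi\|^2\ge\|\bG\operatorname{diag}(\bgamma)\|_{\rm F}^2+\|\bs\|^2=\|\nabla\cL_f\|^2$, which is the claim.

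If the statement is intended for arbitrary matching states rather than only states on the $\psi$-trajectory, the orthogonal coefficient can drop below $1$. In that case the claim should be read as restricted to states reachable from the stated initialization, where the two invariants above hold; I would state that interpretation explicitly.
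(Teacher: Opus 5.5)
Your proposal is correct and follows essentially the paper's proof. Both reduce the comparison to the scale-vector part of the dissipation, $d\langle\bs,\bu\rangle^2+\beta^2\|(\bI-\bu\bu^\top)\bs\|^2$ versus $\|\bs\|^2$, and close it with the same two trajectory invariants $\|\balpha(t)\|_2=\sqrt d$ and $\beta(t)^2=1+\|\bW_\psi(t)\|_{\rm F}^2$, which you derive from scale/rescaling symmetries where the paper computes them directly. Your caveat is also accurate: the paper's argument likewise proves the inequality only at states reachable from the stated initialization, not at arbitrary matching states.
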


\textbf{Key mechanism.}
Theorem~\ref{thm: optimization advantage reparametrization} establishes the instantaneous acceleration of \textbf{OR} at the same effective state as the baseline.
The key mechanism is that \textbf{OR} induces an \textit{anisotropic preconditioner} that treats the magnitude and direction components of the scale vector differently.
Under gradient flow, the effective dynamics of $\bgamma:=\beta\Norm(\balpha)$ in $\psi$ satisfy $\dot{\bgamma}= -\bP \nabla_{\bgamma} \cL_f$, where the implicit preconditioner is $\bP = d \cdot (\hat{\bgamma}\hat{\bgamma}^\top) + \rho \cdot (\bI - \hat{\bgamma}\hat{\bgamma}^\top)$, where $\hat{\bgamma} = \bgamma/\|\bgamma\|$ and $\rho = \beta^2 d / \|\balpha\|^2 \approx \cO(1)$. 
Unlike the standard parameterization ($\bP = \bI$), this structure amplifies motion along the magnitude direction by a factor of $d$, aggressively accelerating scale adjustments, while preserving a stable $\cO(1)$ descent scale for the orthogonal direction.
A finer trajectory-level analysis, including strict acceleration in early and global training phases, is deferred to Theorem~\ref{thm: full: optimization advantage reparametrization} in Appendix~\ref{appendix: proof of improving: reparameterization}
and further supported by numerical simulations.

\subsection{A Unified Preconditioning View of Scale-Vector Designs}
\label{subsection: unified view}

The preceding subsections introduce three different scale-vector designs: heterogeneity, placement, and reparameterization. 
We now show that they can be understood under a unified principle. 
For Pre-Norm scale vectors $\bgamma$, the scale vector is immediately followed by a matrix $\bW$ and therefore does not enlarge the expressivity of the corresponding linear transformation. 
Instead, it changes the \textit{ parameterization} of the effective matrix optimized during training. Consequently, these designs act as \textit{lightweight preconditioners}.

\textbf{Preconditioning via reparameterization.}
We first illustrate the general connection between reparameterization and preconditioning.
Let \(\bp\in\bbR^{m}\) denote an effective parameter that enters the loss \(\cL(\bp)\). 
If \(\bp\) is optimized directly, its gradient flow is:
\begin{equation}\label{eq: gf, standard}
    \frac{\rd\bp}{\rd t}=-\nabla_{\bp}\cL(\bp).
\end{equation}
Now consider that \(\bp\) is parameterized by another variable \(\bq\in\bbR^{n}\) with $n\ge m$ through a differentiable map
\[
    \bp = \Phi(\bq), \qquad \Phi:\bbR^{n}\to\bbR^{m}.
\]
Under this reparameterization, by the chain rule, 
$\nabla_{\bq}\cL(\Phi(\bq))=\frac{\partial\Phi(\bq)}{\partial\bq}^\top \nabla_{\bp}\cL(\bp)$.
Therefore, gradient flow on the parameter \(\bq\), \(\frac{\rd\bq}{\rd t}=-\nabla_{\bq}\cL(\Phi(\bq))\), induces the following dynamics on the effective parameter \(\bp=\Phi(\bq)\):
\begin{equation}\label{eq: gf, reparameterization}
    \frac{\rd\bp}{\rd t}=
    \frac{\partial\Phi(\bq)}{\partial\bq}\frac{\rd\bq}{\rd t}=-\frac{\partial\Phi(\bq)}{\partial\bq}\frac{\partial\Phi(\bq)}{\partial\bq}^\top
    \nabla_{\bp}\cL(\bp).
\end{equation}
Comparing~\eqref{eq: gf, reparameterization} with~\eqref{eq: gf, standard}, we see that reparameterization induces a state-dependent preconditioned gradient flow in the effective parameter space, with preconditioner
\begin{equation}\label{eq: preconditioner via reparameterization}
    \bP_{\Phi}(\bq)
    :=
    \frac{\partial\Phi(\bq)}{\partial\bq}\frac{\partial\Phi(\bq)}{\partial\bq}^\top.
\end{equation}
This preconditioner \(\bP_{\Phi}(\bq)\) is positive semi-definite and state-dependent.

\textbf{Reparameterization forms of scale-vector designs.}
We now illustrate the above preconditioning view for our scale-vector designs.
Consider a group of linear branches $\{\bW_c\}_c$ following a Pre-Norm layer, such as
\(c\in\{Q,K,V\}\) in an Attn block, \(c\in\{\mathrm{gate},\mathrm{up}\}\) in a FFN block, or a single linear branch such as the final output projection.
For a branch matrix \(\bW_c\in\bbR^{d_2\times d_1}\), a general class of lightweight reparameterizations of the effective matrix can be written as
\begin{equation}\label{eq: preconditioner unified}
    \bW_c
    \mapsto
    \operatorname{diag}(\bu_c)\bW_c\operatorname{diag}(\bv_c)
    =
    \bW_c\odot(\bu_c\bv_c^\top),
\end{equation}
where $\bu_c\in\bbR^{d_2},\bv_c\in\bbR^{d_1}$.
This form introduces a structured multiplicative scale field over the entries of \(\bW_c\in\bbR^{d_2\times d_1}\).
Notably, \(\bu_c\bv_c^\top\) at most rank one, so the design uses only \(\cO(d_1+d_2)\) additional parameters while modulating an \(\cO(d_1d_2)\)-dimensional matrix.
% For standard Transformer projections, the parameter overhead is \(\cO(d)\) due to $d_1\sim d_2$, negligible compared with the \(\cO(d^2)\) matrix parameters.

Table~\ref{tab: unified preconditioner} summarizes how our scale-vector strategies fit into this unified form, including the standard scale vector, \textbf{HG}, \textbf{AP}, \textbf{DP}, \textbf{OR}, and \textbf{ER}. 
The only exception is \textbf{DNP} ($\operatorname{diag}(\bu_c)\operatorname{Norm}(\bW_c\operatorname{diag}(\bv_c)\,\cdot)$), which is not a purely reparameterization because it inserts an intermediate normalization to further stabilize the hidden representation. 
Although each row in Table~\ref{tab: unified preconditioner} presents a single strategy, many strategies are complementary and can be combined to obtain richer scale-vector designs.

\begin{table}[!htp]
\centering
\caption{Scale-vector designs as special cases of the unified reparameterization 
\(\bW_c\mapsto\bW_c\odot(\bu_c\bv_c^\top)\).}
\label{tab: unified preconditioner}
\small
\begin{tabular}{lll}
\toprule
Design & Choice in the unified form~\eqref{eq: preconditioner unified} & Preconditioning interpretation \\
\midrule
Standard 
& \(\bu_c=\mathbf 1,\ \bv_c=\bgamma\) 
& Input-side channel preconditioning \\
\textbf{HG} 
& \(\bu_c=\mathbf 1,\ \bv_c=\bgamma_c\) 
% depends on branch \(r\) 
& Branch-specific preconditioning \\
\textbf{AP} 
& \(\bu_c=\bgamma^{\text{a}},\ \bv_c=\mathbf 1\) 
& Output-side channel preconditioning \\
\textbf{DP} 
& \(\bu_c=\bgamma^{\text{a}},\ \bv_c=\bgamma^{\text{b}}\) 
& Two-sided input/output preconditioning \\
% DNP 
% & \(\mathcal N_r^{\rm mid}=\operatorname{Norm}\) 
% & Stabilized two-sided preconditioning \\
\textbf{OR} 
& \(\bu_c=\mathbf 1,\ \bv_c=\beta\cdot\operatorname{Norm}(\balpha)\) 
& Magnitude-direction anisotropic metric \\
\textbf{ER} 
& \(\bu_c=\mathbf 1,\ \bv_c=e^\beta\cdot e^{\balpha-\operatorname{mean}(\balpha)}\) 
& Multiplicative/log-space scale metric \\
\bottomrule
\end{tabular}
\end{table}

\textbf{Comparison with adaptive optimizer preconditioning.}
Modern adaptive optimizers, such as RMSProp~\citep{Tieleman2012_rmsprop}, Adam~\citep{kingma2014adam}, KFAC~\citep{pmlr-v37-martens15}, Shampoo~\citep{pmlr-v80-gupta18a}, and Soap~\citep{vyas2025soap}, introduce an \textit{optimizer-induced preconditioner} $\bF_t^{-1}$ in the raw parameter space.
Recent variants further improve optimization by enhancing dynamics along flat directions of the preconditioned geometry~\citep{wang2024improving,wang2025sharpness,wang2025gradpower,zhu2026accelerating}. 
Under adaptive optimization, the gradient flow $\frac{\rd \bp}{\rd t}=-\nabla_{\bp}\cL(\bp)$ is modified to $\frac{\rd \bp}{\rd t}=-\bF_t^{-1}(\bp)\nabla_{\bp}\cL(\bp)$,
where $\bF_t$ is estimated from gradient statistics accumulated during training. 
For example, RMSProp approximately uses $\bF_t=\operatorname{diag}
(
\sqrt{\operatorname{EMA}_t[(\nabla_{\bp}\cL(\bp_t))^{\odot2}]}
)$.

In contrast, a reparameterization $\Phi$ induces a
\textit{reparameterization-induced preconditioner}
$\bP_{\Phi}$, as in~\eqref{eq: preconditioner via reparameterization}. 
This preconditioner depends on the
current model state, but not explicitly on historical gradients. Table~\ref{tab: comparison: reparameterization vs optimizer} summarizes the differences.
Thus, the two forms of preconditioning arise from \textit{fundamentally different mechanisms}: reparameterization reshapes the local optimization geometry through the current model state, whereas adaptive optimizers adjust updates according to accumulated gradient statistics.

\begin{table}[!htp]
    \centering
    \caption{Comparison between optimizer-induced and reparameterization-induced preconditioning.}
    \label{tab: comparison: reparameterization vs optimizer}
    \small
    \begin{tabular}{lcc}
    \toprule
    & Adaptive optimizer & Reparameterization $\Phi$ \\
    \midrule
    Preconditioner
    &
    $\bF_t^{-1}(\bp_t)$
    &
    $\bP_{\Phi}(\bq_t)
    =
    \frac{\partial\Phi(\bq_t)}{\partial\bq}
    \frac{\partial\Phi(\bq_t)}{\partial\bq}^{\top}$
    \\
    Determined by
    &
    gradient history
    $\{\nabla_{\bp}\cL(\bp_s)\}_{s\le t}$
    &
    current model state $\bq_t$
    \\
    \bottomrule
    \end{tabular}
\end{table}

\textit{Combining these two} mechanisms further alters the effective preconditioning geometry and may yield additional optimization gains. Intuitively, reparameterization provides a more suitable coordinate system, while adaptive optimizers exploit gradient statistics within that coordinate system. The experiments in Section~\ref{section: experiments} support this complementary effect.

\section{Experiments}
\label{section: experiments}

% \vspace{-.05cm}

% \subsection{Experimental Setup}

We evaluate our methods on LLM pre-training across architectures and model scales. The main configurations are summarized below; additional implementation details are provided in Appendix~\ref{appendix: section: experiments}.

\textbf{Models and Data.} 
We conduct experiments on two widely used LLM architectures: \textbf{dense} models (Llama~\citep{touvron2023llama}) and \textbf{MoE} models (LlamaMoE), with sizes \textbf{from 0.12B to 2B} parameters. All models are trained on  
a high-quality curated pre-training corpus.
% the high-quality FineWeb-Edu dataset~\citep{lozhkovfineweb}. 

\textbf{Token Budget.}
Unless otherwise specified, the training budget is approximately \textit{100} tokens per parameter for dense models and \textit{100} tokens per activated parameter for MoE models. 
This budget substantially exceeds the Chinchilla-optimal regime~\citep{hoffmann2022training} and is closer to industrial pre-training practice. 
For small-scale models, we use a total batch size of approximately 0.5M tokens; for large-scale models, we use a large total batch size of approximately 2M tokens.

\textbf{Optimization.}
In most experiments, we adopt AdamW as the default optimizer~\citep{kingma2014adam,loshchilov2017decoupled}. Following standard Llama pre-training practice~\citep{touvron2023llama}, we set $\beta_1=0.9$, $\beta_2=0.95$, weight decay $\lambda=0.1$, and gradient clipping threshold $1.0$. 
We use the \texttt{cos} schedule: linear warmup to a peak lr \texttt{lr\_max}, followed by cosine decay to the terminal lr \texttt{lr\_min}. For each setting, we tune \texttt{lr\_max} for the baseline model and use the same value for both the baseline and the corresponding improved model.

To validate our methods beyond this default setting, we additionally report results with the Muon optimizer~\citep{jordan2024muon} and the warmup-stable-decay (wsd) scheduler~\citep{hu2024minicpm}.

% \vspace{-.05cm}

\subsection{Validation of Each Strategy}
\label{subsection: validate proposed method}

% \vspace{-.05cm}

We first evaluate the strategies introduced in Sections~\ref{section: understanding} and~\ref{section: improving}: \textbf{HG} for scale-vector heterogeneity (Section~\ref{subsection: heterogeneity}); \textbf{AP}, \textbf{DP}, and \textbf{DNP} for scale-vector placement (Section~\ref{subsection: placement}); \textbf{OR} and \textbf{ER} for scale-vector reparameterization (Section~\ref{subsection: reparameterization}); and \textbf{IWD} for individual weight decay of scale vectors (Section~\ref{subsection: understanding weight decay}).

\begin{figure}[!htb]
    \centering
    % \vspace{-.2cm}
    \includegraphics[width=0.245\linewidth]{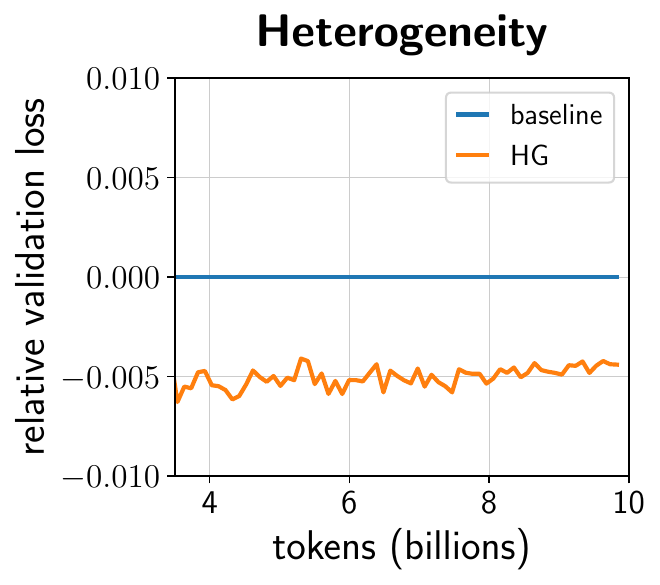}
    \includegraphics[width=0.245\linewidth]{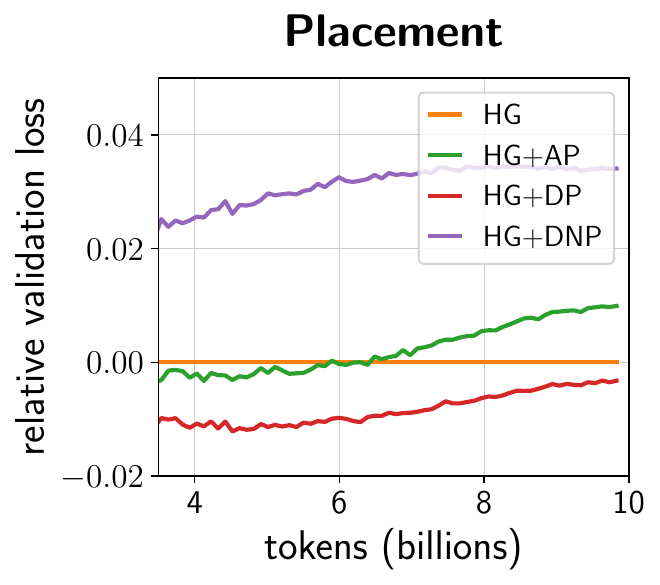}
    \includegraphics[width=0.245\linewidth]{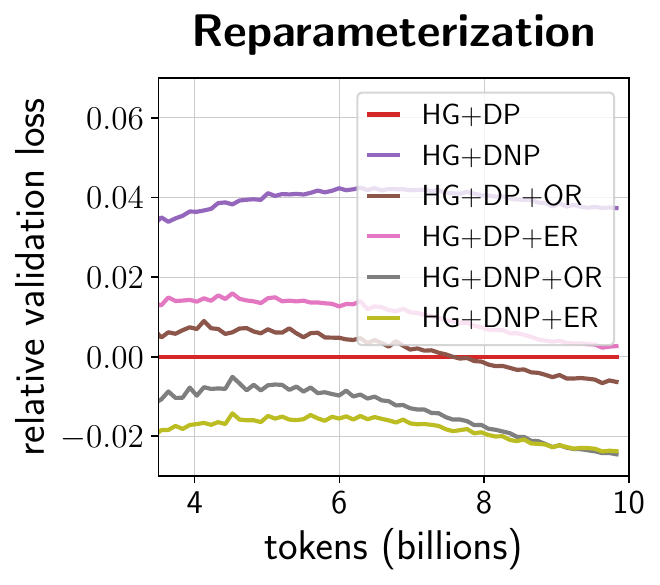}
    \includegraphics[width=0.246\linewidth]{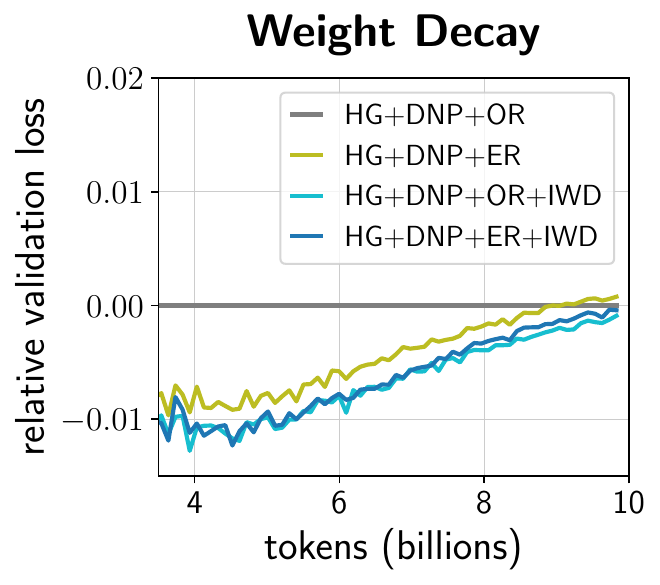}
    % \vspace{-.5cm}
    \caption{Step-by-step validation of scale-vector strategies on Llama-0.12B. Each strategy class yields gains.}
    % \vspace{-.2cm}
    \label{fig: validation of each strategy}
\end{figure}

\textbf{Step-by-step verification.}
All experiments in this subsection are conducted on the 0.12B Llama model, and the strategies are validated incrementally. The results are shown in Figure~\ref{fig: validation of each strategy}. 
(i) The \textit{Heterogeneity} panel shows that heterogeneous scale vectors outperform shared scale vectors.
(ii) The \textit{Placement} panel shows that dual placement (\textbf{DP}) consistently improves performance, whereas dual normalized placement (\textbf{DNP}) does not. After placement (\textbf{AP}) improves early training but is later overtaken.
(iii) The \textit{Reparameterization} panel shows that applying magnitude-direction reparameterization, either \textbf{OR} or \textbf{ER}, improves performance. Among these variants, \textbf{DNP+OR} achieves the largest gain, outperforming \textbf{DP+OR} and \textbf{DP+ER}, and slightly outperforming \textbf{DNP+ER}.
(iv) Although vanilla Llama contains only Pre-Norm layers, which are Input-Norm layers, applying \textbf{DNP} introduces both Input-Norm and Output-Norm scale vectors. 
This allows us to re-examine individual weight decay in this setting. As shown in the \textit{Weight Decay} panel, \textbf{OR+IWD} achieves the lowest terminal loss.
Overall, these results show that all four classes of scale-vector strategies provide gains, further supporting our theoretical understanding.

\subsection{Main Results of Unified Strategy}
\label{subsection: main results of unified strategy}

Building on the step-by-step validation in Section~\ref{subsection: validate proposed method}, we combine \textbf{HG} (Section~\ref{subsection: heterogeneity}), \textbf{DNP} (Section~\ref{subsection: placement}), \textbf{OR} (Section~\ref{subsection: reparameterization}), and \textbf{IWD} (Section~\ref{subsection: understanding weight decay}) into a \textit{unified scale-vector strategy}.

In this subsection, we systematically compare this strategy with tuned baselines across dense and MoE models ranging from 0.12B to 2B parameters. Across all settings, our method consistently improves pre-training
loss.

\begin{figure}[!htb]
    \centering
    \includegraphics[width=0.3\linewidth]{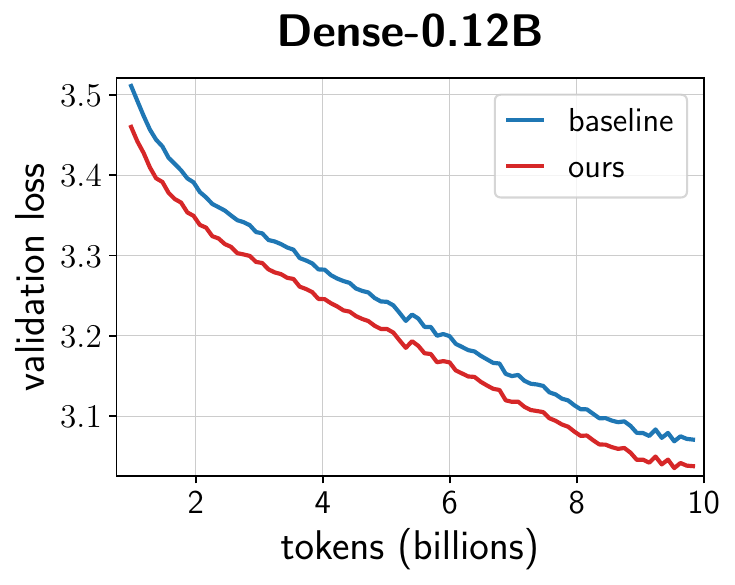}
    % \hspace{-.15cm}
    \includegraphics[width=0.3\linewidth]{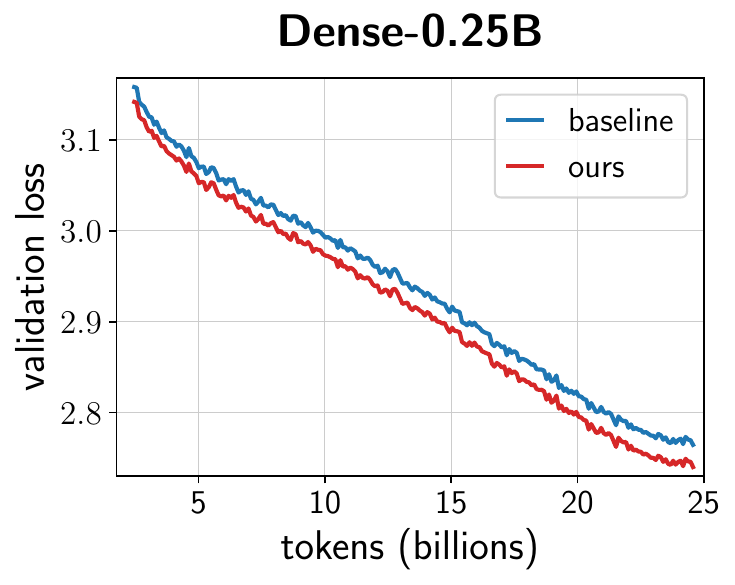}
    % \hspace{-.15cm}
    \includegraphics[width=0.3\linewidth]{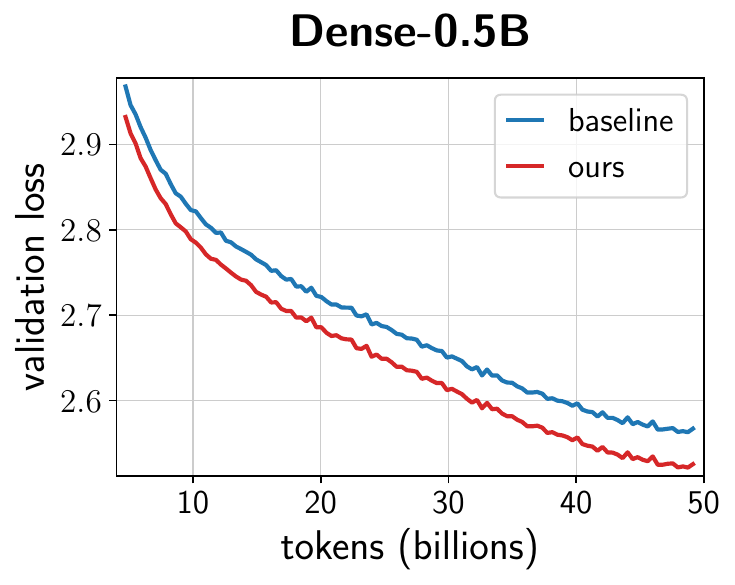}
    % \hspace{-.15cm}
    \includegraphics[width=0.305\linewidth]{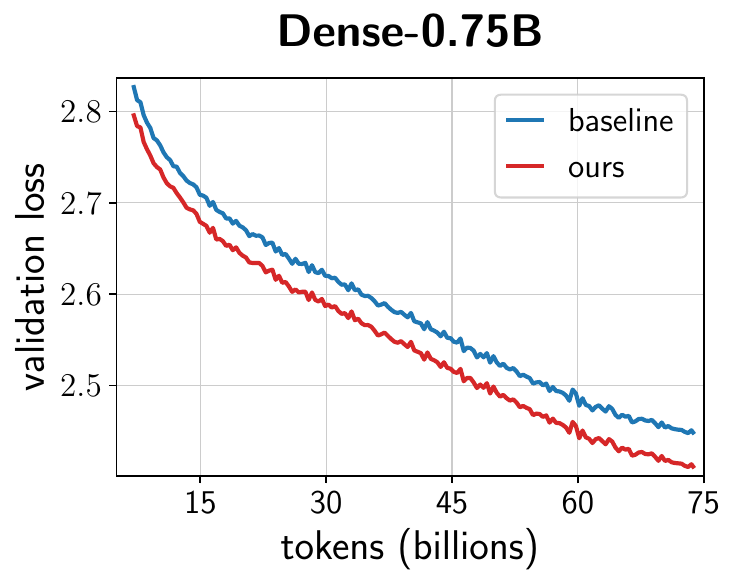}
    % \hspace{-.15cm}
    \includegraphics[width=0.3\linewidth]{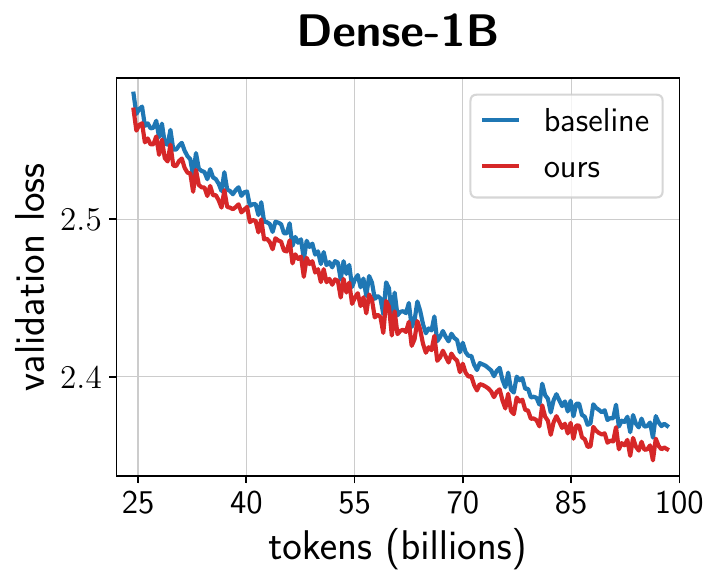}
    \includegraphics[width=0.238\linewidth]{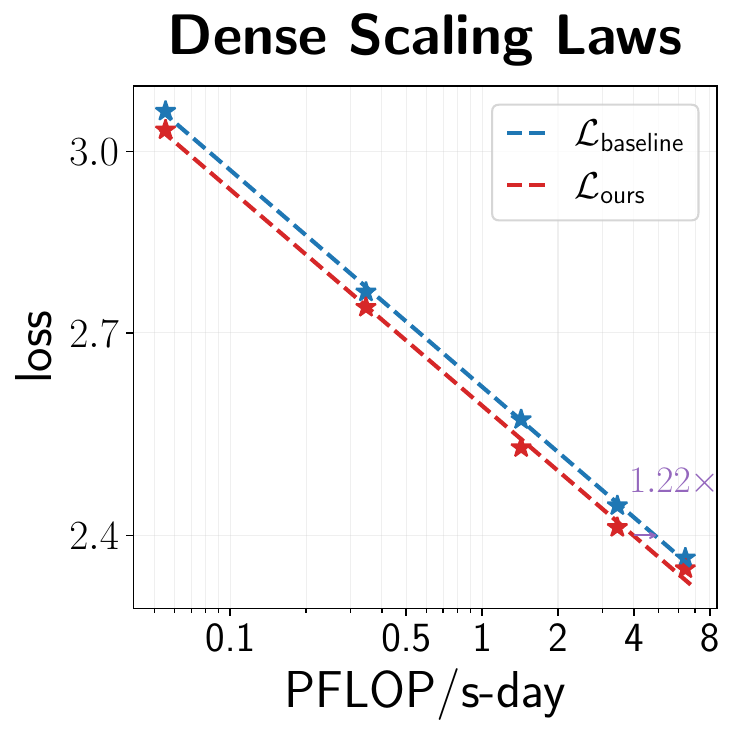}
    \hspace{.9cm}
    
    \caption{Comparison between our unified scale-vector strategy and the Llama baseline on dense models of different sizes. Our strategy consistently achieves lower terminal loss across model sizes and scales better than the baseline.}
\label{fig: dense: unified strategy}
\end{figure}

\begin{figure}[!htb]
    \centering
    % \vspace{-.2cm}
    \includegraphics[width=0.3\linewidth]{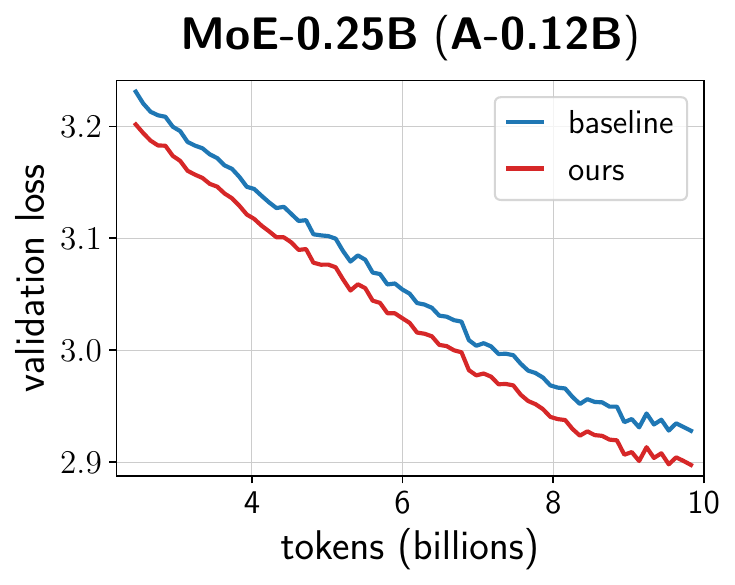}
    % \hspace{-.15cm}
    \includegraphics[width=0.294\linewidth]{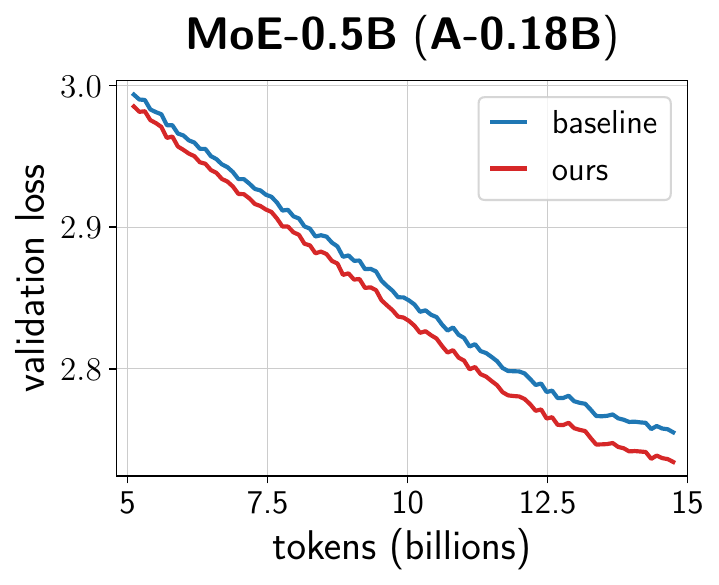}
    % \hspace{-.15cm}
    \includegraphics[width=0.3\linewidth]{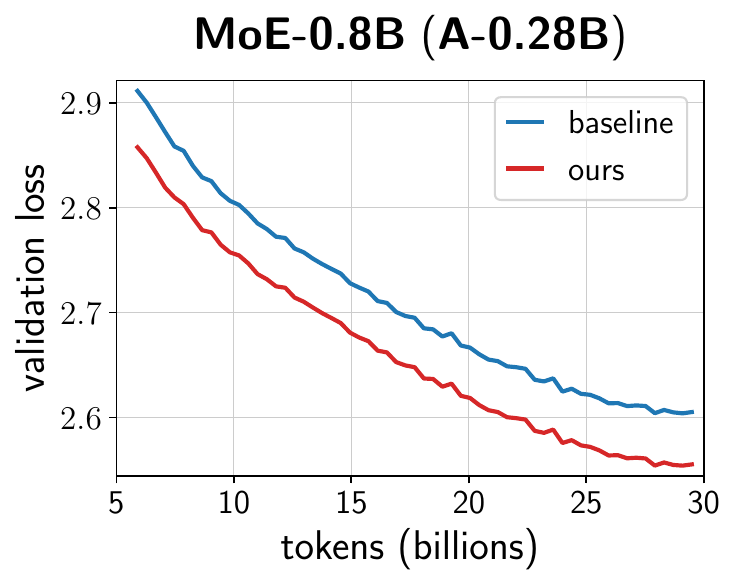}
    % \hspace{-.15cm}
    \includegraphics[width=0.3\linewidth]{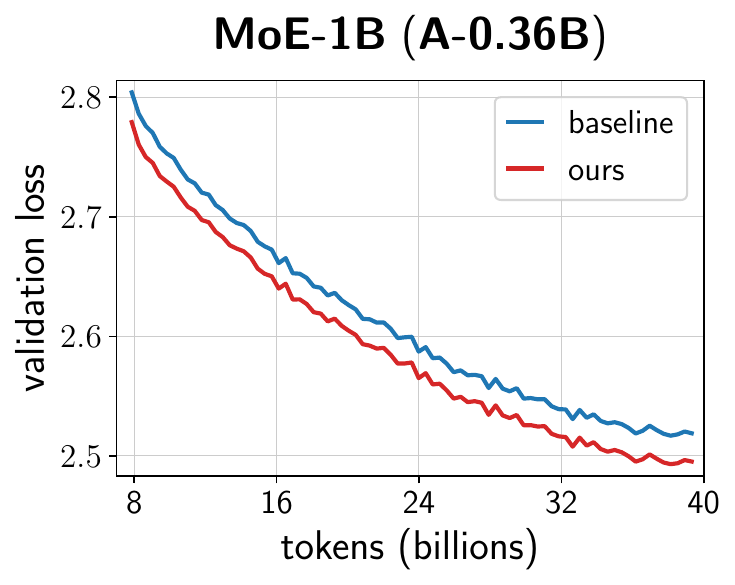}
    % \hspace{-.15cm}
    \includegraphics[width=0.3\linewidth]{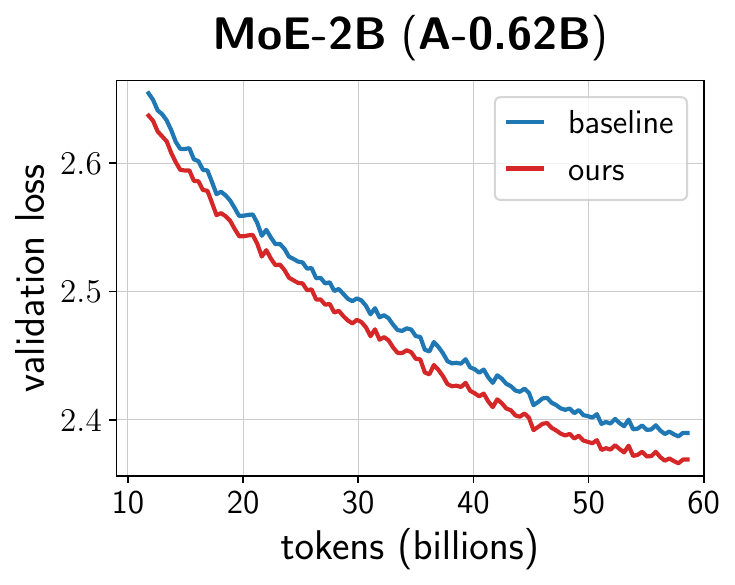}
    % \vspace{-.4cm}
    \includegraphics[width=0.233\linewidth]{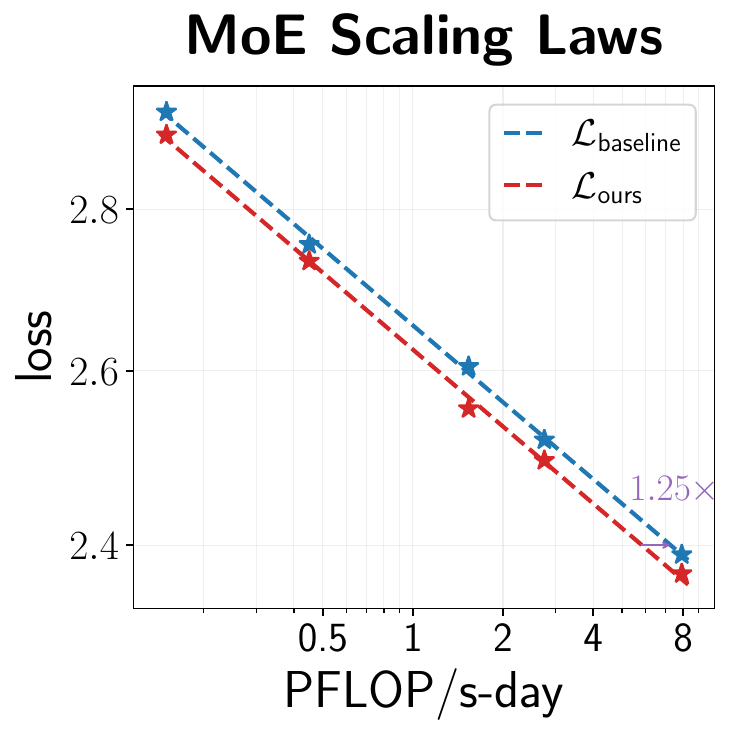}
    \hspace{.9cm}
    \caption{Comparison between our unified scale-vector strategy and the Llama-MoE baseline on MoE models of different sizes. Our strategy consistently achieves lower terminal loss across model sizes and scales more favorably than the baseline.}
    % \vspace{-.2cm}
\label{fig: moe: unified strategy}
\end{figure}

\textbf{Results for Dense Models.}
Figure~\ref{fig: dense: unified strategy} compares our method with tuned Llama baselines on 
% FineWeb-Edu for
dense Llama models with 0.12B, 0.25B, 0.5B, 0.75B, and 1B parameters; 0.75B results are deferred to Appendix~\ref{appendix: subsection: main results of unified strategy}.
Our method \textbf{consistently achieves lower terminal loss} than the tuned Llama baseline and maintains its advantage throughout training, with the gap widening over time.
The fitted scaling laws also suggest that these gains may persist at larger scales.

\textbf{Results for MoE Models.}
% \textbf{Main findings.} 
Figure~\ref{fig: moe: unified strategy} compares our unified scale-vector strategy with tuned Llama-MoE baselines 
% on FineWeb-Edu 
across MoE models with 0.25B (A0.11B), 0.5B (A0.18B), 0.8B (A0.28B), 1B (A0.38B), and 2B (A0.62B) parameters, where ``A'' denotes the number of activated parameters; 0.8B results are deferred to Appendix~\ref{appendix: subsection: main results of unified strategy}.
Across all settings, our method \textbf{consistently achieves more than 0.02 lower terminal loss} than the well-tuned MoE baseline.
Moreover, the advantage is maintained throughout training, with the performance gap widening over time.
Furthermore, our method also exhibits a \textbf{more favorable scaling law} than the baseline. In particular, the fitted loss-descent slope of our strategy is \textit{slightly steeper} than that of the baseline.

\subsection{Compatibility with Muon and wsd}
\label{subsection: muon and wsd}

We further examine whether our strategy is compatible with other training configurations, including the Muon optimizer and the wsd lr schedule. Experimental details are provided in Appendix~\ref{appendix: subsection: muon and wsd}.

\begin{figure}[!htb]
    \centering
    % \vspace{-.2cm}
    \includegraphics[width=0.3\linewidth]{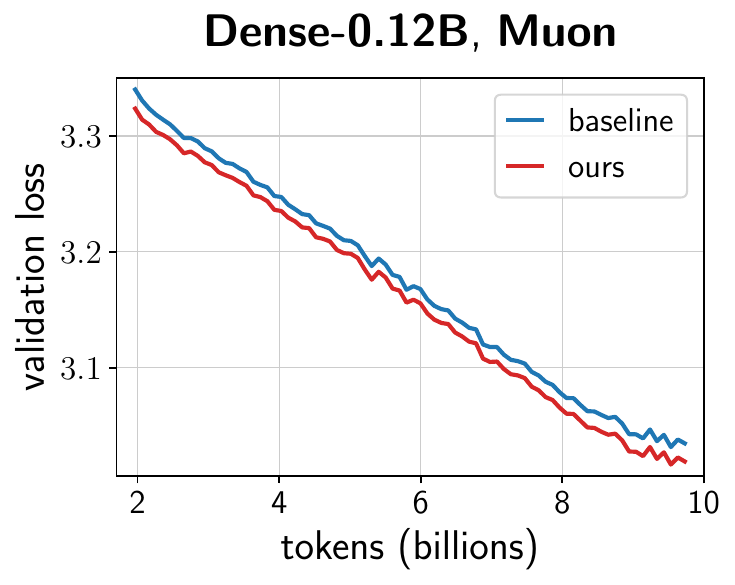}
    \includegraphics[width=0.3\linewidth]{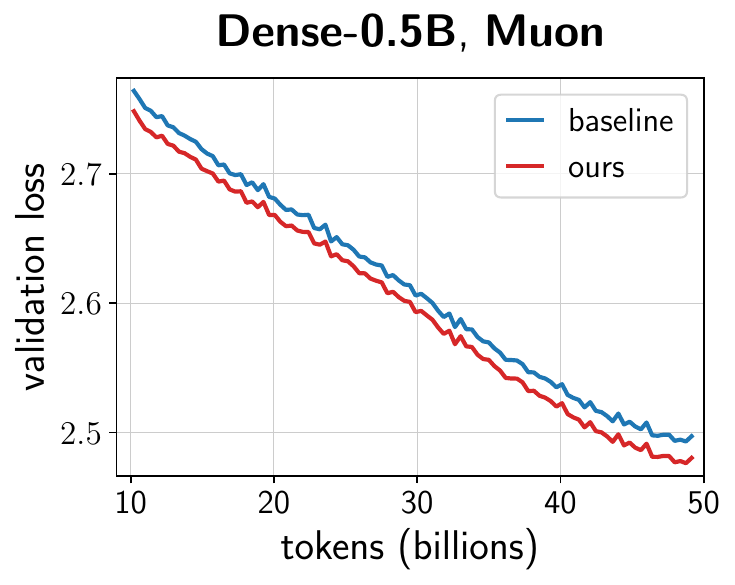}
    \includegraphics[width=0.3\linewidth]{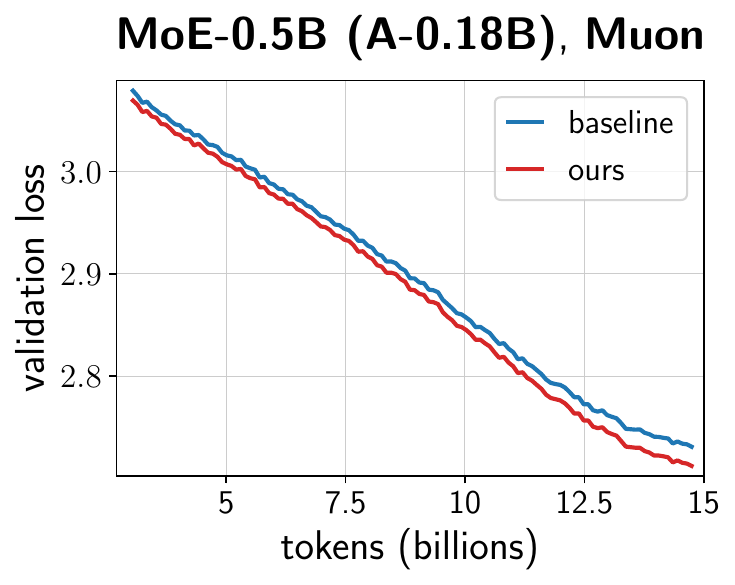}
    % \vspace{-.1cm}
    \caption{Results with the Muon optimizer. Our unified scale-vector strategy consistently improves over the corresponding baselines.}
    % \vspace{-.2cm}
\label{fig: muon}
\end{figure}

\textbf{Muon}~\citep{jordan2024muon,liu2025muon} has recently demonstrated strong efficiency and scalability in LLM pre-training. For the Muon baseline, we adopt the implementation of~\citet{liu2025muon}.   
We conduct experiments on Dense-0.12B, Dense-0.5B, and MoE-0.5B models. The results are shown in the two left panels of Figure~\ref{fig: muon}. 
Compared with the strong Muon baseline, our strategy still reduces the terminal loss by \textit{more than $0.015$} ($0.0156$ for Dense-0.12B, $0.0167$ for Dense-0.5B, and $0.0187$ for MoE-0.5B) and maintains its advantage throughout training.
We conjecture that this behavior is consistent with the AdamW results because Muon still optimizes scale vectors using AdamW, while our strategy primarily modifies scale vectors.

\begin{wrapfigure}{r}{0.62\textwidth}
    \centering
    \vspace{-.2cm}
    \includegraphics[width=0.3\textwidth]{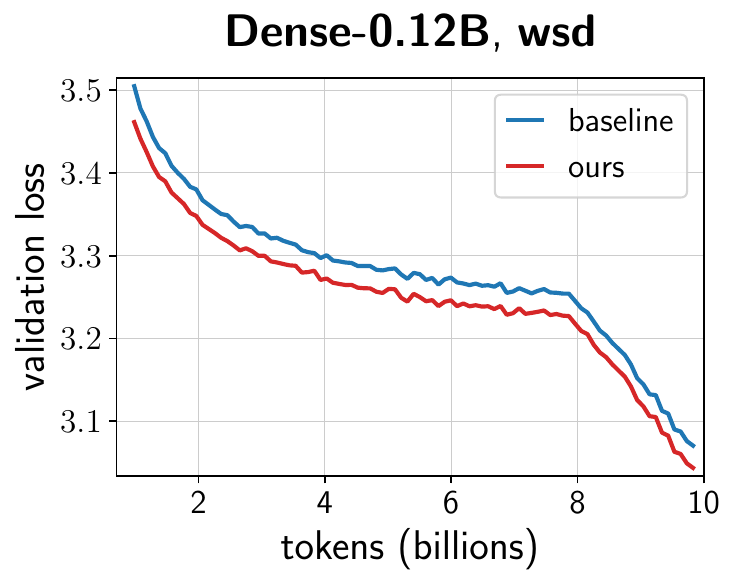}
    \includegraphics[width=0.3\textwidth]{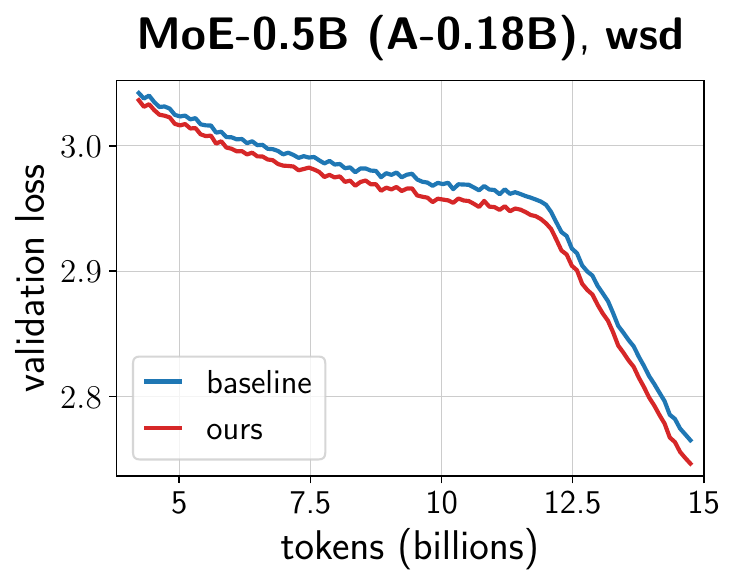}
    % \vspace{-.1cm}
    \caption{Results with the wsd lr schedule. Our unified scale-vector strategy consistently improves over the corresponding baselines.}
    % \vspace{-.2cm}
\label{fig: wsd}
\end{wrapfigure}
The \textbf{wsd} schedule  has recently become a popular choice of LLM pre-training. We also conduct experiments on Dense-0.12B and MoE-0.5B models. The results are shown in Figure~\ref{fig: wsd}. Notably, the advantage of our strategy gradually increases throughout the LR stable phase and does not diminish in the decay phase. This suggests that our strategy may be particularly suitable for modern over-training pipelines that adopt wsd schedules.

\subsection{Overhead Analysis}
\label{subsection: ablation}

We evaluate the parameter and computational overhead of our scale-vector strategy. Experimental details are provided in Appendix~\ref{appendix: subsection: validate proposed method}.

\begin{wrapfigure}{r}{0.25\textwidth}
     \vspace{-.5cm}
     \includegraphics[width=0.245\textwidth]{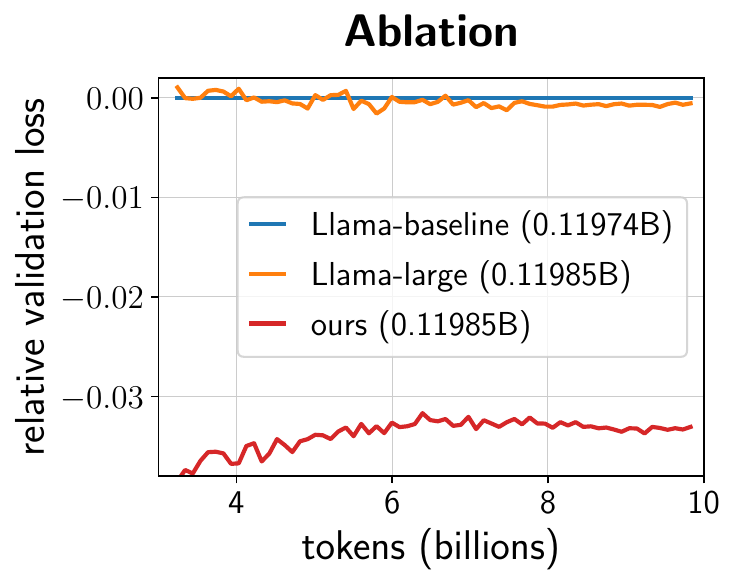}
    \vspace{-.2cm}
    \caption{Ablation on parameter count.}
    \vspace{-.4cm}
    \label{fig: ablation model size}
\end{wrapfigure}
\textbf{Parameter overhead.} 
Our strategies add only $\cO(d)$ scale-vector parameters, whereas Transformer matrix parameters scale as $\cO(d^2)$; thus, the overhead is \textit{negligible}. 
Nevertheless, we conduct a controlled ablation to isolate the effect of parameter count. 
Specifically, we compare against a widened Llama-0.12B baseline, denoted Llama-large, whose parameter count matches our largest variant, the unified strategy \textbf{HG}+\textbf{DNP}+\textbf{OR}+\textbf{IWD} (0.11985B vs. 0.11974B for the Llama-0.12B baseline). 
As shown in Figure~\ref{fig: ablation model size}, our method reduces terminal loss by 0.033, whereas Llama-large yields only a marginal gain. 
The parameter overhead further decreases at larger scales; for example, on Llama-1B, our method increases the parameter count by only $7.85\times10^{-5}$.

\begin{wraptable}{r}{0.55\textwidth}
    \centering
    \caption{Runtime and memory overhead on Dense-1B model. Ratios are relative to the corresponding baseline.}
    \setlength{\tabcolsep}{2.5pt}
    \small
    \begin{tabular}{c|c|c}
    \hline\hline
        model & wall-clock time (ms) & memory usage (MiB) \\ \hline
        baseline & 151 & 34263 \\
        our strategy   & 157 ($1.04\times$) & 34729 ($1.01\times$) \\
    \hline\hline
    \end{tabular}
    \label{tab: overhead}
\end{wraptable}
The \textbf{computational overhead} is also negligible: our strategies introduce only elementwise operations, which scales as $\cO(d)$, compared with the $\cO(d^2)$ cost of Transformer linear projections.
To empirically measure computational overhead, we train a Llama-1B model with standard \texttt{torch.compile}. As shown in Table~\ref{tab: overhead}, our unified strategy incurs only minor increases in wall-clock time and memory usage. Results are averaged over 10 training steps after warmup.

\section{Conclusion}
\label{section: conclusion}

We present a systematic study of scale vectors in LLMs. Although scale vectors account for only a negligible fraction of model parameters, we show that they have a substantial impact on LLM pre-training.
Our theory attributes this effect to optimization rather than expressivity, clarifies when weight decay should be applied, and motivates improved designs based on heterogeneity, placement, and reparameterization. 
Extensive experiments confirm that these designs consistently improve pre-training performance. 
Future work includes extending our analysis to broader architectures and studying how scale-vector design should evolve with model scale.

\section*{Acknowledgment}

We thank Guang Shi and Prof. Weinan E for helpful discussions.

\clearpage

% \bibliographystyle{plainnat}
% \bibliography{ref}

\clearpage

\beginappendix

% \newpage
% \appendix

% \begin{center}
%     \noindent\rule{\textwidth}{4pt} \vspace{-0.2cm}
%     % \noindent\rule{\textwidth}{1.2pt} \vspace{-0.25cm}
%     \LARGE \textbf{Appendix} % \\ ~\\[-0.5cm]
%     \noindent\rule{\textwidth}{1.2pt}
% \end{center}

\startcontents[sections]
\printcontents[sections]{l}{1}{\setcounter{tocdepth}{2}}

\vspace{1.cm}

\section{Related Works}
\label{appendix: related works}

\textbf{Normalization operations.}
The Normalization layer has long been recognized as a key ingredient for stable and efficient deep network training. 
Early work introduced Batch Normalization (BN)~\citep{ioffe2015batch}, which uses batch statistics and achieved great success in computer vision.
Layer Normalization (LN)~\citep{ba2016layer} later removed the dependence on batch statistics and became the standard choice for sequence modeling and Transformers. 
More recently, Root Mean Square Layer Normalization (RMSNorm)~\citep{zhang2019root} further simplified LN by discarding mean centering, and has become the dominant normalization operation in modern LLMs due to its simplicity and efficiency. 
Beyond these standard designs, recent work has also explored alternatives to explicit normalization, such as Dynamic Tanh (DyT)~\citep{zhu2025transformers}, which replaces normalization layers with element-wise nonlinear transformations. 
Overall, prior work has focused primarily on the normalization operation itself, seeking better statistics, lower overhead, or normalization-free replacements. \textit{In contrast}, our work focuses on the scale vector, a component retained across these designs but receiving much less theoretical and empirical attention.

\textbf{Location of normalization layers.}
Another line of work studies \textit{where} normalization should be placed within Transformer blocks. 
The original Transformer adopted Post-Norm, applying normalization after each residual addition. Subsequent work showed Pre-Norm improves training stability, which led it to become the default design in many modern LLMs~\citep{xiong2020layer}. 
However, Pre-Norm also introduces structural limitations, including weaker gradient propagation and reduced contribution from deeper layers~\citep{nguyen2019transformers}. 
Later studies explored combining the advantages of Pre-Norm and Post-Norm, aiming to stable optimization with loss gradient propagation. These works include ~\citep{wang2024deepnet,zhuo2025hybridnorm,chen2026post}.
\textit{In contrast} to this line, which focuses on the location of normalization layers, our work studies the design and mechanism of the scale vectors within these layers.

\textbf{Normalization layers with enhanced nonlinearity.}
More recent work has enriched normalization layers by introducing stronger nonlinear or adaptive modulation mechanisms, for example through gating modules or shallow feedforward networks~\citep{cai2025seednorm,qiu2026unified}. These methods increase the flexibility of normalization layers by making them more adaptive to different inputs.
\textit{In contrast}, rather than modifying normalization layers through additional nonlinear modules, we study the default scale vector itself, provide a theoretical understanding of its necessity and weight decay, and propose several simple but effective improvements. Moreover, several of our designs, including type-dependent weight decay, heterogeneous scale vectors, and dual-sided placements, are largely orthogonal to these methods and can potentially be combined with them.

\vspace{1.cm}

\section{Experimental Details}
\label{appendix: experiments}

All experiments are conducted on H100 80G GPUs.

\subsection{Experimental details for Section~\ref{section: experiments}}
\label{appendix: section: experiments}

\subsubsection{Section~\ref{subsection: main results of unified strategy}}
\label{appendix: subsection: main results of unified strategy}

\textbf{Models.} We utilize two popular classes of LLM models for our pre-training experiments:
 \begin{itemize}    
    \item \textbf{Llama.} 
    Llama~\citep{touvron2023llama} is a dense decoder-only Transformer architecture that uses Rotary Positional Encoding (RoPE)~\citep{su2024roformer}, SwiGLU,  RMSNorm, and a Pre-Norm design. We set $d_{\rm FFN}={\rm int}(8d_{\text{model}}/3)$.
    We pre-train Llama models ranging from 0.12B to 1B parameters. 
    Detailed configurations are provided in Table~\ref{table: dense model config and max lrs}.
    
    \item \textbf{LlamaMoE.} 
    LlamaMoE is a decoder-only mixture-of-experts architecture based on Llama. 
    Each model uses 32 sparse experts, activates 4 sparse experts per token, and includes one shared expert. 
    Following QwenMoE~\citep{yang2024qwen2technicalreport}, the hidden dimension of the shared expert is ${\rm int}(8d_{\rm model}/3)$, whereas that of each
    sparse expert is ${\rm int}(2d_{\rm model}/3)$.
    We pre-train LlamaMoE models ranging from 0.25B to 2B parameters. 
    Detailed configurations are provided in Table~\ref{table: moe model config and max lrs}.
\end{itemize}

\textbf{Token Budget.}
In this section, the total number of training tokens is set to approximately 100 times the number of model parameters for dense models, or 100 times the number of activated parameters for MoE models. This budget substantially exceeds the Chinchilla-optimal regime~\citep{hoffmann2022training} and is closer to industrial pre-training practice. 
As shown in Tables~\ref{table: dense model config and max lrs} and~\ref{table: moe model config and max lrs}, small-scale models use a sequence length of 1,024 and a batch size of 512, corresponding to a total batch size of approximately 0.5M tokens; large-scale models use a sequence length of 4,096 with a batch size of 512, corresponding to approximately 2M tokens per batch.

\textbf{Optimization.} In this section, we use AdamW as the default baseline optimizer~\citep{kingma2014adam,loshchilov2017decoupled}. Following standard Llama pre-training practice~\citep{touvron2023llama}, we set $\beta_1=0.9$, $\beta_2=0.95$, weight decay $\lambda=0.1$, and the gradient clipping threshold to $1.0$. We use the \texttt{cos} schedule: linear warm-up to a peak learning rate \texttt{lr\_max}, followed by cosine decay to \texttt{lr\_min}=\texttt{lr\_max}/20. 
For each setting, we first tune \texttt{lr\_max} for the baseline model and then use the tuned value for both the baseline and the corresponding improved model. The grid search for \texttt{lr\_max} is performed over $\{$\texttt{6e-4}, \texttt{1e-3}, \texttt{2e-3}, \texttt{3e-3}, \texttt{4e-3}, \texttt{5e-3}, \texttt{6e-3}$\}$.
The selected peak lr's are reported in Tables~\ref{table: dense model config and max lrs} and~\ref{table: moe model config and max lrs}.

\begin{table}[!ht]
	% \vspace{-20pt}
		\centering
		%\vspace{-18pt}
        \renewcommand{\arraystretch}{1.25}
		\caption{Model configurations and tuned peak lr's for dense models}
		\label{table: dense model config and max lrs}
		\begin{small}
	 % \addtolength{\tabcolsep}{-3pt} 
		\begin{tabular}{l|c|c|c|c|c|c}
		\hline 
		Acronym & Size & $d_{\text{model}}$ & $n_{\text{head}}$ & $n_{\text{layer}}$ & total batch size & \texttt{lr\_max} \\\hline\hline 
	Llama-0.12B & 0.12B & 768 & 12 & 6 & $1024\times480$ & 4e-3 \\
    Llama-0.25B & 0.25B & 1024 & 16 & 12 & $1024\times480$ & 2e-3 \\
	Llama-0.5B & 0.48B & 1280 & 20 & 18 & $4096\times480$  & 2e-3 \\
    Llama-0.75B & 0.75B & 1536 & 24 & 21 & $4096\times480$ & 2e-3 \\
	Llama-1B & 1.03B & 1792 & 28 & 22 & $4096\times480$ & 1e-3 \\\hline 
	\end{tabular}
	\end{small}
		% \vspace{-5pt}
\end{table}

\begin{table}[!ht]
	% \vspace{-20pt}
		\centering
		%\vspace{-18pt}
        \renewcommand{\arraystretch}{1.25}
		\caption{Model configurations and tuned peak lr's for MoE models.}
		\label{table: moe model config and max lrs}
		\begin{small}
	 % \addtolength{\tabcolsep}{-3pt} 
		\begin{tabular}{l|c|c|c|c|c|c|c}
		\hline 
		Acronym & Size & Activated Size & $d_{\text{model}}$ & $n_{\text{head}}$ & $n_{\text{layer}}$ & total batch size & \texttt{lr\_max} \\
		\hline\hline 
    MoE-0.25B & 0.25B & 0.11B & 640 & 10 & 6  & $1024\times480$ & 3e-3 \\
    MoE-0.5B & 0.48B & 0.18B & 748 & 12 & 9  & $1024\times480$ & 2e-3 \\
	MoE-0.8B & 0.80B & 0.28B & 960 & 15 & 10 & $4096\times480$  & 2e-3 \\
	MoE-1B & 1.06B & 0.38B & 1024 & 16 & 12 & $4096\times480$  & 1e-3 \\
	MoE-2B & 2.00B & 0.62B & 1280 & 20 & 15 & $4096\times480$  & 6e-4 \\
	 \hline 
		\end{tabular}
		\end{small}
		% \vspace{-5pt}
\end{table}

% Due to space limitations in the main text, the results for Dense-0.75B and MoE-0.8B models are reported in Figure~\ref{fig: appendix: dense-0.75B, MoE-0.8B}.

% \begin{figure}[!htp]
%     \centering
%     \includegraphics[width=0.25\linewidth]{figures/unified_strategy/dense_0.75B_adamw.pdf}
%     \includegraphics[width=0.25\linewidth]{figures/unified_strategy/moe_0.8B_adamw.pdf}
%     \caption{Results for Dense-0.75B; MoE-0.8B, complementary to Figures~\ref{fig: dense: unified strategy} and~\ref{fig: moe: unified strategy}.}
%     \label{fig: appendix: dense-0.75B, MoE-0.8B}
% \end{figure}

\subsubsection{Section~\ref{subsection: muon and wsd}}
\label{appendix: subsection: muon and wsd}

\textbf{Muon optimizer.}
We follow the Muon setup of~\citet{jordan2024muon}: Muon is applied only to 2D
matrix blocks in Transformer layers, while AdamW is used for all other parameters,
including scale vectors, embedding layer, and the
output layer. 
Additionally, following~\citet{liu2025muon}, we further use: (i) per-parameter update scaling, with learning-rate multiplier $c=0.2\sqrt{\max\{m,n\}}$ for Muon blocks of shape $\mathbb{R}^{m\times n}$, so that the update RMS norm matches that of AdamW; (ii) Nesterov momentum with coefficient $\theta_{\text{muon}}=0.95$; (iii) weight decay $\lambda=0.1$. 
For Dense-0.12B and MoE-0.5B, we use the batch sizes reported in
Tables~\ref{table: dense model config and max lrs} and
\ref{table: moe model config and max lrs}. To ensure strong baselines, we tune the
peak lr for Muon. The selected \texttt{lr\_max} is \texttt{5e-3} for
Dense-0.12B and \texttt{4e-3} for MoE-0.5B, both of which are larger than the
corresponding tuned values for AdamW.

\textbf{Wsd schedule.}
For AdamW with the \texttt{wsd} schedule~\citep{zhai2022scaling,hu2024minicpm}, we use a linear warmup to the peak learning
rate \texttt{lr\_max}, followed by a stable phase in which the learning rate remains
at \texttt{lr\_max} until $80\%$ of the total training steps, and finally a linear
decay to zero. For fairness, we tune the peak learning rate under the \texttt{wsd}
schedule. The selected \texttt{lr\_max} is \texttt{3e-3} for Dense-0.12B and
\texttt{2e-3} for MoE-0.5B.

\subsubsection{Section~\ref{subsection: validate proposed method} and~\ref{subsection: ablation}}
\label{appendix: subsection: validate proposed method}

The baseline model in Figure~\ref{fig: validation of each strategy} uses the same training configuration as Llama-0.12B in
Appendix~\ref{appendix: subsection: main results of unified strategy}.

For the model-size ablation (Figure~\ref{fig: ablation model size}), we construct a larger Llama variant based on Llama-0.12B.
Directly increasing the hidden dimension would require the dimension to remain compatible with both RoPE and the attention heads, leading to a parameter count much larger than the target. 
Therefore, we instead increase the FFN hidden dimension from $d_{\rm FFN}$ to $d_{\rm FFN}+8$, which closely matches the target parameter count.
We retune the learning rate for this larger model, and the selected value remains \texttt{4e-3}.

The experiments in Table~\ref{tab: overhead} use the same training configuration as Llama-1B in Appendix~\ref{appendix: subsection: main results of unified strategy}.

\subsection{Experimental details for Section~\ref{section: understanding} and~\ref{section: improving}}

\subsubsection{Section~\ref{subsection: understanding necessity}}
\label{appendix: subsection: understanding necessity}

The baseline model (w/ scale vectors) in Figure~\ref{fig: w/o w/ scale vectors} uses the same training configuration as Llama-0.12B in Appendix~\ref{appendix: subsection: main results of unified strategy}.

The comparison model (w/o scale vectors) in Figure~\ref{fig: w/o w/ scale vectors} is obtained by removing all scale vectors from Llama-0.12B. 
For the peak lr \texttt{max\_lr}, the left subfigure uses the same value as the baseline, \texttt{4e-3}, whereas the right subfigure retunes this value and selects \texttt{2e-3}. All other settings follow
Appendix~\ref{appendix: subsection: main results of unified strategy}.

The longer-training experiments only change the number of training steps. Specifically, the model w/o scale vectors is trained for $1.4\times$ more steps in the left subfigure and $1.2\times$ more steps in the right subfigure.

\subsubsection{Section~\ref{subsection: understanding weight decay}}
\label{appendix: subsection: understanding weight decay}

For Figure~\ref{fig: understanding: weight decay}, we train Gemma-0.5B
models~\citep{gemmateam2025gemma3technicalreport} on 
the same pre-training corpus as in~\ref{section: experiments}
% FineWeb-Edu 
for approximately 10B and 50B tokens using the Muon optimizer~\citep{jordan2024muon}.

\begin{itemize}

    \item \textbf{Model.}
    Gemma-0.5B uses the same $d_{\rm model}$, $d_{\rm FFN}$, $n_{\rm head}$, and $n_{\rm layer}$ as Llama-0.5B in Appendix~\ref{appendix: subsection: main results of unified strategy}. 
    However, its normalization layers differ from those of Llama. We compare the two architectures in Figure~\ref{fig: llama, gemma}.
    
    \item \textbf{Training.} 
    The Muon baseline follows the setup in Appendix~\ref{appendix: subsection: muon and wsd}. 
    We use the same batch size as Llama-0.5B in Table~\ref{table: dense model config and max lrs}. 
    We train for 5,000 steps, corresponding to approximately 10B tokens, denoted by 10BT, and for 25,000 steps, corresponding to approximately 50B tokens, denoted by 50BT.
    We first tune the peak lr for the baseline setting, which applies weight decay to all scale vectors, under the 50BT training budget. 
    The selected \texttt{max\_lr} is \texttt{4e-3}, which is then used for all other experiments.    
\end{itemize}

\subsubsection{Experimental details for Section~\ref{subsection: heterogeneity}}
\label{appendix: subsection: heterogeneity}

The model in Figure~\ref{fig: HG: qkv dynamics} uses the same training configuration as Llama-0.12B in Appendix~\ref{appendix: subsection: main results of unified strategy}.

\vspace{1.cm}

\section{Proofs in Section~\ref{section: understanding}}
\label{appendix: proof of understanding}

\subsection{Proofs in Section~\ref{subsection: understanding necessity}}
\label{appendix: proof of understanding necessity}

\begin{proof}[Proof of Theorem~\ref{thm: optimization advantage w/ scale vector}]
For clarity, we denote
\begin{align*}
    \bW:=\bW_g,\quad\bU:=\bW_f.
\end{align*}

Let $\bz=\Norm(\bx)=\sqrt d\,\frac{\bx}{\|\bx\|_2}$.
Since $\bx\sim \mathcal N(\bzero,\bI_d)$ is isotropic, $\bz$ is uniformly
distributed on the sphere of radius $\sqrt d$. Hence
$\mathbb E[\bz\bz^\top]=\bI_d$.
Therefore,
\[
    \mathcal L_g(\bW)
    =
    \frac12\|\bW-\bW^\star\|_F^2.
\]
Similarly, writing
\[  f(\bx;\bU,\bgamma)=\bU\bD_{\bgamma}\bz,\qquad\bD_{\bgamma}=\operatorname{diag}(\bgamma)\in\bbR^{d\times d},
\]
we have
\[
    \mathcal L_f(\bU,\bgamma)
    =
    \frac12\|\bU\bD_{\bgamma}-\bW^\star\|_F^2.
\]

\textit{Step I: dynamics of the model without scale vectors.}
Since
\[
    \mathcal L_g(\bW)=\frac12\|\bW-\bW^\star\|_F^2,
\]
its gradient flow is
\[
    \frac{\rd\bW}{\rd t}=\bW^\star-\bW.
\]
With $\bW(0)=\bzero$, this gives
\[
    \bW(t)=(1-e^{-t})\bW^\star.
\]
Thus
\[
    \mathcal L_g(\bW(t))
    =
    \frac12 e^{-2t}\|\bW^\star\|_F^2.
\]

\textit{Step II: dynamics of the model with scale vectors.} Write the columns of $\bU$ and
$\bW^\star$ as
\[
    \bU=[\bu_1,\dots,\bu_d],
    \qquad
    \bW^\star=[\bw_1^\star,\dots,\bw_d^\star],
\]
where $\bu_j,\bw_j^\star\in\mathbb R^c$. Then
\[
    \bU\bD_{\bgamma}
    =
    [\gamma_1\bu_1,\dots,\gamma_d\bu_d],
\]
and the loss decomposes as
\[
    \mathcal L_f(\bU,\bgamma)
    =
    \frac12\sum_{j=1}^d
    \|\gamma_j\bu_j-\bw_j^\star\|_2^2.
\]
For each coordinate $j$, define
\[
    \br_j=\bw_j^\star-\gamma_j\bu_j,
    \qquad
    \ell_j=\frac12\|\br_j\|_2^2.
\]
Then
\[
    \mathcal L_f(\bU,\bgamma)=\sum_{j=1}^d \ell_j.
\]

The gradient-flow equations for the $j$-th column are
\[
    \frac{\rd\bu_j}{\rd t}=\gamma_j\br_j,
    \qquad
    \frac{\rd\gamma_j}{\rd t}=\bu_j^\top \br_j.
\]
Therefore,
\[
    \frac{\rd}{\rd t}(\gamma_j\bu_j)
    =
    \frac{\rd\gamma_j}{\rd t}\bu_j+\gamma_j\frac{\rd\bu_j}{\rd t}
    =
    \bu_j\bu_j^\top \br_j+\gamma_j^2\br_j.
\]
Equivalently,
\[
    \frac{\rd\br_j}{\rd t}=-(\gamma_j^2\bI_c+\bu_j\bu_j^\top)\br_j.
\]
Hence
\[
    \frac{\rd\ell_j}{\rd t}
    =
    \br_j^\top\frac{\rd\br_j}{\rd t}
    =
    -\gamma_j^2\|\br_j\|_2^2
    -
    (\bu_j^\top\br_j)^2
    =
    -2\gamma_j^2\ell_j-(\bu_j^\top\br_j)^2.
\]

We also have the conservation law
\[
    \frac{\rd}{\rd t}
    \left(
        \gamma_j^2-\|\bu_j\|_2^2
    \right)
    =
    2\gamma_j\frac{\rd\gamma_j}{\rd t}-2\bu_j^\top\frac{\rd\bu_j}{\rd t}
    =
    2\gamma_j\bu_j^\top\br_j
    -
    2\bu_j^\top(\gamma_j\br_j)
    =
    0.
\]
Since $\gamma_j(0)=1$ and $\bu_j(0)=\bzero$, it follows that
\[
    \gamma_j(t)^2-\|\bu_j(t)\|_2^2=1.
\]
Thus
\begin{equation}\label{eq: proof: thm: w/ SV: conservation}
    \gamma_j(t)^2=1+\|\bu_j(t)\|_2^2\ge 1.
\end{equation}
Consequently,
\[
    \frac{\rd\ell_j}{\rd t}
    =
    -2\gamma_j^2\ell_j-(\bu_j^\top\br_j)^2
    \le
    -2\ell_j.
\]
Equivalently,
\begin{equation}\label{eq: proof: thm: w/ SV: core}
    \frac{\rd}{\rd t}
    \left(
        e^{2t}\ell_j(t)
    \right)
    =
    e^{2t}
    \left(
        \frac{\rd}{\rd t}\ell_j(t)+2\ell_j(t)
    \right)
    =
    -e^{2t}
    \left(
        2(\gamma_j(t)^2-1)\ell_j(t)
        +
        (\bu_j(t)^\top\br_j(t))^2
    \right)
    \le 0.
\end{equation}
Therefore,
\[
    \ell_j(t)\le e^{-2t}\ell_j(0).
\]

\textit{Step III: comparison of the errors.}
We now prove strict inequality whenever $\bw_j^\star\neq \bzero$. For such a
column,
\[
    \bu_j(0)=\bzero,
    \qquad
    \br_j(0)=\bw_j^\star,
\]
and hence
\[
    \frac{\rd}{\rd t}\bu_j(0)
    =
    \gamma_j(0)\br_j(0)
    =
    \bw_j^\star.
\]
Thus, for all sufficiently small $s>0$,
\[
    \bu_j(s)=s\bw_j^\star+\cO(s^2),
    \qquad
    \br_j(s)=\bw_j^\star+\cO(s).
\]
Therefore, for all sufficiently small $s>0$,
\[
    \bu_j(s)^\top\br_j(s)
    =
    s\|\bw_j^\star\|_2^2+\cO(s^2)>0.
\]
Recalling~\eqref{eq: proof: thm: w/ SV: conservation}, $\gamma_j(s)^2\geq1$. Therefore, the integrand
\[2(\gamma_j(s)^2-1)\ell_j(s)+(\bu_j(s)^\top\br_j(s))^2\]
in~\eqref{eq: proof: thm: w/ SV: core} is strictly positive on a nontrivial interval contained in $(0,t)$ for every
$t>0$. Consequently,
\[
    \ell_j(t)<e^{-2t}\ell_j(0),
    \qquad \forall t>0.
\]
For the model $g$, the corresponding column-wise loss is exactly
\[
    \ell_{g,j}(t)
    =
    \frac12 e^{-2t}\|\bw_j^\star\|_2^2
    =
    e^{-2t}\ell_j(0).
\]
Hence
\[
    \ell_j(t)\le \ell_{g,j}(t)
\]
for every $j$, and the inequality is strict for every column satisfying
$\bw_j^\star\neq \bzero$. Since $\bW^\star\neq \bzero$, at least one such
column exists. Summing over all columns gives
\[
    \mathcal L_f(\bU(t),\bgamma(t))
    <
    \mathcal L_g(\bW(t)),
    \qquad \forall t>0.
\]
This proves the theorem.
\end{proof}

\textbf{Comparison with}~\citet{arora2018optimization}. 
A related line of work studies implicit acceleration by over-parameterization~\citep{arora2018optimization}. It shows that deep linear networks (DLNs), \( \bW_L\cdots\bW_1\bz \), can accelerate optimization by inducing an implicit preconditioner, without increasing the expressivity of standard linear model. Our setting and results differs in three key aspects. 
\begin{itemize}
    \item \textbf{Parameter overhead.} 
    Scale vectors introduce substantially smaller overhead. A DLN $f_{\text{DLN}}(\bz;\btheta)=\bW_L\cdots\bW_1\bz$ replaces a single $d_0\cdot d_L$ linear map $\bW\bz$ with
    $\sum_{\ell=1}^{L} d_{\ell-1}\cdot d_\ell$ matrix parameters. 
    In contrast, adding a standard input-side scale vector, $\bW(\bgamma\odot\bz)$, introduces only $d_0$ additional parameters, which is negligible relative to the $d_0\cdot d_L$ matrix parameters in linear map. It is therefore notable that such a lightweight parameterization can still improve optimization.

    \item \textbf{Preconditioner structure.}
    The induced preconditioners have different structures. DLNs induce a global spectral preconditioner aligned with the singular directions of the matrix. 
    In contrast, scale vectors induce a channel-wise preconditioner: each input channel has its own preconditioner, which acts directly on the corresponding dynamics. 
     Comparing the regimes in which these two preconditioners are preferable is an interesting direction for future work.

    \item \textbf{Initialization and early-stage dynamics.}
    The two parameterizations also differ in their initialization behavior. Scale vectors are typically initialized as $\bone$, so adding them preserves the initial function. By contrast, DLNs introduce additional matrix factors; under standard small random initialization, the initial end-to-end differs from the baseline unless explicitly matched, and DLNs can be slower than the linear baseline model at early phase, as shown in~\citep{arora2018optimization}. 
    In contrast, scale vectors can accelerate training from the beginning, whereas DLN-induced acceleration may emerge only after the end-to-end matrix has grown sufficiently.
    
\end{itemize}

\subsection{Proofs in Section \ref{subsection: understanding weight decay}}

\subsubsection{Proof of Theorem~\ref{thm: wd, pre-norm}}
\label{appendix:wd-pf}

Note that
\[
    \mathbb{E}[\Norm(\bx)\Norm(\bx)^\top]=\bI_d.
\]
For notational convenience and without loss of generality, we consider the
problem
\begin{equation}
    \cL(\bw,\bgamma)=L(\ba)
    =
    \frac{1}{2}\|\ba-\ba_\star\|_2^2,
    \quad
    \ba_t=\bgamma_t\odot\bw_t\in\mathbb{R}^d,
    \quad\ba_\star=\bw^\star.
\end{equation}
Here \(\ba_t\) is the effective parameter, while \(\bw_t\) and \(\bgamma_t\)
are the two factors in the scale-vector parameterization.

\textbf{Stochastic differential equation (SDE).} The continuous-time modeling of Stochastic gradient descent (SGD) can be written as follows~\citep{li2017stochastic}.
Let $q=\eta\sigma^2$.
Then the SDE is
\begin{align}
    \rd \bw_t
    &=
    -\big(\bgamma_t\odot \bg_t+\lambda \bw_t\big)\rd t
    +
    \sqrt q\,\rd \bB_t^{\bw},
    \label{eq:sde-w}\\
    \rd \bgamma_t
    &=
    -\big(\bw_t\odot \bg_t+\mu\bgamma_t\big)\rd t
    +
    \sqrt q\,\rd \bB_t^{\bgamma},
    \label{eq:sde-gamma}
\end{align}
where
\[
    \bg_t=\nabla_{\ba}L(\ba_t)=\ba_t-\ba_\star,
\]
and $\rd\bB_t^{\bw},\rd\bB_t^{\bgamma}$ denote independent standard Brownian
motions in \(\mathbb R^d\). 
% The parameters are initialized as
% \[
%     \bw_t|_{t=0}=\bw_0,
%     \qquad
%     \bgamma_0=\bone_d.
% \]
% We mainly consider two initialization methods for \(\bw\):
% \[
%     \bw_0=\bzero_d
%     \qquad
%     \text{and}
%     \qquad
%     \bw_0\sim \mathcal N(\bzero,\varsigma^2\bI_d).
% \]

By defining the error vector
\[
    \bepsilon_t=\ba_t-\ba_\star,
\]
the loss becomes
\[
    L_t=\frac12\|\bepsilon_t\|_2^2.
\]

\textbf{Hessian sharpness metrics of the loss.}
% Recall that the effective parameter is
% \[
%     \ba=\bw\odot\bgamma,
% \]
% and the loss is
% \[
%     \mathcal L(\bw,\bgamma)
%     =
%     \frac12\|\ba-\ba_*\|_2^2
%     =
%     \frac12\|\bw\odot\bgamma-\ba_*\|_2^2 .
% \]
Let
\[
    \bg=\nabla_{\ba} L(\ba)=\ba-\ba_\star,
\]
and define
\[
    \bD_{\bw}=\operatorname{diag}(\bw),
    \qquad
    \bD_{\bgamma}=\operatorname{diag}(\bgamma),
    \qquad
    \bD_{\bg}=\operatorname{diag}(\bg).
\]
Then the Hessian of \(\mathcal L\) with respect to the parameters
\((\bw,\bgamma)\) is
\begin{equation}
    \nabla^2_{(\bw,\bgamma)}\mathcal L
    =
    \begin{pmatrix}
        \bD_{\bgamma}^2
        &
        \bD_{\bgamma}\bD_{\bw}+\bD_{\bg}
        \\
        \bD_{\bw}\bD_{\bgamma}+\bD_{\bg}
        &
        \bD_{\bw}^2
    \end{pmatrix}.
    \label{eq:hessian-loss-block}
\end{equation}
Since
\[
    \bD_{\bgamma}\bD_{\bw}+\bD_{\bg}
    =
    \operatorname{diag}(2\bgamma\odot\bw-\ba_*),
\]
the Hessian is block diagonal across coordinates. The \(i\)-th block is
\begin{equation}
    \bH_i
    =
    \begin{pmatrix}
        \gamma_i^2
        &
        2\gamma_iw_i-a_{*,i}
        \\
        2\gamma_iw_i-a_{*,i}
        &
        w_i^2
    \end{pmatrix}.
    \label{eq:hessian-block-i}
\end{equation}
Therefore,
\[
    \nabla^2_{(\bw,\bgamma)}\mathcal L
    =
    \operatorname{diag}(\bH_1,\ldots,\bH_d).
\]

The two eigenvalues of \(\bH_i\) are
\begin{equation}
    \lambda_{i,\pm}
    =
    \frac12
    \left[
        \gamma_i^2+w_i^2
        \pm
        \sqrt{
            (\gamma_i^2-w_i^2)^2
            +
            4(2\gamma_iw_i-a_{*,i})^2
        }
    \right].
    \label{eq:hessian-block-eigs}
\end{equation}
Consequently,
\begin{equation}
    \lambda_{\max}
    \left(
        \nabla^2\mathcal L(\bw,\bgamma)
    \right)
    =
    \max_{1\le i\le d}\lambda_{i,+}.
    \label{eq:hessian-maxeig}
\end{equation}
% and
% \begin{equation}
%     \lambda_{\min}
%     \left(
%         \nabla^2\mathcal L(\bw,\bgamma)
%     \right)
%     =
%     \min_{1\le i\le d}\lambda_{i,-}.
%     \label{eq:hessian-mineig}
% \end{equation}

The trace is
\begin{equation}
    \operatorname{Tr}
    \left(
        \nabla^2\mathcal L(\bw,\bgamma)
    \right)
    =
    \sum_{i=1}^d
    \left(
        \gamma_i^2+w_i^2
    \right)
    =
    \|\bgamma\|_2^2+\|\bw\|_2^2.
    \label{eq:hessian-trace}
\end{equation}
The squared Frobenius norm is
\begin{equation}
    \left\|
        \nabla^2\mathcal L(\bw,\bgamma)
    \right\|_{\rm F}^2
    =
    \sum_{i=1}^d
    \left[
        \gamma_i^4+w_i^4
        +
        2(2\gamma_iw_i-a_{*,i})^2
    \right].
    \label{eq:hessian-fro}
\end{equation}

Then the formal version of Theorem~\ref{thm: wd, pre-norm} is as follows:

\begin{theorem}[Norm and sharpness Comparison]
\label{thm: complete: weight decay}
Consider \(\lambda>0\). The trajectory of
\eqref{eq:sde-w}\eqref{eq:sde-gamma} satisfies $\sup_{t\ge0}\mathbb E\|\bw_t\|^2<\infty$. Moreover,
\begin{itemize}
\item If \(\mu>0\), then
$\sup_{t\ge0}\mathbb E\|\bgamma_t\|_2^2<\infty$. Consequently, $\sup_{t\ge0}\mathbb E[\lambda_{\max}(\nabla^2\cL(t_k))]<\infty,\ 
\sup_{t\ge0}\mathbb E[\Tr(\nabla^2\cL(t_k))]<\infty,\ 
\sup_{t\ge0}\mathbb E\|\nabla^2\cL(t_k)\|_{\rm F}<\infty$.
\item In contrast, if \(\mu=0\), then
$\sup_{t\ge0}\mathbb E\|\bgamma_t\|^2=\infty$.
Therefore, there exists a sequence \(t_k\to\infty\) such that $\mathbb E\|\bgamma_{t_k}\|^2\to\infty$. 
Consequently, $
    \bbE[\lambda_{\max}(\nabla^2\mathcal L(t_k))]\to\infty,\ 
    \bbE[\Tr(\nabla^2\mathcal L(t_k))]\to\infty,\ 
    \bbE\|\nabla^2\mathcal L(t_k)\|_{\rm F}\to\infty.$
\end{itemize}

\end{theorem}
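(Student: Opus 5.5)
The plan is to work coordinate-by-coordinate, since both the loss and the SDE \eqref{eq:sde-w}--\eqref{eq:sde-gamma} decouple across $i$. Write $a_i=\gamma_iw_i$ and $g_i=a_i-a_{\star,i}$. Itô's formula, using the independence of $\bB^{\bw}$ and $\bB^{\bgamma}$, gives
\begin{align*}
\rd(w_i^2)&=\big(-2a_ig_i-2\lambda w_i^2+q\big)\rd t+2\sqrt q\,w_i\,\rd B^{w}_{i},\\
\rd(\gamma_i^2)&=\big(-2a_ig_i-2\mu\gamma_i^2+q\big)\rd t+2\sqrt q\,\gamma_i\,\rd B^{\gamma}_{i}.
\end{align*}
The key elementary inequality is $a_ig_i=a_i^2-a_ia_{\star,i}\ge -a_{\star,i}^2/4$. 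Before taking expectations I would localize with stopping times $\tau_n=\inf\{t:\|\bw_t\|+\|\bgamma_t\|\ge n\}$ and then pass to the limit by Fatou. This justifies dropping the martingale terms and yields
\[
\tfrac{\rd}{\rd t}\bbE w_i^2\le \tfrac12a_{\star,i}^2+q-2\lambda\,\bbE w_i^2 .
\]
Grönwall then gives $\sup_t\bbE\|\bw_t\|^2<\infty$. The same argument with $\mu>0$ gives $\sup_t\bbE\|\bgamma_t\|^2<\infty$.

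Next I would translate these norm bounds into sharpness bounds using the block form \eqref{eq:hessian-block-i}. For the upper bounds, note that $\Tr(\nabla^2\cL)=\|\bgamma\|^2+\|\bw\|^2$ by \eqref{eq:hessian-trace}. In addition, $\lambda_{\max}\le\|\nabla^2\cL\|_{\rm F}\le\sum_i\big(\gamma_i^2+w_i^2+\sqrt2\,|2\gamma_iw_i-a_{\star,i}|\big)\le C\big(\|\bgamma\|^2+\|\bw\|^2+\|\ba_\star\|_1\big)$, using $2|\gamma_iw_i|\le\gamma_i^2+w_i^2$. So all three expectations are uniformly bounded when $\mu>0$. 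For the lower bounds I would use \eqref{eq:hessian-block-eigs}: $\lambda_{i,+}\ge\frac12(\gamma_i^2+w_i^2)$, hence $\lambda_{\max}\ge\frac1{2d}\Tr\ge\frac1{2d}\|\bgamma\|^2$, and $\|\nabla^2\cL\|_{\rm F}\ge\lambda_{\max}$. Therefore any sequence with $\bbE\|\bgamma_{t_k}\|^2\to\infty$ forces all three metrics to diverge in expectation. The existence of such a sequence is immediate from $\sup_t\bbE\|\bgamma_t\|^2=\infty$.

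For $\mu=0$ I would use the conserved-quantity analogue of \eqref{eq: proof: thm: w/ SV: conservation}. Subtracting the two Itô formulas, both the $-2a_ig_i$ terms and the noise drift $q$ cancel, leaving
\[
\rd(\gamma_i^2-w_i^2)=2\lambda w_i^2\,\rd t+\text{martingale}.
\]
Hence
\[
\bbE\gamma_i^2(t)=\bbE w_i^2(t)+1-w_i(0)^2+2\lambda\int_0^t\bbE w_i^2\,\rd s .
\]
It therefore suffices to show $\int_0^\infty\bbE w_i^2\,\rd s=\infty$. I would argue by contradiction. Suppose $\sup_t\bbE\gamma_i^2\le M$. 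Then $\int_0^\infty\bbE w_i^2<\infty$. At the same time, the $\gamma$-equation with $\mu=0$ gives
\[
2\int_0^t\bbE[a_i^2-a_ia_{\star,i}]\,\rd s=\gamma_i(0)^2+qt-\bbE\gamma_i^2(t)\ge qt-M,
\]
so $\int_0^t\bbE a_i^2$ must grow linearly in $t$. In other words, the noise keeps pumping energy into $\gamma$, and with $\gamma$ bounded in mean this energy must be dissipated through $a_i^2=\gamma_i^2w_i^2$. That is incompatible with $w_i$ being square-integrable in time, provided the product can be controlled.

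This last step is the main obstacle: relating $\bbE[\gamma_i^2w_i^2]$ to $\bbE w_i^2$ requires more than second moments. I would first try to establish a uniform fourth-moment bound on $\gamma_i$ under the contradiction hypothesis, by applying Itô to $\gamma_i^4$ and $\gamma_i^2w_i^2$ and reusing $a_ig_i\ge-a_{\star,i}^2/4$. If that fails, my fallback is to work directly with the effective parameter. Itô gives $\rd a_i=-(\gamma_i^2+w_i^2)g_i\,\rd t-\lambda a_i\,\rd t+\sqrt q(\gamma_i\rd B^w_i+w_i\rd B^\gamma_i)$, whose quadratic variation $q(\gamma_i^2+w_i^2)$ injects at least $q\,\bbE\gamma_i^2$ into $\bbE a_i^2$. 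I would then combine this with the identity above to close the contradiction.
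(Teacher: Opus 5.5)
\textbf{The gap is in the $\mu=0$ case.} Your bound on $\sup_t\bbE\|\bw_t\|^2$, the case $\mu>0$, and the translation into sharpness bounds are correct. They match the paper's argument (It\^o's formula, $a_ig_i\ge -a_{\star,i}^2/4$, Gr\"onwall), and your localization and two-sided Hessian comparisons are more careful than the paper's ``follow naturally''. For $\mu=0$, your reduction via $\rd(\gamma_i^2-w_i^2)=2\lambda w_i^2\,\rd t+\text{martingale}$ to showing $\int_0^\infty\bbE w_i^2\,\rd s=\infty$ is also sound.

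The final step, which you flag yourself, is not closed, and neither fallback closes it. Second-moment information alone gives no contradiction. The three facts $\sup_t\bbE\gamma_i^2\le M$, $\int_0^\infty\bbE w_i^2<\infty$ and $\int_0^t\bbE[\gamma_i^2w_i^2]\,\rd s\ge ct$ are consistent for heavy-tailed laws. For example, take an event of probability $p_t$, with $\int p_t\,\rd t<\infty$, on which $\gamma_i^2\approx M/p_t$ and $w_i^2\approx 1/M$.

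The fourth-moment route is circular. It\^o gives only $\frac{\rd}{\rd t}\bbE\gamma_i^4\le(a_{\star,i}^2+6q)\bbE\gamma_i^2$, i.e.\ linear growth. A uniform bound would need the dissipation $4\bbE[\gamma_i^4w_i^2]$ to be large, which is a lower bound on $w_i$ of exactly the kind you are trying to prove. The $a_i$-route produces the sixth-order term $\bbE[(\gamma_i^2+w_i^2)a_ig_i]$ and has the same closure problem.

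\textbf{What is missing.} You need a dynamical use of the fact that $w_i$ carries nondegenerate additive noise, so it cannot stay near $0$ while $\gamma_i$ is typically $\cO(1)$. The paper does this via Krylov--Bogoliubov. The uniform second-moment bounds make the occupation measures tight, which gives an invariant law $\pi$. Then $\int\mathcal A_0(\gamma_i^2-w_i^2)\,\rd\pi=2\lambda\int w_i^2\,\rd\pi=0$ forces $\bw=\bzero$ $\pi$-a.s., and $\mathcal A_0\|\bw\|^2=dq$ on that set contradicts invariance.

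You can also stay inside your moment framework by using the test function $\Phi=w_i^2/(1+\gamma_i^2)$. The denominator tames the $\gamma_i^2$ in the $w$-drift, and the $w_i^4$ term, $\frac{2\gamma_i^2}{(1+\gamma_i^2)^2}w_i^4$, has a favorable sign. One gets
\[
\mathcal A_0\Phi\ge \frac{q}{1+\gamma_i^2}-C\big(w_i^2+|w_i|+|w_i|^3\big).
\]
Three ingredients then finish the argument:
\begin{itemize}
\item $\sup_t\bbE w_i^4<\infty$, from It\^o on $w_i^4$ with the same $a_ig_i\ge -a_{\star,i}^2/4$ trick and $\lambda>0$;
\item Jensen's inequality, $\bbE(1+\gamma_i^2)^{-1}\ge(1+M)^{-1}$;
\item integration in time, which gives $\bbE\Phi_t\ge qt/(1+M)-CI-C\sqrt{tI}$ with $I=\int_0^\infty\bbE w_i^2\,\rd s$.
\end{itemize}
Since $\Phi\le w_i^2$ and $\sup_t\bbE w_i^2<\infty$, the lower bound $qt/(1+M)-CI-C\sqrt{tI}$ gives a contradiction.
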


\begin{proof}
Consider the coordinate-wise scale difference
\[
    s_{t,i}=\gamma_{t,i}^2-w_{t,i}^2.
\]
By It\^o's formula, we have
\begin{equation}
    \rd s_{t,i}
    =
    2\big(\lambda w_{t,i}^2-\mu \gamma_{t,i}^2\big)\rd t
    +
    2\sqrt q
    \big(
        \gamma_{t,i}\rd B_{t,i}^{\bgamma}
        -
        w_{t,i}\rd B_{t,i}^{\bw}
    \big).
    \label{eq:scale-diff-sde}
\end{equation}
Notice that the It\^o correction terms cancel because \(w_{t,i}\) and
\(\gamma_{t,i}\) are both one-dimensional coordinates with the same noise
variance \(q\).

We first show that \(\mathbb E\|\bw_t\|_2^2\) is bounded over \(t\ge0\).
Using It\^o's formula,
\begin{align}
    \frac{\rd}{\rd t}\mathbb E\|\bw_t\|_2^2
    &=
    -2\lambda \mathbb E\|\bw_t\|_2^2
    -
    2\mathbb E
    \left[
        (\bgamma_t\odot\bw_t)^\top
        (\ba_t-\ba_\star)
    \right]
    +
    dq \notag\\
    &=
    -2\lambda \mathbb E\|\bw_t\|_2^2
    -
    2\mathbb E
    \left[
        \ba_t^\top(\ba_t-\ba_\star)
    \right]
    +
    dq.
\end{align}
For any \(\ba\), we have
\[
    -2\ba^\top(\ba-\ba_\star)
    =
    -2\|\ba\|_2^2+2\ba^\top\ba_\star
    \le
    \frac12\|\ba_\star\|_2^2.
\]
Therefore,
\begin{equation}
    \frac{\rd}{\rd t}\mathbb E\|\bw_t\|_2^2
    \le
    -2\lambda \mathbb E\|\bw_t\|_2^2
    +
    \frac12\|\ba_\star\|_2^2
    +
    dq.
\end{equation}
By Gr\"onwall's inequality,
\begin{equation}
    \mathbb E\|\bw_t\|_2^2
    \le
    e^{-2\lambda t}\mathbb E\|\bw_0\|_2^2
    +
    \frac{
        dq+\frac12\|\ba_\star\|_2^2
    }{
        2\lambda
    }
    \left(1-e^{-2\lambda t}\right).
    \label{eq:gr-bdw}
\end{equation}
Thus \(\mathbb E\|\bw_t\|_2^2\) is bounded uniformly over \(t\ge0\).

\begin{itemize}
    \item \textbf{Without weight decay on \(\bgamma\) \((\mu=0)\):}

    Taking expectation in \eqref{eq:scale-diff-sde}, we get
    \[
        \frac{\rd}{\rd t}\mathbb E[s_{t,i}]
        =
        2\lambda \mathbb E[w_{t,i}^2]
        \ge 0.
    \]
    Suppose, for contradiction, that there exists \(M>0\) such that
    \[
        \sup_{t\ge0}\mathbb E\|\bgamma_t\|_2^2\le M.
    \]
    Combining this with
    \[
        \sup_{t\ge0}\mathbb E\|\bw_t\|_2^2<\infty,
    \]
    the occupation measures
    \[
        \nu_T(A)
        =
        \frac1T\int_0^T
        \mathbb P\big((\bw_t,\bgamma_t)\in A\big)\rd t
    \]
    are uniformly tight. By the Krylov--Bogoliubov theorem, there exists an
    invariant probability measure \(\pi\).

    Let \(\mathcal A_0\) denote the generator of the SDE when \(\mu=0\).
    From \eqref{eq:scale-diff-sde},
    \[
        \mathcal A_0(\gamma_i^2-w_i^2)
        =
        2\lambda w_i^2.
    \]
    Since \(\pi\) is invariant,
    \begin{equation}
        0
        =
        \int
        \mathcal A_0(\gamma_i^2-w_i^2)\rd\pi
        =
        2\lambda\int w_i^2\rd\pi.
    \end{equation}
    Since \(\lambda>0\), this implies
    \[
        w_i=0
        \quad
        \pi\text{-a.s.}
    \]
    for every \(i\). Hence
    \[
        \bw=\bzero
        \quad
        \pi\text{-a.s.}
    \]

    Now apply the generator to \(\|\bw\|_2^2\). We have
    \[
        \mathcal A_0\|\bw\|_2^2
        =
        dq
        -
        2\ba^\top(\ba-\ba_\star)
        -
        2\lambda\|\bw\|_2^2.
    \]
    Since \(\bw=\bzero\) \(\pi\)-a.s., we also have
    \[
        \ba=\bgamma\odot\bw=\bzero
        \quad
        \pi\text{-a.s.}
    \]
    Therefore,
    \[
        \mathcal A_0\|\bw\|_2^2=dq
        \quad
        \pi\text{-a.s.}
    \]
    Using invariance again,
    \[
        0
        =
        \int \mathcal A_0\|\bw\|_2^2\rd\pi
        =
        dq,
    \]
    which contradicts \(q>0\). Thus
    \[
        \sup_{t\ge0}\mathbb E\|\bgamma_t\|_2^2=\infty.
    \]

    \item \textbf{With weight decay on \(\bgamma\) \((\mu>0)\):}

    By It\^o's formula,
    \begin{align}
        \frac{\rd}{\rd t}\mathbb E\|\bgamma_t\|_2^2
        &=
        -2\mu\mathbb E\|\bgamma_t\|_2^2
        -
        2\mathbb E
        \left[
            (\bgamma_t\odot\bw_t)^\top
            (\ba_t-\ba_\star)
        \right]
        +
        dq \notag\\
        &=
        -2\mu\mathbb E\|\bgamma_t\|_2^2
        -
        2\mathbb E
        \left[
            \ba_t^\top(\ba_t-\ba_\star)
        \right]
        +
        dq.
    \end{align}
    Using again
    \[
        -2\ba^\top(\ba-\ba_\star)
        \le
        \frac12\|\ba_\star\|_2^2,
    \]
    we obtain
    \[
        \frac{\rd}{\rd t}\mathbb E\|\bgamma_t\|_2^2
        \le
        -2\mu\mathbb E\|\bgamma_t\|_2^2
        +
        \frac12\|\ba_\star\|_2^2
        +
        dq.
    \]
    Hence, by Gr\"onwall's inequality,
    \[
        \mathbb E\|\bgamma_t\|_2^2
        \le
        e^{-2\mu t}\mathbb E\|\bgamma_0\|_2^2
        +
        \frac{
            dq+\frac12\|\ba_\star\|_2^2
        }{
            2\mu
        }
        \left(1-e^{-2\mu t}\right).
    \]
    Therefore,
    \[
        \sup_{t\ge0}\mathbb E\|\bgamma_t\|_2^2<\infty.
    \]
\end{itemize}

Finally, the estimates of Hessian statistics ($\lambda_{\max}(\nabla^2\cL),\Tr(\nabla^2\cL),\|\nabla^2\cL\|_{\rm F}$) follow~\eqref{eq:hessian-maxeig}~\eqref{eq:hessian-trace}~\eqref{eq:hessian-fro} naturally.
\end{proof}

This conclusion is related to that of~\citet{ziyin2024parameter}, who show that weight decay can induce balanced solutions in deep linear networks under stochastic dynamics. The key difference is that their analysis applies the same weight decay to all parameters, whereas our setting distinguishes between the two parameters, allowing us to isolate the effect of applying or omitting weight decay on \(\bgamma\).

\subsubsection{How Hessian sharpness influences loss descent}
\label{appendix:wd-hessian}

We illustrate how Hessian sharpness metrics influence the loss descent of SGD.
We begin by examining the loss
dynamics. Let \(\cL(\cdot)\) be twice differentiable.
Consider the SGD update
\[
    \btheta_{t+1}
    =
    \btheta_t
    -
    \eta\left(\nabla\cL(\btheta_t)+\bxi_t\right),
\]
where the gradient noise satisfies $\bbE[\bxi_t]=\bzero$.
A second-order Taylor expansion gives:
\begin{equation}\label{eqn: x2}
\bbE[\cL(\btheta_{t+1})] = \bbE[\cQ(\btheta_t)]  + \frac{\eta^2}{2} \bbE[\bxi_t^\top \nabla^2\cL(\btheta_t)\bxi_t] + o(\eta^2),
\end{equation}
where 
\begin{align*}
    \cQ(\btheta_t)&:=\cL(\btheta_t)-\eta\|\nabla \cL(\btheta_t)\|^2+\frac{\eta^2}{2}\nabla \cL(\btheta_t)^\top \nabla^2\cL(\btheta_t)\nabla\cL(\btheta_t),\\
    \cP(\btheta_t)&:=\bbE[\bxi_t^\top \nabla^2\cL(\btheta_t)\bxi_t],
\end{align*}
Here \(\cQ(\btheta)\) denotes the deterministic GD contribution, while
\(\cP(\btheta)\) denotes the contribution from SGD noise.

\begin{itemize}
    \item \textbf{GD term.} 
    By the standard descent lemma, the deterministic GD term is controlled by the maximum eigenvalue of the Hessian:
    \begin{align*}
        \cQ(\btheta_t)\leq\cL(\btheta_t)-\eta\left(1-\frac{\eta{\color{red!70!black}\lambda_{\max}(\nabla^2\cL(\btheta_t))}}{2}\right)\|\nabla\cL(\btheta_t)\|^2.
    \end{align*}
    \item \textbf{Noise term.} 
    The noise contribution is $ \cP(\btheta_t)=\bbE[\bxi_t^\top \nabla^2\cL(\btheta_t)\bxi_t]$.
     Therefore, different assumptions on the SGD noise covariance $\bbE[\bxi_t\bxi_t^\top]$ lead to
    different Hessian statistics controlling the noise-induced loss change.

    \begin{itemize}
    \item 
     Classical optimization analyses~\citep{hazan2016introduction} often assume the bounded noise variance: $\bbE[\|\bxi_t\|^2]\leq\sigma^2$, with $\sigma$ being a fixed constant. It follows that 
    \begin{align*}
        \cP(\btheta_t) \leq \sigma^2{\color{red!70!black}\lambda_{\max}(\nabla^2\cL(\btheta_t))}.
        \end{align*}
        
    \item 
    Another common assumption is the affine variance condition~\citep{bottou2018optimization}: $\bbE[\|\bxi_t\|^2]\leq \sigma_0^2+\sigma_1^2\|\nabla \cL(\btheta_t)\|^2$. Then
    \begin{align*}
       \cP(\btheta_t) \leq (\sigma_0^2+\sigma_1^2\|\nabla \cL(\btheta_t)\|^2){\color{red!70!black}\lambda_{\max}(\nabla^2\cL(\btheta_t))}.
    \end{align*}
    
    \item
    For regression with squared loss, several works show that the magnitude of SGD noise can be bounded by the loss value~\citep{feng2021inverse,mori2021logarithmic,wojtowytsch2021stochastic,liu2021noise}: $\bbE[\|  \bxi_t\|^2]\leq \sigma^2\cL(\btheta_t)$. It follows that:
    \begin{align*}
        \cP(\btheta_t) \leq \sigma^2\cL(\btheta_t){\color{red!70!black}\lambda_{\max}(\nabla^2\cL(\btheta_t))}.
    \end{align*}

    \item A common simplification in SDE analyses of SGD is to approximate the gradient noise by isotropic Gaussian noise~\citep{welling2011bayesian,raginsky2017non}: $\bbE[\bxi_t\bxi_t^\top]\simeq\sigma^2\bI_d$. Then
    \begin{align*}
        \cP(\btheta_t) \simeq \sigma^2{\color{red!70!black}\Tr(\nabla^2\cL(\btheta_t))}.
    \end{align*}

    \item In near-interpolation regimes, some works model the SGD noise covariance as approximately aligned with the Hessian~\citep{haochen2021shape,damian2021label,li2021happens}: $\bbE[\bxi_t\bxi_t^\top]\simeq\sigma^2 \nabla^2\cL(\btheta)$. In this case,
    \begin{align*}
       \cP(\btheta_t) \simeq{\color{red!70!black}\|\nabla^2\cL(\btheta_t)\|_{\rm F}^2}.
    \end{align*} 

    \item For squared loss in near-interpolation regimes, a more refined approximation is that the SGD noise covariance aligns with both the loss value and the Hessian~\citep{mori2022power,wu2022alignment,wang2023noise}: $\bbE[  \bxi_t\bxi_t^\top]\simeq 2\cL(\btheta_t)  \nabla^2\cL(\btheta_t)$. Then
    \begin{align*}
        \cP(\btheta_t) \simeq2\cL(\btheta_t){\color{red!70!black}\|\nabla^2\cL(\btheta_t)\|_{\rm F}^2}.
    \end{align*}
        
    \end{itemize}
    
    Overall, across these noise assumptions, the SGD noise contribution to the loss is governed
    by \textbf{Hessian sharpness metrics}, such as
    \[
        \lambda_{\max}(\nabla^2\cL(\btheta_t)),
        \quad
        \Tr(\nabla^2\cL(\btheta_t)),
        \quad
        \|\nabla^2\cL(\btheta_t)\|_{\rm F}^2.
    \]
\end{itemize}

\section{Proofs in Section~\ref{section: improving}}
\label{appendix: proof of improving}

\subsection{Proofs in Section~\ref{subsection: placement}}
\label{appendix: proof of improving: placement}

\textbf{Proof Notations.}
We compare the input-only scale-vector model
\[
    f(\bx;\bU,\bgamma)
    =
    \bU(\bgamma\odot \Norm(\bx))
\]
with the input-output scale-vector model
\[
    \phi(\bx;\bM,\bgamma^a,\bgamma^b)
    =
    \bgamma^a\odot\left(\bM(\bgamma^b\odot \Norm(\bx))\right).
\]
Here \(\bgamma\in\mathbb R^d\) is the input scale vector of the model \(f\),
whereas \(\bgamma^a\in\mathbb R^c\) and \(\bgamma^b\in\mathbb R^d\) are the
output and input scale vectors of the model \(\phi\), respectively. Define
\[
    \bD_\gamma=\operatorname{diag}(\bgamma),
    \qquad
    \bD_a=\operatorname{diag}(\bgamma^a),
    \qquad
    \bD_b=\operatorname{diag}(\bgamma^b).
\]
The effective weights are
\[
    \bA_f=\bU\bD_\gamma,
    \qquad
    \bA_\phi=\bD_a\bM\bD_b.
\]
The corresponding losses are
\[
    \mathcal L_f(\bU,\bgamma)
    =
    \frac12\|\bU\bD_\gamma-\bW^\star\|_F^2,
\]
and
\[
    \mathcal L_\phi(\bM,\bgamma^a,\bgamma^b)
    =
    \frac12\|\bD_a\bM\bD_b-\bW^\star\|_F^2.
\]
For a generic effective state \(\bA\), define the residual
\[
    \bR=\bW^\star-\bA.
\]

\begin{theorem}[Optimization advantage of input-output scale vectors]\label{thm: full: optimization advantage placement}
Consider the setting above. Under gradient flow, the following statements hold.
\begin{itemize}
    \item \textbf{(i) Instantaneous acceleration at the same effective state.}
    Suppose that $f$ and $\phi$ are evaluated at parameter values satisfying $\bU\bD_\gamma=\bD_a\bM\bD_b,\bgamma=\bgamma^b$.
    Then their instantaneous loss derivatives satisfy
    \[
        \frac{\rd}{\rd t}\mathcal L_\phi
        \le
        \frac{\rd}{\rd t}\mathcal L_f
        \le
        0.
    \]

    \item \textbf{(ii) Local strict acceleration from zero initialization.}
    Let $\bU(0)=\bzero,\quad
        \bM(0)=\bzero,\quad
        \bgamma(0)=\bone,\quad
        \bgamma^a(0)=\bone,\quad
        \bgamma^b(0)=\bone$.
    If $\bW^\star\neq\bzero$, then, for all sufficiently small $t>0$,
    \[
        \mathcal L_\phi(t)<\mathcal L_f(t).
    \]
    More precisely,
    \[
        \mathcal L_\phi(t)-\mathcal L_f(t)
        =
        -\frac{2}{3}t^3
        \sum_{i=1}^c
        \|\bW^\star_{i:}\|_2^4
        +
        \mathcal O(t^4).
    \]

    \item \textbf{(iii) Global strict acceleration under a matching-support teacher.}
    Assume $\bW^\star\neq\bzero$ and that its support forms a matching, namely
    \[
        \|\bW^\star_{i:}\|_0\le 1
        \quad\text{for every } i,
        \qquad
        \|\bW^\star_{:j}\|_0\le 1
        \quad\text{for every } j.
    \]
    Under the standard initialization $\bU(0)=\bzero,\quad
        \bM(0)=\bzero,\quad
        \bgamma(0)=\bone,\quad
        \bgamma^a(0)=\bone,\quad
        \bgamma^b(0)=\bone$,
    we have
    \[
        \mathcal L_\phi(t)<\mathcal L_f(t),
        \qquad
        \forall t>0.
    \]
\end{itemize}
\end{theorem}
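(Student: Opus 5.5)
The plan is to reduce everything to explicit gradient-flow formulas in the effective weight. Here $\bR=\bW^\star-\bA$, with $\br_j$ its $j$-th column and $\br_{i:}$ its $i$-th row.

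\textbf{Part (i).} For $f$, as in the proof of Theorem~\ref{thm: optimization advantage w/ scale vector},
\[
\frac{\rd}{\rd t}\cL_f=-\|\bR\bD_\gamma\|_F^2-\sum_{j=1}^d(\bu_j^\top\br_j)^2 .
\]
For $\phi$, the gradients are
\[
\nabla_{\bM}\cL_\phi=-\bD_a\bR\bD_b,\qquad
\partial_{\gamma^b_j}\cL_\phi=-(\bD_a\bM)_{:j}^\top\br_j,\qquad
\partial_{\gamma^a_i}\cL_\phi=-(\bM\bD_b)_{i:}\,\br_{i:}^\top .
\]
Hence
\[
\frac{\rd}{\rd t}\cL_\phi=-\|\bD_a\bR\bD_b\|_F^2-\sum_j\big((\bD_a\bM)_{:j}^\top\br_j\big)^2-\sum_i\big((\bM\bD_b)_{i:}\br_{i:}^\top\big)^2 .
\]
Under the matching $\bU\bD_\gamma=\bD_a\bM\bD_b$ and $\bgamma=\bgamma^b$, we get $\bu_j=(\bD_a\bM)_{:j}$ whenever $\gamma_j\neq0$. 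Columns with $\gamma_j=0$ contribute zero to the first term on both sides, and I would handle their second term by noting that the identification of $\bW_f$ in the statement is $\bW_f=\bD_a\bM$. So the middle terms coincide, and the extra row term for $\phi$ is nonpositive.

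It remains to compare $\|\bD_a\bR\bD_b\|_F^2$ with $\|\bR\bD_b\|_F^2$, which needs $|\gamma^a_i|\ge1$. This is the delicate point: it does not hold for literally arbitrary states. I would therefore establish the conservation law $(\gamma^a_i)^2-\|\bM_{i:}\|^2=\mathrm{const}=1$ along gradient flow from the standard initialization. This follows from computing $\frac{\rd}{\rd t}(\gamma^a_i)^2$ and $\frac{\rd}{\rd t}\|\bM_{i:}\|^2$, which are both equal to $2\sum_j\gamma^a_iM_{ij}\gamma^b_jR_{ij}$. I would then state (i) for such states.

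\textbf{Part (ii).} Here I would Taylor-expand both trajectories to order $t^3$. At $t=0$ both models have $\bA=\bzero$ and identical first-order dynamics $\dot{\bA}=\bW^\star$. The model $f$ gains the term $\bu_j\bu_j^\top$ in its preconditioner, while $\phi$ additionally gains the row term $(\bM\bD_b)_{i:}^\top(\bM\bD_b)_{i:}$ and the growth of $(\gamma^a_i)^2$. Since $\bM(t)\approx t\bW^\star$, the extra decrease rate of $\cL_\phi$ relative to $\cL_f$ is $\sum_i(t\|\bW^\star_{i:}\|^2)^2+2t^2\sum_i\|\bW^\star_{i:}\|^4+O(t^3)$. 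Integrating this gives $-\tfrac23 t^3\sum_i\|\bW^\star_{i:}\|^4$. Carefully checking that all other $O(t^2)$ contributions cancel is routine bookkeeping.

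\textbf{Part (iii).} With matching support, I would show that the flow decouples. Entries of $\bM$ and $\bU$ off the support stay $0$. Scale coordinates whose row or column carries no teacher entry stay at $1$, because their gradients vanish identically. Each support entry $(i,j)$ with value $w$ then becomes a scalar problem.
\begin{itemize}
\item For $f$: $p=u\gamma$ with $\gamma^2-u^2=1$, so $\dot p=(u^2+\gamma^2)(w-p)=\sqrt{1+4p^2}\,(w-p)$.
\item For $\phi$: $p=a m b$ with $a^2-m^2=b^2-m^2=1$, so $\dot p=(m^2b^2+a^2b^2+a^2m^2)(w-p)=:\kappa(p)(w-p)$. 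Setting $s=m^2$, we have $p^2=s(1+s)^2$ and $\kappa=(1+s)(1+3s)$.
\end{itemize}
The key inequality is $\kappa(p)>\sqrt{1+4p^2}$ for $p\neq0$, which reduces to $(1+s)^2(1+3s)^2>1+4s(1+s)^2$ for $s>0$. Both flows start at $p=0$ and move monotonically toward $w$. A scalar comparison argument on $|w-p|$ (for instance, comparing hitting times via $\int \rd p/(\kappa(p)(w-p))$) then gives a strictly smaller per-entry loss for every $t>0$. Summing over support entries gives $\cL_\phi(t)<\cL_f(t)$.

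I expect the main obstacle to be part (i)'s hidden requirement $|\gamma^a_i|\ge1$, which must come from the conservation law. The part (ii) coefficient bookkeeping is secondary.
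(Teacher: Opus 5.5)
Your plan follows the paper's proof in all three parts.

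\begin{itemize}
\item For (i) you use the same dissipation identities and the row-wise conservation law $(\gamma^a_i)^2-\|\bM_{i:}\|_2^2=1$. The paper also relies on this law implicitly, so your remark is correct: (i) needs $|\gamma^a_i|\ge 1$ and therefore holds only on trajectories started from the standard initialization.
\item For (ii) you use the same small-$t$ expansion.
\item For (iii) you use the same support decoupling and the same scalar speed comparison $(1+4s+3s^2)^2>1+4s(1+s)^2$. Your hitting-time argument is a clean way to get the strict inequality, slightly more explicit than the paper's appeal to a comparison principle.
\end{itemize}

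The one thing to fix is the coefficient bookkeeping in (ii). Since $\gamma^a_i(t)=1+\tfrac12\|\bW^\star_{i:}\|_2^2t^2+O(t^3)$, the growth of $(\gamma^a_i)^2$ contributes $\sum_{i,j}\|\bW^\star_{i:}\|_2^2\,t^2\,(W^\star_{ij})^2=t^2\sum_i\|\bW^\star_{i:}\|_2^4$, not $2t^2\sum_i\|\bW^\star_{i:}\|_2^4$. Adding the row term $\sum_i\langle\bR_{i:},(\bM\bD_b)_{i:}\rangle^2=t^2\sum_i\|\bW^\star_{i:}\|_2^4+O(t^3)$, the extra dissipation is $2t^2\sum_i\|\bW^\star_{i:}\|_2^4+O(t^3)$. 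The rate $3t^2\sum_i\|\bW^\star_{i:}\|_2^4$ that you wrote would integrate to $-t^3\sum_i\|\bW^\star_{i:}\|_2^4$, which contradicts the $-\tfrac23$ coefficient you state.

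For the cancellation you call routine, the point to record is that all scale derivatives vanish at $t=0$. Hence $\bA_f$ and $\bA_\phi$ both equal $(t-\tfrac12t^2)\bW^\star+O(t^3)$, and $\gamma_j$ and $\gamma^b_j$ agree through order $t^2$. So the residual and column contributions differ only at $O(t^3)$.
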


\begin{proof}[Proof of Claim (i) in Theorem~\ref{thm: full: optimization advantage placement}]
For \(f\), since
\[
    \mathcal L_f=\frac12\|\bU\bD_\gamma-\bW^\star\|_F^2
    =
    \frac12\|\bR\|_F^2,
\]
the gradient-flow equations are
\[
    \dot{\bU}=\bR\bD_\gamma,
    \qquad
    \dot\gamma_j
    =
    \langle \bU_{:j},\bR_{:j}\rangle.
\]
Therefore,
\[
    -\frac{d \cL_f}{dt}
    =
    \|\bR\bD_\gamma\|_F^2
    +
    \sum_{j=1}^d
    \langle \bU_{:j},\bR_{:j}\rangle^2.
\]
Using \(\bU_{:j}=\bA_{:j}/\gamma_j\), we obtain
\[
    -\frac{d \cL_f}{dt}
    =
    \sum_{j=1}^d
    \gamma_j^2\|\bR_{:j}\|_2^2
    +
    \sum_{j=1}^d
    \frac{\langle \bA_{:j},\bR_{:j}\rangle^2}
    {\gamma_j^2}.
\]

For \(\phi\), the gradient-flow equations are
\[
    \dot{\bM}=\bD_a\bR\bD_b,
\]
\[
    \dot\gamma_i^a
    =
    \langle \bR_{i:},(\bM\bD_b)_{i:}\rangle,
\]
and
\[
    \dot\gamma_j^b
    =
    \langle \bR_{:j},(\bD_a\bM)_{:j}\rangle.
\]
Hence
\[
    -\frac{d \cL_\phi}{dt}
    =
    \|\bD_a\bR\bD_b\|_F^2
    +
    \sum_{i=1}^c
    \langle \bR_{i:},(\bM\bD_b)_{i:}\rangle^2
    +
    \sum_{j=1}^d
    \langle \bR_{:j},(\bD_a\bM)_{:j}\rangle^2.
\]
Because
\[
    (\bM\bD_b)_{i:}=\frac{\bA_{i:}}{\gamma_i^a},
    \qquad
    (\bD_a\bM)_{:j}=\frac{\bA_{:j}}{\gamma_j^b},
\]
we get
\[
    -\frac{d \cL_\phi}{dt}
    =
    \sum_{i=1}^c\sum_{j=1}^d
    (\gamma_i^a)^2(\gamma_j^b)^2R_{ij}^2
    +
    \sum_{i=1}^c
    \frac{\langle \bA_{i:},\bR_{i:}\rangle^2}
    {(\gamma_i^a)^2}
    +
    \sum_{j=1}^d
    \frac{\langle \bA_{:j},\bR_{:j}\rangle^2}
    {(\gamma_j^b)^2}.
\]
Using the matched-input-scale assumption \(\bgamma=\bgamma^b\), subtracting the
expression for \(\mathcal D_f\) proves 
\[
    \frac{d \cL_f}{dt} -\frac{d \cL_\phi}{dt} 
    =
    \sum_{i=1}^c\sum_{j=1}^d
    \left((\gamma_i^a)^2-1\right)
    \gamma_j^2 R_{ij}^2
    +
    \sum_{i=1}^c
    \frac{
        \langle \bA_{i:},\bR_{i:}\rangle^2
    }{
        (\gamma_i^a)^2
    }.
\]
Consequently, due to the conservation law of gradient flow
\[
\frac{\rd}{\rd t}\left((\gamma_i^a)^2-\|\bM_{i:}\|_2^2\right)=0,
\]
we have
\[
    (\gamma_i^a(t))^2=\|\bM_{i:}(t)\|_2^2+(\gamma_i^a(0))^2-\|\bM_{i:}(0)\|_2^2\ge 1,
    \qquad i=1,\dots,c.
\]
Therefore,
\[
    \frac{d \cL_f}{dt} -\frac{d \cL_\phi}{dt}\geq0.
\]

\end{proof}

\begin{proof}[Proof of Claim (ii) in Theorem~\ref{thm: full: optimization advantage placement}]
At \(t=0\), both effective weights are zero:
\[
    \bA_f(0)=\bA_\phi(0)=\bzero.
\]
Moreover, since all scale vectors are initialized at one,
\[
    \dot{\bU}(0)=\bW^\star,
    \qquad
    \dot{\bM}(0)=\bW^\star.
\]
Thus
\[
    \bU(t)=t\bW^\star+O(t^2),
    \qquad
    \bM(t)=t\bW^\star+O(t^2).
\]
For the input-only model \(f\), its input scale obeys
\[
    \dot\gamma_j
    =
    \langle \bU_{:j},\bR_{:j}\rangle.
\]
Hence
\[
    \gamma_j(t)
    =
    1+\frac12\|\bW^\star_{:j}\|_2^2t^2+O(t^3).
\]
For the input-output model \(\phi\), the output scale obeys
\[
    \dot\gamma_i^a
    =
    \langle \bR_{i:},(\bM\bD_b)_{i:}\rangle.
\]
Using \(\bM(t)=t\bW^\star+O(t^2)\) and \(\bR(t)=\bW^\star+O(t)\), we obtain
\[
    \gamma_i^a(t)
    =
    1+\frac12\|\bW^\star_{i:}\|_2^2t^2+O(t^3).
\]
The input scale in \(\phi\) satisfies the same second-order expansion as the
input scale in \(f\):
\[
    \gamma_j^b(t)
    =
    1+\frac12\|\bW^\star_{:j}\|_2^2t^2+O(t^3).
\]

We now compare the instantaneous loss-decrease rates. Since
\[
    \bA_f(t)=t\bW^\star+O(t^2),
    \qquad
    \bA_\phi(t)=t\bW^\star+O(t^2),
\]
and both residuals are
\[
    \bR_f(t)=\bW^\star+O(t),
    \qquad
    \bR_\phi(t)=\bW^\star+O(t),
\]
a direct Taylor expansion of the gradient-flow loss-decrease rates gives
\[
    \mathcal D_\phi(t)-\mathcal D_f(t)
    =
    2t^2
    \sum_{i=1}^c
    \|\bW^\star_{i:}\|_2^4
    +
    O(t^3).
\]
Indeed, the leading difference is exactly the row-wise output-scale
contribution in \(\phi\), while the input-scale contributions agree up to this
order. Since
\[
    \frac{\rd}{\rd t}
    \left(\mathcal L_\phi(t)-\mathcal L_f(t)\right)
    =
    -\left(\mathcal D_\phi(t)-\mathcal D_f(t)\right),
\]
and \(\mathcal L_\phi(0)=\mathcal L_f(0)\), integration from \(0\) to \(t\)
gives
\[
    \mathcal L_\phi(t)-\mathcal L_f(t)
    =
    -\frac23t^3
    \sum_{i=1}^c
    \|\bW^\star_{i:}\|_2^4
    +
    O(t^4).
\]
If \(\bW^\star\neq\bzero\), then at least one row has positive norm, so
\[
    \sum_{i=1}^c\|\bW^\star_{i:}\|_2^4>0.
\]
Therefore \(\mathcal L_\phi(t)<\mathcal L_f(t)\) for all sufficiently small
\(t>0\).
\end{proof}

\begin{proof}[Proof of Claim (iii) in Theorem~\ref{thm: full: optimization advantage placement}]
Because \(\bW^\star\) has matching support and the zero entries are initialized
at zero, the gradient-flow dynamics preserves the support. Indeed, if
\(W^\star_{ij}=0\) and \(M_{ij}(0)=U_{ij}(0)=0\), then initially the effective
weight at \((i,j)\) is zero, hence the residual at \((i,j)\) is zero. The
gradient equations give
\[
    \dot U_{ij}=\gamma_j R_{ij},
    \qquad
    \dot M_{ij}=\gamma_i^a R_{ij}\gamma_j^b,
\]
so \(U_{ij}\) and \(M_{ij}\) remain zero. Thus the system decomposes into
independent scalar problems, one for each nonzero entry of \(\bW^\star\).

It remains to prove the scalar claim. Let \(w^\star\neq 0\). By symmetry, we
assume \(w^\star>0\). The input-only model has effective weight
\[
    a_f=u\gamma.
\]
Its gradient flow satisfies
\[
    \dot u=\gamma(w^\star-a_f),
    \qquad
    \dot\gamma=u(w^\star-a_f).
\]
The conservation law
\[
    \gamma^2-u^2=1
\]
holds. Therefore
\[
    \gamma^2+u^2=\sqrt{1+4a_f^2},
\]
and hence
\[
    \dot a_f
    =
    \sqrt{1+4a_f^2}\,(w^\star-a_f).
\]

For the input-output model, the effective weight is
\[
    a_\phi=\gamma^a m\gamma^b.
\]
The scalar gradient flow satisfies
\[
    \dot m=\gamma^a\gamma^b(w^\star-a_\phi),
\]
\[
    \dot\gamma^a=m\gamma^b(w^\star-a_\phi),
    \qquad
    \dot\gamma^b=\gamma^a m(w^\star-a_\phi).
\]
The conservation laws are
\[
    (\gamma^a)^2-m^2=1,
    \qquad
    (\gamma^b)^2-m^2=1.
\]
Since \(\gamma^a(0)=\gamma^b(0)=1\), we have
\[
    \gamma^a(t)=\gamma^b(t)=\sqrt{1+m(t)^2}.
\]
Thus
\[
    a_\phi=m(1+m^2).
\]
The effective dynamics is
\[
    \dot a_\phi
    =
    (1+4m^2+3m^4)(w^\star-a_\phi).
\]

We compare the two scalar speed factors at the same effective weight
\(a>0\). For \(f\),
\[
    q_f(a)=\sqrt{1+4a^2}.
\]
For \(\phi\), let \(m=m(a)>0\) be the unique solution of
\[
    a=m(1+m^2).
\]
Then
\[
    q_\phi(a)=1+4m^2+3m^4.
\]
Let \(y=m^2>0\). Since \(a^2=y(1+y)^2\), we have
\[
    q_f(a)^2=1+4y(1+y)^2,
\]
while
\[
    q_\phi(a)^2=(1+4y+3y^2)^2.
\]
A direct calculation gives
\[
    q_\phi(a)^2-q_f(a)^2
    =
    y(9y^3+20y^2+14y+4)>0.
\]
Hence
\[
    q_\phi(a)>q_f(a),
    \qquad
    \forall a>0.
\]

Both effective weights solve scalar ODEs of the form
\[
    \dot a=q(a)(w^\star-a),
    \qquad
    a(0)=0,
\]
where \(q_\phi(a)>q_f(a)\) for all \(a>0\). The scalar comparison principle
implies
\[
    a_\phi(t)>a_f(t),
    \qquad
    \forall t>0.
\]
Both trajectories remain below \(w^\star\), so
\[
    |w^\star-a_\phi(t)|<|w^\star-a_f(t)|.
\]
Therefore
\[
    \frac12(w^\star-a_\phi(t))^2
    <
    \frac12(w^\star-a_f(t))^2,
    \qquad
    \forall t>0.
\]

Applying this scalar comparison independently to every nonzero entry of
\(\bW^\star\), and summing the scalar losses, yields
\[
    \mathcal L_\phi(t)<\mathcal L_f(t),
    \qquad
    \forall t>0.
\]
\end{proof}

\subsection{Proofs in Section~\ref{subsection: reparameterization}}
\label{appendix: proof of improving: reparameterization}

\textbf{Proof Notations.}
We compare the standard scale-vector model
\[
    f(\bx;\bU,\bgamma)
    =
    \bU(\bgamma\odot \Norm(\bx))
\]
with the OR-reparameterized scale-vector model
\[
    \psi(\bx;\bV,\balpha,\beta)=\bV(\beta\Norm(\balpha)\odot\Norm(\bx)).
\]

The corresponding effective weights are
\[
    \bA_f=\bU\operatorname{diag}(\bgamma),
    \qquad
    \bA_\psi=\bV\operatorname{diag}(\beta\Norm(\balpha)).
\]
We denote their losses by $\cL_f$ and $\cL_\psi$, respectively.

\begin{theorem}[Optimization advantage of \textbf{OR} reparameterization]
\label{thm: full: optimization advantage reparametrization}
Consider the setting in Theorem~\ref{thm: optimization advantage w/ scale vector}. We compare the standard model $f$ with the reparameterized model $\psi$ under
gradient flow. The parameters are initialized as $\balpha(0)=\bone,\beta(0)=1,\bV(0)=\bzero$; $\bgamma(0)=\bone,\bU(0)=\bzero$. 
Then the following statements hold.
\begin{itemize}
    \item \textbf{(i) Instantaneous acceleration at the same effective state.}
    For any $(\bU,\bgamma)$ in $f$ and $(\bV,\balpha,\beta)$ in $\psi$ satisfying $\bU=\bV$ and $\bgamma=\beta\Norm(\balpha)$,  the \textbf{OR} reparameterization is at least as fast as the direct scale-vector parameterization:
    \[\frac{\rd}{\rd t}\cL_{\psi}\leq\frac{\rd}{\rd t}\cL_f\leq 0.\]
    \item \textbf{(ii) Local strict acceleration from initialization.}
    If $d>1$, then the \textbf{OR} reparameterization is strictly faster in the early
    phase:
    \[\cL_{\psi}(t)-\cL_f(t)=-\frac{2}{3}(1-1/d)\|\bW_\star\|_{\rm F}^4 t^3+\cO(t^4).\]
    \item \textbf{(iii) Global strict acceleration under balanced teacher columns.}
    Suppose $d>1$ and $\|\bw_{\star,1}\|_2=\cdots=\|\bw_{\star,d}\|_2>0$, where $\bw_{\star,j}$ denotes the $j$-th column of $\bW_\star$. Then the
    \textbf{OR} reparameterization is strictly faster throughout training
    \[
    \cL_\psi(t)<\cL_f(t), \forall t>0.
    \]
\end{itemize}
\end{theorem}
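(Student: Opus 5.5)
The plan is to work throughout in the effective-parameter picture of Section~\ref{subsection: unified view}. Since $\mathbb E[\Norm(\bx)\Norm(\bx)^\top]=\bI_d$, both losses equal $\frac12\|\bA-\bW_\star\|_{\rm F}^2$ with $\bA=\bU\operatorname{diag}(\bgamma)$, respectively $\bA=\bV\operatorname{diag}(\bgamma)$ and $\bgamma=\beta\Norm(\balpha)$. Write $\bR=\bW_\star-\bA$ and $\bG:=\nabla_{\bgamma}\cL_f$, so $G_j=-\langle \bU_{:j},\bR_{:j}\rangle$.

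\textbf{Step 1: the induced preconditioner.} I would first compute the Jacobian of $\bgamma=\beta\sqrt d\,\balpha/\|\balpha\|$:
\[
\partial_\beta\bgamma=\bgamma/\beta,\qquad \partial_{\balpha}\bgamma=\frac{\beta\sqrt d}{\|\balpha\|}\big(\bI-\hat{\balpha}\hat{\balpha}^\top\big),
\]
where $\hat{\balpha}=\hat{\bgamma}$. Using $\|\bgamma\|^2=d\beta^2$, this gives $\bP=d\,\hat{\bgamma}\hat{\bgamma}^\top+\rho(\bI-\hat{\bgamma}\hat{\bgamma}^\top)$ with $\rho=\beta^2d/\|\balpha\|^2$, as claimed in the main text. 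Then
\[
-\tfrac{\rd}{\rd t}\cL_\psi=\|\bR\operatorname{diag}(\bgamma)\|_{\rm F}^2+\bG^\top\bP\bG,\qquad -\tfrac{\rd}{\rd t}\cL_f=\|\bR\operatorname{diag}(\bgamma)\|_{\rm F}^2+\|\bG\|^2,
\]
and the first terms coincide because $\bU=\bV$ and the $\bgamma$'s match.

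\textbf{Step 2: conservation laws and claim (i).} Next I would establish two conservation laws.
\begin{itemize}
\item $\bgamma$ is $0$-homogeneous in $\balpha$, so $\balpha^\top\nabla_{\balpha}\cL=0$ and $\|\balpha\|^2\equiv d$.
\item $\bgamma$ is $1$-homogeneous in $\beta$, so $\beta\partial_\beta\cL=\sum_j\gamma_j\partial_{\gamma_j}\cL=\langle\bV,\nabla_{\bV}\cL\rangle$. Hence $\beta^2-\|\bV\|_{\rm F}^2\equiv1$.
\end{itemize}
Therefore $\rho=\beta^2=1+\|\bV\|_{\rm F}^2\ge1$, and since $d\ge1$ we get $\bP\succeq\bI$. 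This gives
\[
\tfrac{\rd}{\rd t}\cL_f-\tfrac{\rd}{\rd t}\cL_\psi=(d-1)\langle\hat{\bgamma},\bG\rangle^2+(\rho-1)\|(\bI-\hat{\bgamma}\hat{\bgamma}^\top)\bG\|^2\ge0,
\]
which is claim (i). I will state it for states reachable from the stated initialization, since that is where $\rho\ge1$ is guaranteed.

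\textbf{Step 3: claim (ii), the main obstacle.} As in the proof of Claim (ii) of Theorem~\ref{thm: full: optimization advantage placement}, I would Taylor-expand both trajectories from $\bU(0)=\bV(0)=\bzero$. Both satisfy $\bU,\bV=t\bW_\star+O(t^2)$ and $\bG=-t(\|\bw_{\star,j}\|^2)_j+O(t^2)$. The explicit extra term is
\[
(d-1)\langle\hat{\bgamma},\bG\rangle^2=(1-1/d)\,t^2\|\bW_\star\|_{\rm F}^4+O(t^3),
\]
while $(\rho-1)\|\bG_\perp\|^2=O(t^4)$. The delicate part is that along the two trajectories the $\bgamma$'s already differ at order $t^2$: $\beta^2\approx1+t^2\|\bW_\star\|_{\rm F}^2$ in $\psi$ versus $\gamma_j^2\approx1+t^2\|\bw_{\star,j}\|^2$ in $f$. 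These differences enter the common term $\|\bR\operatorname{diag}(\bgamma)\|^2$ and $\bA$ at the same order. I would therefore expand the full loss-decrease rates $\mathcal D_\psi-\mathcal D_f$ to order $t^2$, collecting these cross contributions, and check that the total coefficient matches $2(1-1/d)\|\bW_\star\|_{\rm F}^4$. Integrating once then yields the stated $-\frac23(1-1/d)\|\bW_\star\|_{\rm F}^4t^3$. This bookkeeping is the step I expect to be hardest.

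\textbf{Step 4: claim (iii), balanced columns.} Here I would use the symmetry of the problem: each column loss is invariant under rotations of $\bw_{\star,j}$, and the column norms are equal. Hence all $G_j$ coincide, $\balpha\propto\bone$ is preserved, and $\gamma_j=\beta$ for every $j$. Moreover each column $\bv_j$ stays parallel to $\bw_{\star,j}$ with a common scalar coefficient $v$, so both models reduce to scalar ODEs $\dot a=q(a)(w-a)$ with $a(0)=0$.
\begin{itemize}
\item For $f$: $q_f(a)=\sqrt{1+4a^2}$, by the placement proof.
\item For $\psi$: $\beta^2=1+dv^2$, $a=v\beta$, and $q_\psi=\beta^2+dv^2=1+2dv^2$.
\end{itemize}
A direct calculation gives $q_\psi^2-q_f^2=4(d-1)v^2>0$ for $v\neq0$ and $d>1$. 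The scalar comparison principle then gives $a_\psi(t)>a_f(t)$ with both below $w$, hence $\cL_\psi(t)<\cL_f(t)$ for all $t>0$.
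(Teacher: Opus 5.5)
Your route is the paper's on all three claims. For (i) you use the same Jacobian/preconditioner computation and the conservation laws $\|\balpha\|^2\equiv d$ and $\beta^2-\|\bV\|_{\rm F}^2\equiv 1$. Your restriction to states reachable from the initialization is correct, and the paper's proof uses it implicitly too. For (ii) you Taylor-expand the two dissipation rates from zero initialization, and for (iii) you reduce by symmetry to scalar ODEs and apply the comparison principle. Two points need correcting.

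In (ii), the bookkeeping you defer is where your stated hint would mislead you. In $\psi$ one has $\gamma_j^2=\beta^2q_j^2$ with $\bq=\Norm(\balpha)$, and $\bq$ also moves at order $t^2$. Write $\rho_j=\|\bw_{\star,j}\|^2$ and $S=\|\bW_\star\|_{\rm F}^2$. From $\dot{\bq}=\beta(\bI-\bq\bq^\top/d)\bs$ and $s_j=t\rho_j+\cO(t^2)$, one gets $q_j=1+\frac{t^2}{2}(\rho_j-S/d)+\cO(t^3)$. Hence $(\gamma_j^{(\psi)}(t))^2=1+(\rho_j+(1-1/d)S)t^2+\cO(t^3)$, not $\beta^2\approx 1+St^2$. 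Substituting $\beta^2$ for $(\gamma_j^{(\psi)})^2$ would give the cross term $(S^2-\sum_j\rho_j^2)t^2$, which is correct only for balanced columns.

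Conversely, the $\bgamma$-discrepancy does not reach $\bA$ at order $t^2$. Since $\bU,\bV=\cO(t)$ and $\bgamma-\bone=\cO(t^2)$ in both models, both residuals equal $(1-t+t^2/2)\bW_\star+\cO(t^3)$, and the vectors $\bs$ agree to the order needed. The only cross contribution is therefore $\sum_j\big((\gamma_j^{(\psi)})^2-(\gamma_j^{(f)})^2\big)\|\br_j\|^2=(1-1/d)S^2t^2+\cO(t^3)$. Together with your explicit preconditioner term $(1-1/d)S^2t^2$, this yields $D_\psi-D_f=2(1-1/d)S^2t^2+\cO(t^3)$, exactly as in the paper.

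In (iii), the closing algebra is off. With $a=v\beta$ and $\beta^2=1+dv^2$, you get $q_\psi^2=(1+2dv^2)^2=1+4dv^2(1+dv^2)=1+4da^2$. So $q_\psi^2-q_f^2=4(d-1)a^2=4(d-1)v^2(1+dv^2)$, not $4(d-1)v^2$. Positivity, and hence your conclusion, is unaffected. The clean form $q_\psi(a)=\sqrt{1+4da^2}$ versus $q_f(a)=\sqrt{1+4a^2}$ is precisely the paper's closed ODE $\dot\theta=\sqrt{1+4k\rho\theta^2}\,(1-\theta)$ with $k=d$ versus $k=1$. Here $\rho$ is the common squared column norm, and the two forms match under the rescaling $a=\sqrt{\rho}\,\theta$.
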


In addition, in Figure~\ref{fig: numerical validation, OR accelerate}, we provide numerical validation of three claims in Theorem~\ref{thm: full: optimization advantage reparametrization}. The simulations support the theoretically predicted acceleration induced by \textbf{OR}.

\begin{figure}[!htp]
    \centering
    \includegraphics[width=0.32\linewidth]{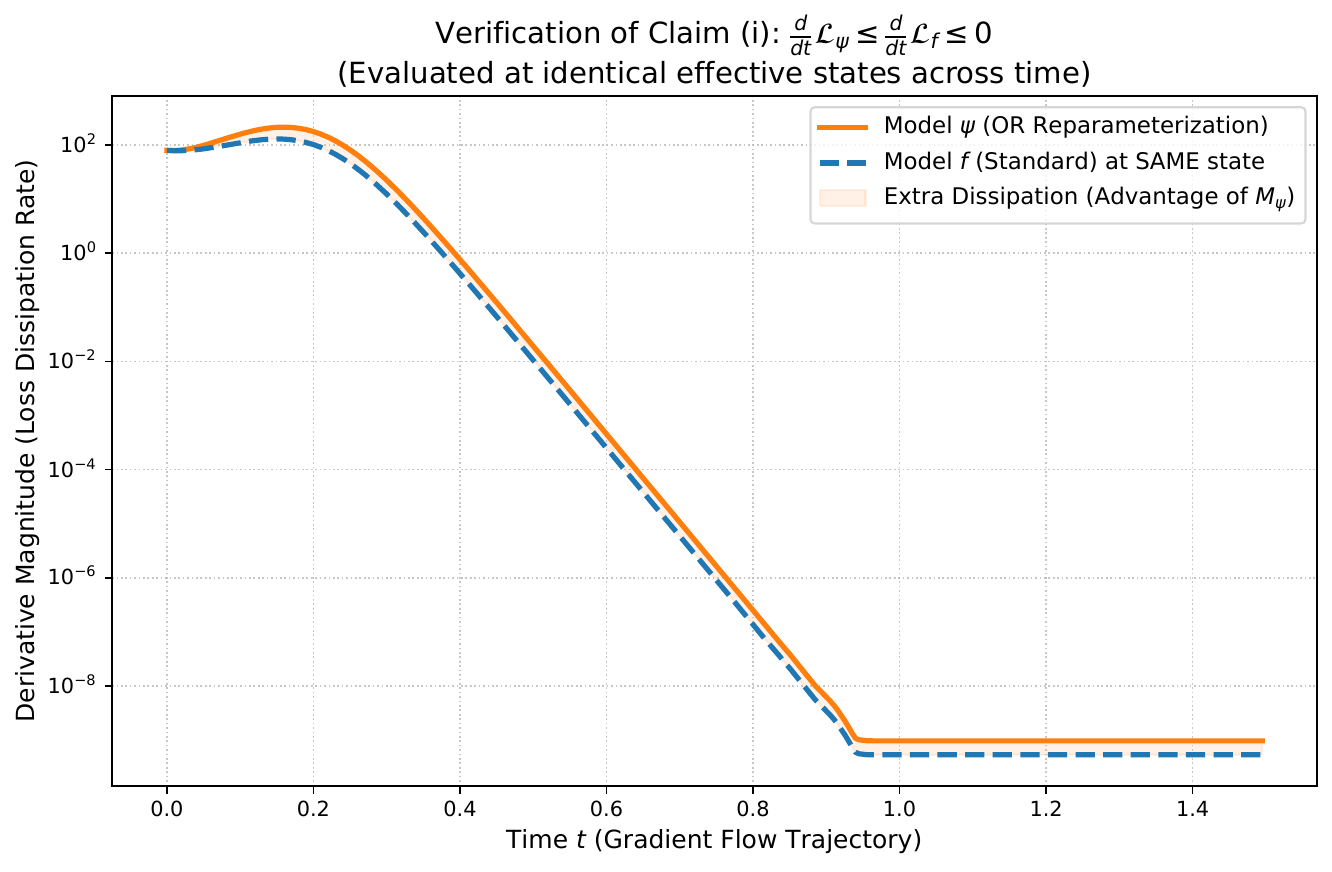}
    \includegraphics[width=0.32\linewidth]{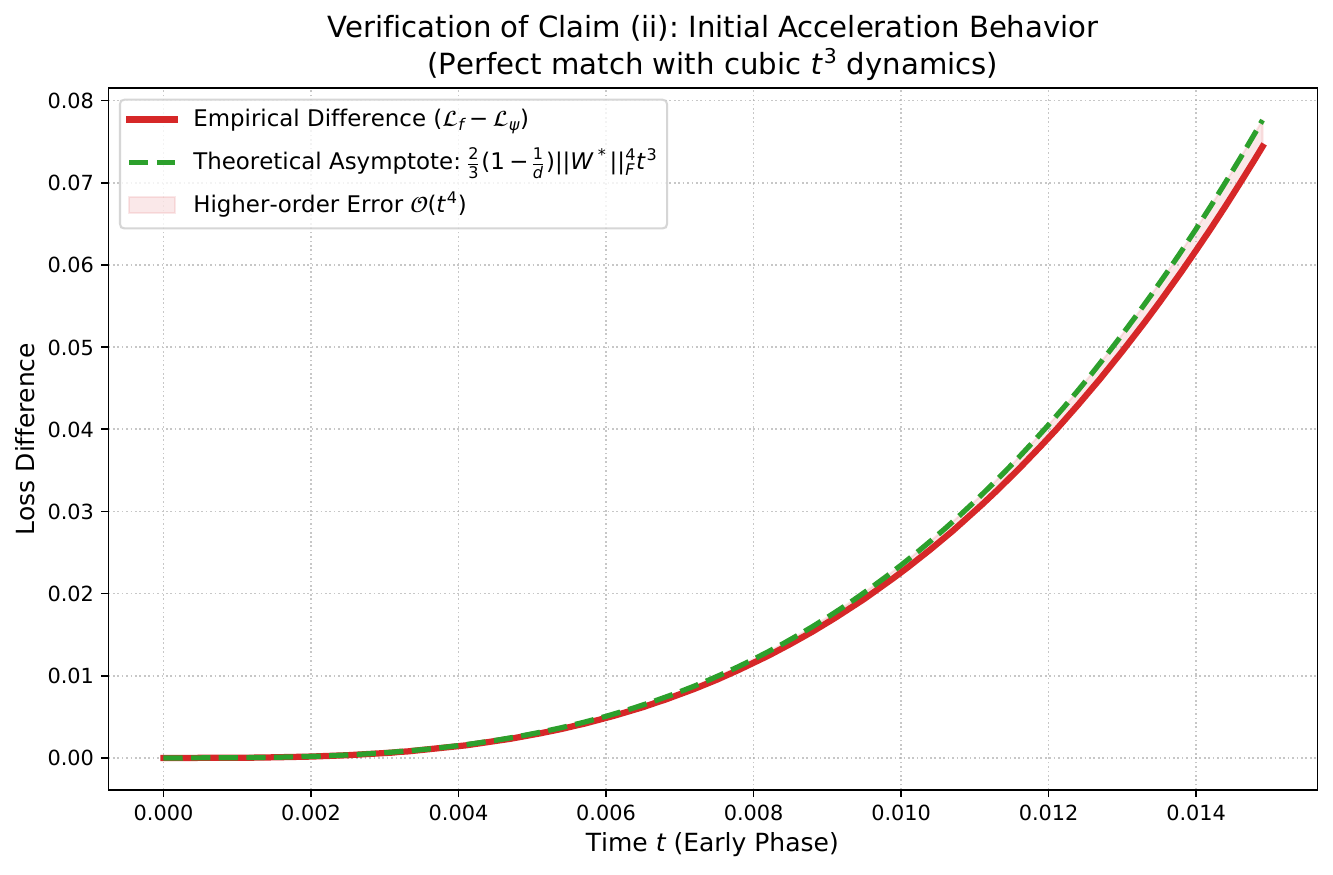}
    \includegraphics[width=0.33\linewidth]{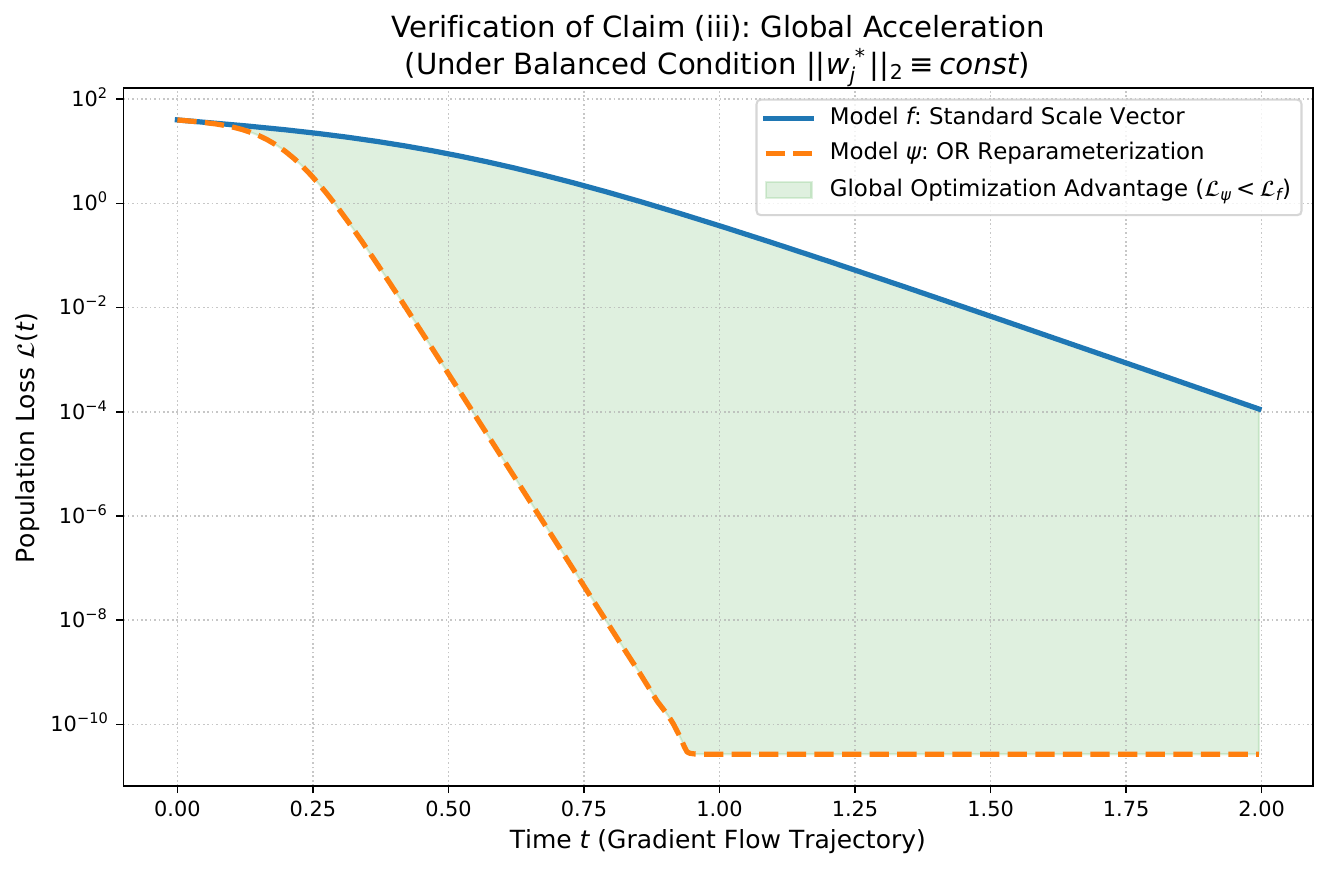}
    \caption{Numerical validation of the claims in Theorem~\ref{thm: full: optimization advantage reparametrization}.}
    \label{fig: numerical validation, OR accelerate}
\end{figure}

\begin{proof}[Proof of Claim (i) in Theorem~\ref{thm: full: optimization advantage reparametrization}]
Consider the model
\[
    \psi(\bx;\bV,\beta,\balpha)
    =
    \bV\bigl(\beta \Norm(\balpha)\odot \Norm(\bx)\bigr),
\]
where
\[
    \bV\in\mathbb R^{c\times d},
    \qquad
    \balpha\in\mathbb R^d,
    \qquad
    \beta\in\mathbb R.
\]
Let
\[
    \bq=\Norm(\balpha)
    =
    \sqrt d\,\frac{\balpha}{\|\balpha\|_2},
    \qquad
    \bgamma=\beta\bq.
\]
Then
\[
    \psi(\bx;\bV,\beta,\balpha)
    =
    \bV(\bgamma\odot \Norm(\bx)).
\]
Thus $\psi$ can be viewed as a magnitude-direction reparameterization of the
scale vector $\bgamma$ in $f$.

As before, since
\[
    \mathbb E[\Norm(\bx)\Norm(\bx)^\top]=\bI_d,
\]
the loss of $\psi$ is
\[
    \mathcal L_\psi(\bV,\beta,\balpha)
    =
    \frac12\|\bV\bD_{\bgamma}-\bW^\star\|_F^2,
    \qquad
    \bD_{\bgamma}=\operatorname{diag}(\bgamma).
\]
Write
\[
    \bV=[\bv_1,\dots,\bv_d],
    \qquad
    \bW^\star=[\bw_1^\star,\dots,\bw_d^\star],
\]
and define
\[
    \br_j=\bw_j^\star-\gamma_j\bv_j,
    \qquad
    s_j=\bv_j^\top\br_j,
    \qquad
    \bs=(s_1,\dots,s_d)^\top.
\]
Then
\[
    \mathcal L_\psi
    =
    \frac12\sum_{j=1}^d\|\br_j\|_2^2.
\]

The gradient-flow equations are
\[
    \dot{\bv}_j=\gamma_j\br_j,
\]
\[
    \dot\beta=\bq^\top\bs,
\]
and
\[
    \dot{\balpha}
    =
    \beta \bJ_{\bq}(\balpha)^\top \bs,
\]
where
\[
    \bJ_{\bq}(\balpha)
    =
    \frac{\partial \bq}{\partial \balpha}
    =
    \frac{\sqrt d}{\|\balpha\|_2}
    \left(
        \bI_d-\frac{\balpha\balpha^\top}{\|\balpha\|_2^2}
    \right).
\]
Since $\balpha(0)=\bone$, we have $\|\balpha(0)\|_2=\sqrt d$. Moreover,
$\dot{\balpha}$ is always orthogonal to $\balpha$, so
\[
    \|\balpha(t)\|_2=\sqrt d,
    \qquad \forall t\ge 0.
\]
Therefore,
\[
    \bJ_{\bq}(\balpha(t))
    =
    \bP_{\bq(t)}
    :=
    \bI_d-\frac{\bq(t)\bq(t)^\top}{d}.
\]
Consequently,
\[
    \dot{\bq}
    =
    \beta \bP_{\bq}\bs.
\]
Since $\bgamma=\beta\bq$, we obtain
\[
    \dot{\bgamma}
    =
    \dot\beta\,\bq+\beta\dot{\bq}
    =
    \bq\bq^\top\bs+\beta^2\bP_{\bq}\bs.
\]
Equivalently,
\[
    \dot{\bgamma}
    =
    \bM_\psi\bs,
    \qquad
    \bM_\psi
    :=
    \bq\bq^\top+\beta^2\bP_{\bq}.
\]

For comparison, in the original scale-vector model $f$, the corresponding
gradient-flow equation is simply
\[
    \dot{\bgamma}=\bs.
\]
Thus, $\psi$ replaces the Euclidean scale-vector dynamics by a preconditioned
dynamics with preconditioner
\[
    \bM_\psi
    =
    \bq\bq^\top+\beta^2\bP_{\bq}.
\]

Moreover, $\psi$ satisfies the conservation law
\[
    \frac{\rd}{\rd t}
    \left(
        \beta^2-\|\bV\|_F^2
    \right)
    =
    0.
\]
Indeed,
\[
    \frac{\rd}{\rd t}\|\bV\|_F^2
    =
    2\sum_{j=1}^d \bv_j^\top\dot{\bv}_j
    =
    2\sum_{j=1}^d \gamma_j \bv_j^\top\br_j
    =
    2\beta \bq^\top\bs,
\]
while
\[
    \frac{\rd}{\rd t}\beta^2
    =
    2\beta\dot\beta
    =
    2\beta\bq^\top\bs.
\]
Since $\beta(0)=1$ and $\bV(0)=\bzero$, we have
\[
    \beta(t)^2-\|\bV(t)\|_F^2=1.
\]
In particular,
\[
    \beta(t)^2\ge 1.
\]

Now the loss dissipation of $\psi$ is
\[
    \frac{\rd}{\rd t}\mathcal L_\psi
    =
    -\sum_{j=1}^d \gamma_j^2\|\br_j\|_2^2
    -
    (\bq^\top\bs)^2
    -
    \beta^2\|\bP_{\bq}\bs\|_2^2.
\]
By contrast, for $f$ at the same effective state, the loss dissipation is
\[
    \frac{\rd}{\rd t}\mathcal L_f
    =
    -\sum_{j=1}^d \gamma_j^2\|\br_j\|_2^2
    -
    \|\bs\|_2^2.
\]
Decompose
\[
    \bs=\bs_\parallel+\bs_\perp,
    \qquad
    \bs_\parallel
    =
    \frac{\bq^\top\bs}{d}\bq,
    \qquad
    \bs_\perp=\bP_{\bq}\bs.
\]
Since $\|\bq\|_2^2=d$, we have
\[
    (\bq^\top\bs)^2=d\|\bs_\parallel\|_2^2.
\]
Therefore,
\[
    (\bq^\top\bs)^2+\beta^2\|\bP_{\bq}\bs\|_2^2
    =
    d\|\bs_\parallel\|_2^2
    +
    \beta^2\|\bs_\perp\|_2^2.
\]
Since $d\ge 1$ and $\beta^2\ge 1$, this implies
\[
    (\bq^\top\bs)^2+\beta^2\|\bP_{\bq}\bs\|_2^2
    \ge
    \|\bs_\parallel\|_2^2+\|\bs_\perp\|_2^2
    =
    \|\bs\|_2^2.
\]
Hence, at the same effective state, the magnitude-direction parameterization
$\psi$ dissipates the loss at least as fast as the direct scale-vector
parameterization $f$.
\end{proof}

\begin{proof}[Proof of Claim (ii) in Theorem~\ref{thm: full: optimization advantage reparametrization}]
For both models, at initialization,
\[
    \bv_j(0)=\bu_j(0)=\bzero,
    \qquad
    \br_j(0)=\bw_j^\star.
\]
Thus
\[
    \bv_j(t)=t\bw_j^\star+O(t^2),
    \qquad
    \bu_j(t)=t\bw_j^\star+O(t^2),
\]
and hence
\[
    s_j(t)
    =
    \bv_j(t)^\top\br_j(t)
    =
    t\|\bw_j^\star\|_2^2+O(t^2)
    =
    t\rho_j+O(t^2).
\]

For $f$, the scale-vector dynamics are
\[
    \dot\gamma_j=s_j.
\]
Therefore,
\[
    \gamma_j^{(f)}(t)
    =
    1+\frac12\rho_j t^2+O(t^3).
\]

For $\psi$, at initialization we have
\[
    \bq(0)=\bone,
    \qquad
    \beta(0)=1,
    \qquad
    \bP_{\bq(0)}
    =
    \bI_d-\frac1d\bone\bone^\top.
\]
The induced scale-vector dynamics are
\[
    \dot{\bgamma}
    =
    \bq\bq^\top\bs+\beta^2\bP_{\bq}\bs.
\]
Thus, at leading order,
\[
    \dot\gamma_j^{(\psi)}(t)
    =
    t\rho_j
    +
    \left(1-\frac1d\right)tS
    +
    O(t^2).
\]
Hence
\[
    \gamma_j^{(\psi)}(t)
    =
    1+
    \frac12
    \left[
        \rho_j+
        \left(1-\frac1d\right)S
    \right]t^2
    +
    O(t^3).
\]

Let
\[
    D_f(t):=-\frac{\rd}{\rd t}\mathcal L_f(t),
    \qquad
    D_\psi(t):=-\frac{\rd}{\rd t}\mathcal L_\psi(t).
\]
Using the loss-dissipation identities,
\[
    D_f(t)
    =
    \sum_{j=1}^d
    \left(\gamma_j^{(f)}(t)\right)^2
    \|\br_j^{(f)}(t)\|_2^2
    +
    \|\bs^{(f)}(t)\|_2^2,
\]
while
\[
    D_\psi(t)
    =
    \sum_{j=1}^d
    \left(\gamma_j^{(\psi)}(t)\right)^2
    \|\br_j^{(\psi)}(t)\|_2^2
    +
    (\bq(t)^\top\bs^{(\psi)}(t))^2
    +
    \beta(t)^2\|\bP_{\bq(t)}\bs^{(\psi)}(t)\|_2^2.
\]
Substituting the expansions above gives
\[
    D_\psi(t)-D_f(t)
    =
    2
    \left(1-\frac1d\right)
    S^2 t^2
    +
    O(t^3).
\]
Therefore,
\[
    \mathcal L_\psi(t)-\mathcal L_f(t)
    =
    -\int_0^t
    \left(D_\psi(s)-D_f(s)\right)\,ds
\]
\[
    =
    -\frac{2}{3}
    \left(1-\frac1d\right)
    S^2 t^3
    +
    O(t^4).
\]
This proves the claim.
\end{proof}

\begin{proof}[Proof of Claim (iii) in Theorem~\ref{thm: full: optimization advantage reparametrization}]
Under the balanced condition and the symmetric initialization
\[
    \bq(0)=\bone,
\]
the symmetry is preserved along the gradient flow of $\psi$. Indeed, the vector
$\bs$ remains proportional to $\bone$, and hence
\[
    \bP_{\bq}\bs=\bzero.
\]
Therefore,
\[
    \bq(t)=\bone,
    \qquad
    \bgamma(t)=\beta(t)\bone.
\]

For both models, each column remains aligned with its teacher column. Thus we
may write
\[
    \bu_j(t)=p_f(t)\bw_j^\star
\]
for $f$, and
\[
    \bv_j(t)=p_\psi(t)\bw_j^\star
\]
for $\psi$.

For $f$, the scalar dynamics are
\[
    \dot p_f=\gamma_f(1-\gamma_f p_f),
    \qquad
    \dot\gamma_f=\rho p_f(1-\gamma_f p_f),
\]
with
\[
    p_f(0)=0,
    \qquad
    \gamma_f(0)=1.
\]
For $\psi$, the scalar dynamics are
\[
    \dot p_\psi=\beta(1-\beta p_\psi),
    \qquad
    \dot\beta=d\rho p_\psi(1-\beta p_\psi),
\]
with
\[
    p_\psi(0)=0,
    \qquad
    \beta(0)=1.
\]

Both dynamics have the same form
\[
    \dot p=\eta(1-\eta p),
    \qquad
    \dot\eta=k\rho p(1-\eta p),
\]
where $k=1$ for $f$ and $k=d$ for $\psi$.

The corresponding invariant is
\[
    \eta^2-k\rho p^2=1.
\]
Let
\[
    \theta=\eta p.
\]
Then
\[
    \dot\theta
    =
    (\eta^2+k\rho p^2)(1-\theta).
\]
Using the invariant, set
\[
    z=k\rho p^2.
\]
Then
\[
    \eta^2=1+z,
    \qquad
    \theta^2
    =
    \eta^2p^2
    =
    \frac{z(1+z)}{k\rho}.
\]
Therefore,
\[
    z
    =
    \frac{\sqrt{1+4k\rho\theta^2}-1}{2}.
\]
Hence the effective coefficient $\theta$ satisfies the closed scalar ODE
\[
    \dot\theta
    =
    \left[
        1
        +
        2\cdot
        \frac{\sqrt{1+4k\rho\theta^2}-1}{2}
    \right]
    (1-\theta).
\]
Equivalently,
\[
    \dot\theta
    =
    \sqrt{1+4k\rho\theta^2}\,(1-\theta).
\]
For $\psi$, $k=d$, while for $f$, $k=1$. Since $d>1$, we have
\[
    \sqrt{1+4d\rho\theta^2}
    >
    \sqrt{1+4\rho\theta^2}
\]
for every $\theta\in(0,1)$. By the scalar comparison principle,
\[
    \theta_\psi(t)>\theta_f(t),
    \qquad
    \forall t>0.
\]
The residual coefficient is $1-\theta$, so
\[
    1-\theta_\psi(t)<1-\theta_f(t).
\]
Therefore,
\[
    \mathcal L_\psi(t)
    =
    \frac{d\rho}{2}(1-\theta_\psi(t))^2
    <
    \frac{d\rho}{2}(1-\theta_f(t))^2
    =
    \mathcal L_f(t),
    \qquad
    \forall t>0.
\]
This proves the claim.
\end{proof}

\end{document}